\documentclass[11pt]{article}
\usepackage{fullpage}
\usepackage{appendix}
\usepackage[sort, authoryear, round]{natbib}
\usepackage{fancyhdr}
\usepackage[
    format=plain,
    labelfont={it,it},
    textfont=it
]{caption}
\usepackage[stable, bottom, flushmargin]{footmisc}
\usepackage{graphicx}
\usepackage{xcolor}
\usepackage{breakcites}

\usepackage{amsmath,amssymb,amsthm}
\usepackage{aliascnt}
\usepackage{nicefrac}

\definecolor{lightblue}{RGB}{0,102,204}

\usepackage{pgfplots}
\pgfplotsset{compat=newest}

\usepackage{wrapfig}
\usepackage{enumitem}
\usepackage{xparse,expl3}
\usepackage{indentfirst}
\usepackage{float}

\usepackage{tikz}
\usetikzlibrary{
    arrows.meta,
    calc,
    fit,
    positioning
}

\usepackage{microtype}
\usepackage{bbm,bm}
\usepackage{xfrac}

\usepackage{algorithm}
\usepackage{algpseudocode}

\usepackage[
    pagebackref,
    colorlinks=true,
    linkcolor=lightblue,
    citecolor=lightblue,
    urlcolor=lightblue
]{hyperref}

\usepackage[capitalize]{cleveref}

\newtheorem{theorem}{Theorem}[section]

\newaliascnt{proposition}{theorem}
\newtheorem{proposition}[proposition]{Proposition}
\aliascntresetthe{proposition}

\newaliascnt{lemma}{theorem}
\newtheorem{lemma}[lemma]{Lemma}
\aliascntresetthe{lemma}

\newaliascnt{corollary}{theorem}
\newtheorem{corollary}[corollary]{Corollary}
\aliascntresetthe{corollary}

\newaliascnt{definition}{theorem}
\newtheorem{definition}[definition]{Definition}
\aliascntresetthe{definition}

\newaliascnt{example}{theorem}
\newtheorem{example}[example]{Example}
\aliascntresetthe{example}

\newaliascnt{remark}{theorem}
\newtheorem{remark}[remark]{Remark}
\aliascntresetthe{remark}

\newaliascnt{conjecture}{theorem}

\aliascntresetthe{conjecture}

\newaliascnt{observation}{theorem}

\aliascntresetthe{observation}

\newtheorem{problem}{Open Problem}

\crefname{theorem}{Theorem}{Theorems}
\Crefname{theorem}{Theorem}{Theorems}
\crefname{proposition}{Proposition}{Propositions}
\Crefname{proposition}{Proposition}{Propositions}
\crefname{lemma}{Lemma}{Lemmas}
\Crefname{lemma}{Lemma}{Lemmas}
\crefname{corollary}{Corollary}{Corollaries}
\Crefname{corollary}{Corollary}{Corollaries}
\crefname{definition}{Definition}{Definitions}
\Crefname{definition}{Definition}{Definitions}
\crefname{example}{Example}{Examples}
\Crefname{example}{Example}{Examples}
\crefname{remark}{Remark}{Remarks}
\Crefname{remark}{Remark}{Remarks}
\crefname{conjecture}{Conjecture}{Conjectures}
\Crefname{conjecture}{Conjecture}{Conjectures}
\crefname{observation}{Observation}{Observations}
\Crefname{observation}{Observation}{Observations}
\crefname{problem}{Open Problem}{Open Problems}
\Crefname{problem}{Open Problem}{Open Problems}
\crefname{equation}{Equation}{Equations}
\Crefname{equation}{Equation}{Equations}
\crefname{section}{Section}{Sections}
\Crefname{section}{Section}{Sections}
\crefname{subsection}{Section}{Sections}
\Crefname{subsection}{Section}{Sections}
\crefname{appendix}{Appendix}{Appendices}
\Crefname{appendix}{Appendix}{Appendices}

\newcommand{\nc}{\newcommand}
\newcommand{\rnc}{\renewcommand}

\rnc{\epsilon}{\varepsilon}

\nc{\R}{\mathbb R}
\nc{\Q}{\mathbb Q}
\nc{\Z}{\mathbb Z}
\nc{\N}{\mathbb N}

\let\E\relax
\DeclareMathOperator*{\E}{\mathbb{E}}
\let\P\relax
\DeclareMathOperator*{\P}{\mathbb{P}}

\DeclareMathOperator*{\argmin}{arg\,min}

\DeclareMathOperator*{\im}{im}
\DeclareMathOperator*{\VC}{VC}
\DeclareMathOperator*{\DS}{DS}

\DeclareMathOperator*{\Gdim}{Graph}

\nc{\CA}{\mathcal A}
\nc{\CC}{\mathcal C}
\nc{\CD}{\mathcal D}
\nc{\CF}{\mathcal F}
\nc{\CG}{\mathcal G}
\nc{\CH}{\mathcal H}
\nc{\CI}{\mathcal I}
\nc{\CP}{\mathcal P}
\nc{\CR}{\mathcal R}
\nc{\CS}{\mathcal S}
\nc{\CX}{\mathcal X}
\nc{\CY}{\mathcal Y}
\nc{\CV}{\mathcal V}

\nc{\X}{\mathcal X}
\nc{\Y}{\mathcal Y}
\nc{\Hc}{\mathcal H}
\rnc{\H}{\mathcal H}
\nc{\F}{\mathcal F}
\nc{\D}{\mathcal D}
\nc{\C}{\mathcal C}
\nc{\Rcal}{\mathcal R}
\nc{\Scal}{\mathcal S}
\nc{\A}{\mathcal A}
\nc{\V}{\mathcal V}

\nc{\Cantor}{\CH_\infty}
\nc{\Hpoison}{\mathcal H_{\mathrm{P}}}
\nc{\Hprop}{\mathcal H_{\mathrm{prop}}}
\nc{\Hinc}{\mathcal H_{\mathrm{I}}}
\nc{\Hor}{\mathcal H_{\mathrm{or}}}
\nc{\Hrf}{\mathcal H_{\mathrm{RF}}}

\nc{\defn}[1]{\textbf{#1}}
\nc{\edefn}[1]{\emph{\textbf{#1}}}
\rnc{\t}[1]{\text{#1}}
\nc{\ERM}{\mathrm{ERM}}
\nc{\Trans}{\mathrm{Trans}}
\nc{\h}{\widehat}
\nc{\Unif}{\mathrm{Unif}}
\nc{\xtest}{x_{\mathrm{test}}}
\nc{\xfool}{x_{\mathrm{fool}}}
\rnc{\arg}{\, \cdot \,}

\newcommand{\emp}{\widehat{L}}
\newcommand{\pop}{L}
\newcommand{\vs}{\mathrm{VS}}
\newcommand{\starlabel}{\star}
\newcommand{\dollarlabel}{\$}
\newcommand{\one}{\mathbf{1}}

\newcommand{\restr}[2]{{#1}\!\upharpoonright_{#2}}
\newcommand{\set}[1]{\left\{#1\right\}}

\newcommand{\brac}[1]{\left[#1\right]}
\newcommand{\abs}[1]{\left|#1\right|}

\makeatletter
\renewenvironment{proof}[1][\proofname]{
    \par
    \pushQED{\qed}
    \normalfont
    \topsep\dimexpr 6\p@-\parskip\relax
    \trivlist
    \item[\hskip\labelsep\itshape #1\@addpunct{.}]
    \ignorespaces
}{
    \popQED
    \endtrivlist
    \@endpefalse
}
\makeatother

\newcommand{\email}[1]{\textsf{#1}}

\title{
  \makebox[\textwidth][c]{
    \parbox{1.15\textwidth}{
      \centering
      \fontsize{14pt}{19pt}\selectfont
      \bfseries
    Algorithmic Principles For Multiclass Learning Are Hard To Come By: \\ Limits of Regularization and Proper Learning
    }
  }
}

\author{
Julian Asilis \\ USC \\ \email{asilis@usc.edu} \and 
Shaddin Dughmi \\ USC  \\ \email{shaddin@usc.edu} \and
Vatsal Sharan \\ USC \\ \email{vsharan@usc.edu} \and 
Alec Sun \\ University of Chicago \\ \email{alecsun@uchicago.edu} \and 
Shang-Hua Teng \\ USC \\ \email{shanghua@usc.edu} \and
Chang Wang \\ Northwestern University \\ \email{wc@u.northwestern.edu} 
}

\date{}

\begin{document}

\maketitle

\begin{abstract}
Two of the most fundamental questions in statistical learning theory are the following: which prediction problems are learnable, and how should they be learned? 
For the former, elegant answers often take the form of combinatorial dimensions, notably the VC dimension for binary classification and the DS dimension for multiclass. The latter question has proved considerably more elusive: all known general-purpose multiclass learners rely on intricate orientations of exponentially large one-inclusion structures, and familiar algorithmic principles such as proper learning and regularization remain poorly understood. Motivated by prior work, we ask whether learning reduces to proper learning---possibly over a larger hypothesis class---and whether proper or improper multiclass learning can ultimately be captured by suitable regularizers.

Our primary results answer both questions negatively, resolving three open problems from prior work. First, we exhibit a learnable multiclass problem that cannot be embedded in any properly learnable class, meaning learning cannot be reduced to proper learning by enlarging the hypothesis class. Second, we demonstrate that proper learning can require training error and characterize this phenomenon precisely: every properly learnable class admits a proper learner making $o(m)$ errors on samples of size $m$, but every prescribed sublinear scale $a_m=o(m)$ is necessary for some properly learnable problem. Third, regularization is not a general learner: we exhibit a properly learnable class that cannot be learned by any Structural Risk Minimization (SRM) learner, and a learnable class that cannot be learned by any local regularizer. We complement these impossibility results with a positive theory that gives two sufficient conditions for SRM learnability and characterizes SRM representability through integrability of revealed preferences. Together, our results delineate the limits of proper learning and regularization, along with the structural conditions under which regularization succeeds.
\end{abstract}

\clearpage 
\begingroup
\hypersetup{linkcolor=black}
\setlength{\parskip}{0pt}
\setcounter{tocdepth}{2}
\linespread{1.0}\selectfont
\tableofcontents
\endgroup
\clearpage

\section{Introduction}\label{Section:Introduction}

Designing optimal, tractable learners for learnable problems stands as one of statistical learning theory's most central goals. Such a learnable problem is described by a domain $\CX$, label set $\CY$, and hypothesis class of functions $\CH \subseteq \CY^\CX$. A learner receives a training sample $S = \big((x_i, y_i)\big)_{i \leq n}$, with each $x_i$ drawn i.i.d.\ from a marginal distribution $\CD$ over $\CX$ and labeled by a hypothesis $h^* \in \CH$, i.e., $y_i = h^*(x_i)$. The purpose of a learner $A$ is to ingest only the sample $S$ and emit a predictor $A(S): \CX \to \CY$ with high probability of correctly classifying a freshly drawn test point $\xtest \sim \CD$. That is, the learner seeks to minimize 
\[ \pop_\CD \big (A(S) \big) = \P_{\xtest \sim \CD} \Big( A(S)(\xtest) \neq h^*(\xtest) \Big ). \]

\vspace{-0.2 cm}
Binary classification, where one employs the label set $\CY = \{0, 1\}$, enjoys a simple characterization of its (optimal) learners. Empirical Risk Minimization (ERM) learns whenever learning is possible, with nearly-optimal sample complexity, and a simple majority of just 3 ERM learners suffices to attain the optimal rate \citep{aden2024majority,rawal2026majority}. 
Perhaps surprisingly, the landscape for multiclass classification, with $\CY$ arbitrary, is strikingly more complex. 

All known general-purpose multiclass learners appeal to abstract orientations of so-called \emph{one-inclusion graphs} (OIGs), which can be exponentially large in the size of the training set $S$, or even infinite should $\CY$ be infinite \citep{brukhim2022characterization,aden2023optimal,pabbaraju2026optimal}. Such reliance on intractable OIGs can be partly explained: Some natural algorithmic templates have been ruled out for multiclass learning. In particular, \citet{DS14} demonstrated the existence of learnable problems that cannot be learned by any \emph{proper} learner, i.e., a learner that always emits a function in the underlying class $\CH$. Subsequently, \citet{asilisunderstanding} exhibited the same failure for any aggregation of a bounded number of proper learners, such as a majority of 3 ERMs. 
How, then, can one hope to design a simple algorithmic framework for multiclass learning?

Two paths are suggested by prior work. The first appeals to \emph{regularization}, one of the most fundamental algorithmic templates for learning with considerable successes on both theoretical and empirical fronts.\footnote{Empirically, regularized empirical-risk objectives---most prominently
$\ell_1$ penalties such as the lasso and $\ell_2$ penalties such as ridge
regression or weight decay---are routinely used to control effective model
complexity and encourage generalization
\citep{hoerl1970ridge,tibshirani1996lasso,krogh1991weightdecay,hastie2009elements}.
Theoretically, SRM has been shown to characterize non-uniform learnability
\citep{shalev2014understanding}.}
Recall that a regularizer $\psi \colon \CH \to \R_{\geq 0}$ assigns a score to each hypothesis $h \in \CH$, often thought of as measuring the ``complexity'' of $h$, and that a \emph{Structural Risk Minimization} (SRM) learner then trades off empirical risk with hypothesis complexity in an effort to avoid overfitting.
In multiclass learning, \citet{asilis2024regularization} exhibited optimal learners based upon \emph{unsupervised local regularization}. In this framework, a learner receives a labeled sample $S = \big( (x_i, y_i) \big)_{i \leq n}$ and uses only its unlabeled datapoints $(x_i)_{i \leq n}$ to learn a local regularizer $\psi: \CH \times \CX \to \R_{\geq 0}$. Intuitively, the local regularizer $\psi(h, x)$ measures the ``complexity'' of a function $h$ at the test point $x$, reflecting the fact that hypotheses may act simply in certain regions of the domain and more intricately in others. At test time, for an unlabeled datapoint $\xtest \in \CX$, the learner makes the prediction 
\[ A(S)(\xtest) \in \Big \{ h(\xtest) : h \in \argmin_\CH \emp_S(h) + \psi(h, \xtest) \Big \}. \]
Note that the dependence of the regularizer upon the test point $\xtest$ is essential in order to express improper learners. (Intuitively, it permits a learner to ``stitch together'' various hypotheses in $\CH$ when producing a predictor.)

It is, however, not clear whether the unsupervised pre-training stage of the local regularizer can be omitted. Perhaps all that is required to learn a classification problem is an appropriate choice of local regularizer $\psi: \CH \times \CX \to \R_{\geq 0}$ encoding the ``complexity'' of hypotheses at particular regions in the domain.
Precisely this question was recently articulated by \citet{asilis-open-problem}.

\begin{problem}[{\citep{asilis-open-problem}}]\label{problem1:local-regularizer}
Can all learnable multiclass problems $\CH$ be learned by a local regularizer? If so, with (nearly) optimal sample complexity?
\end{problem}

A second path towards simplicity in multiclass learning is to invoke an additional assumption on the underlying hypothesis class $\CH$. Perhaps the most natural is that of proper learnability, i.e., to assume that $\CH$ can be learned while only emitting hypotheses in $\CH$ itself.
In this setting, the most ambitious goal would be to demonstrate that classic regularization is all one needs. This question, too, was previously raised and remains open.

\begin{problem}[{\citep{asilis25b-proper}}]\label{problem2:proper-SRM-equivalence}
Let $\CH$ be a properly learnable multiclass problem. Must $\CH$ be learnable by an SRM learner?
\end{problem}

Allow us to briefly remark upon a subtlety related to \cref{problem2:proper-SRM-equivalence}. In the design of improper learners for learnable multiclass problems, 
one makes crucial use of the fact that learnability amounts to finiteness of the DS dimension \citep{brukhim2022characterization,pabbaraju2026optimal}. That is, finiteness of $\CH$'s DS dimension provides an important handhold in analyzing successful learners, by way of $\CH$'s one-inclusion graphs. Proper learnability, however, is a fundamentally stranger beast. \citet{asilis25b-proper} demonstrated that it cannot be characterized by any combinatorial dimension, and that it can even be logically undecidable, i.e., independent of the ZFC axioms. This complicates the task of resolving \cref{problem2:proper-SRM-equivalence} in either direction, as merely establishing the proper learnability of a hypothesis class can be surprisingly involved (or even undecidable!). 

Finally, there is a curious commonality shared by existing hypothesis classes designed to be improperly learnable yet not properly learnable: they can easily be made properly learnable with the addition of a small number of functions to the class. This includes the first Cantor class of \citet{DS14}, along with the EMX learning problem of \citet{ben2019learnability} when viewed as a classification problem as in \citet{asilis25b-proper}. 
This raises an alluring possibility: perhaps all learnable classes can be embedded within properly learnable classes. 

\begin{problem}[{\citep{asilis25b-proper}}]\label{problem3:envelope-proper}
Is every learnable hypothesis class $\CH$ contained within a properly learnable class $\CH_{\mathrm{prop}} \supseteq \CH$?
\end{problem}

Note that a positive resolution to \cref{problem3:envelope-proper} would establish, roughly speaking, that general multiclass learning can be reduced to proper learning. If it were accompanied by a positive resolution to \cref{problem2:proper-SRM-equivalence}, demonstrating that proper learning reduces to SRM, it would then be the case that general multiclass learning reduces to SRM! 

Unfortunately, our results demonstrate that the picture is not so rosy: multiclass learning resists simple algorithmic principles, even under the promise of proper learnability. We elaborate upon our impossibility results, and upon our characterizations of SRM learnability, in \Cref{Subsection:results}.

\subsection{Results}\label{Subsection:results}

Our primary results resolve \Cref{problem1:local-regularizer,problem2:proper-SRM-equivalence,problem3:envelope-proper} in the negative.  
We divide our contributions into three groups.

\begin{itemize}[leftmargin=1.35em,itemsep=0.9em,topsep=0.5em]

    \item \textbf{Proper learning and noninterpolation.}
    We first show that general multiclass learning does not reduce to proper learning, even after enlarging the learner's output space. 
    That is, there exists a learnable class that cannot be embedded into any properly learnable envelope, resolving \cref{problem3:envelope-proper} in the negative (\cref{thm:no-proper-completion}).  We next construct a topologically well-behaved hypothesis class that is properly learnable but cannot be learned by any interpolating proper learner (\cref{thm:no-erm}).  Finally, we give a tight quantitative account of this phenomenon: every properly learnable class admits a proper learner making $o(m)$ errors on every realizable sample of size $m$, and every prescribed sublinear scale $a_m=o(m)$ is necessary for some properly learnable problem (\cref{thm:empirical-upper,thm:empirical-lower}).  Thus proper learners never need to sacrifice a constant fraction of their training sample, but within the sublinear regime essentially any number of errors can be forced.

    \item \textbf{Limits of global and local regularization.}
    We first demonstrate that regularization-based learners can fail even on problems that are properly learnable. Namely, we exhibit a properly learnable class for which every weighted SRM objective of the form $h \longmapsto \emp_S(h)+\lambda(S) \cdot \psi(h)$ fails to learn, resolving \cref{problem2:proper-SRM-equivalence} in the negative (\cref{thm:weighted-srm-fails}).  We next consider whether \emph{local regularizers}, which may combine different hypotheses at different test points and thereby express improper learners, succeed whenever improper learning is possible.  Here we again give a negative answer: we construct a learnable class that lies beyond the reach of local regularization, resolving the PAC version of \cref{problem1:local-regularizer} (\cref{thm:local-regularization-fails}). We note that the concurrent, independent work of \citet{hou2026local} also exhibited such a counterexample; we compare and contrast our approach in \cref{Subsection:related-work}. 

    \item \textbf{Sufficient conditions for SRM.}
    We complement these impossibility results with two positive routes to SRM learnability.  First, we show that hard SRM succeeds whenever non-trivial samples admit a consistent hypothesis from a fixed finite family of hypotheses; more generally, the same principle applies when the relevant hypotheses lie inside a statistically simple sublevel of the regularizer (\cref{thm:fallback-core,thm:finite-ambiguous-range,thm:ambiguity-localized}).  Second, we introduce an order-sensitive disagreement dimension that similarly measures the complexity of sublevel sets of the regularizer. We prove that finiteness of this dimension suffices for hard-SRM learnability, though it is not necessary (\cref{thm:odd-suffices,ex:revealing-fallback}).  Finally, we characterize when the behavior of a particular learner can be witnessed by a regularizer. Here we find that a consistent learner is representable by SRM precisely when the pairwise preferences revealed by its choices are acyclic, while weighted SRM representability over finite systems is equivalent to positivity of every revealed-preference cycle (\cref{thm:hard-integrability,thm:weighted-integrability}).

\end{itemize}

\subsection{Techniques}\label{Subsection:techniques}

Our proofs appeal to diverse combinatorial tools: diagonal arguments on rooted trees, projective-plane expansion, and directed-triangle counting.  Let us describe the key ideas.

\begin{figure}[t]
    \centering
    \resizebox{0.9\textwidth}{!}{%
    \begin{tikzpicture}[
        x=1cm,
        y=1cm,
        >=Latex,
        every node/.style={font=\small},
        treevertex/.style={
            circle,
            draw=black!60,
            fill=white,
            minimum size=5.5mm,
            inner sep=0pt
        },
        pathvertex/.style={
            circle,
            draw=blue!65!black,
            fill=blue!10,
            minimum size=6.2mm,
            inner sep=0pt,
            line width=0.9pt
        },
        ghostvertex/.style={
            circle,
            draw=black!35,
            fill=black!3,
            minimum size=4.4mm,
            inner sep=0pt
        },
        path/.style={
            -{Latex[length=2mm]},
            draw=blue!65!black,
            line width=1.25pt
        },
        branch/.style={
            -{Latex[length=1.6mm]},
            draw=black!35,
            line width=0.65pt
        },
        block/.style={
            rounded corners=3pt,
            draw=black!55,
            fill=black!1,
            minimum height=6mm,
            inner sep=3pt
        },
        chosen/.style={
            circle,
            draw=blue!65!black,
            line width=1pt,
            minimum size=4.2mm,
            inner sep=0pt
        },
        dot/.style={
            circle,
            fill=black!80,
            minimum size=1.8mm,
            inner sep=0pt
        },
        guide/.style={
            draw=black!35,
            densely dotted,
            line width=0.7pt
        }
    ]

    \node[pathvertex] (v0) at (0,3.75) {$v_0$};
    \node[pathvertex] (v1) at (1.55,2.50) {$v_1$};
    \node[pathvertex] (v2) at (3.10,1.25) {$v_2$};
    \node[pathvertex] (v3) at (4.65,0.00) {$v_3$};

    \draw[path]
        (v0)
        --
        node[above right=-1pt] {$b_1$}
        (v1);

    \draw[path]
        (v1)
        --
        node[above right=-1pt] {$b_2$}
        (v2);

    \draw[path]
        (v2)
        --
        node[above right=-1pt] {$b_3$}
        (v3);

    \node[ghostvertex] (a01) at (-1.65,2.42) {};
    \node[ghostvertex] (a02) at (-0.25,2.12) {};
    \node[ghostvertex] (a03) at (0.65,2.02) {};

    \draw[branch] (v0) -- (a01);
    \draw[branch] (v0) -- (a02);
    \draw[branch] (v0) -- (a03);

    \node[ghostvertex] (a11) at (0.25,1.08) {};
    \node[ghostvertex] (a12) at (1.30,0.78) {};
    \node[ghostvertex] (a13) at (2.15,0.66) {};

    \draw[branch] (v1) -- (a11);
    \draw[branch] (v1) -- (a12);
    \draw[branch] (v1) -- (a13);

    \node[ghostvertex] (a21) at (2.13,-0.18) {};
    \node[ghostvertex] (a22) at (3.02,-0.48) {};
    \node[ghostvertex] (a23) at (3.80,-0.58) {};

    \draw[branch] (v2) -- (a21);
    \draw[branch] (v2) -- (a22);
    \draw[branch] (v2) -- (a23);

    \node at (5.42,-0.62) {$\ddots$};
    \draw[path,dashed] (v3) -- (5.25,-0.48);

    \node[
        block,
        minimum width=4.6cm
    ] (X0) at (8.15,3.75) {};

    \node[
        anchor=west,
        font=\footnotesize
    ] at (6.00,4.22) {$X_{v_0}$};

    \foreach \dx in {-1.80,-1.30,-0.80,-0.30,0.20,0.70,1.20,1.70} {
        \node[dot] at ([xshift=\dx cm]X0.center) {};
    }

    \node[
        block,
        minimum width=4.6cm
    ] (X1) at (8.55,2.50) {};

    \node[
        anchor=west,
        font=\footnotesize
    ] at (6.15,2.97) {$X_{v_1}$};

    \foreach \dx in {-1.80,-1.30,-0.80,-0.30,0.20,0.70,1.20,1.70} {
        \node[dot] at ([xshift=\dx cm]X1.center) {};
    }

    \node[
        block,
        minimum width=4.6cm
    ] (X2) at (8.95,1.25) {};

    \node[
        anchor=west,
        font=\footnotesize
    ] at (6.35,1.72) {$X_{v_2}$};

    \foreach \dx in {-1.80,-1.30,-0.80,-0.30,0.20,0.70,1.20,1.70} {
        \node[dot] at ([xshift=\dx cm]X2.center) {};
    }
    
    \node[chosen] at ([xshift=-0.80cm]X0.center) {};
    \node[font=\footnotesize,anchor=south]
        at ([xshift=-0.80cm,yshift=0.18cm]X0.center) {$u_1$};

    \node[chosen] at ([xshift=0.70cm]X1.center) {};
    \node[font=\footnotesize,anchor=south]
        at ([xshift=0.70cm,yshift=0.18cm]X1.center) {$u_2$};
    
    \node[chosen] at ([xshift=-0.80cm]X2.center) {};
    \node[font=\footnotesize,anchor=south]
        at ([xshift=-0.80cm,yshift=0.18cm]X2.center) {$u_3$};

    \draw[guide] (v0.east) -- (X0.west);
    \draw[guide] (v1.east) -- (X1.west);
    \draw[guide] (v2.east) -- (X2.west);

    \node[font=\normalsize] at (-0.9,4.55) {$T_d$};
    \node[font=\normalsize] at (8.6,4.75) {$|X_{v_i}|=d$};

    \end{tikzpicture}
    }
    \caption{
        The tree underlying the no-envelope construction.
        Each node $v$ is associated with a block $X_v$, and its
        children are indexed by all binary labelings of that block.
        A target follows the highlighted finite path, selecting one public
        coordinate $u_i$ in each visited non-terminal block and using a private label
        elsewhere. In the lower bound, each continuation $b_i$ is a $d$-bit pattern missing from the learner's image at the current block.
    }
    \label{fig:no-envelope-tree}
\end{figure}

\vspace{-0.4 cm}
\paragraph{No properly learnable envelope.}
We first ask whether an arbitrary learnable multiclass problem $\H$ can be embedded into a properly learnable envelope $\H_{\mathrm{prop}}\supseteq\H$.  Were this true, then general learning would reduce to proper learning in a precise sense: one could properly learn $\H_{\mathrm{prop}}$, thereby obtaining a learner $A$ for $\H$ whose image is contained in a properly learnable class.  We rule out this possibility by constructing a learnable class $\H$ for which every successful learner $A$ must satisfy $\DS\bigl(\operatorname{im}(A)\bigr)=\infty$.
In particular, the image of $A$ cannot be contained in \emph{any} learnable class, even improperly so.

The construction (illustrated in \Cref{fig:no-envelope-tree}) employs, for each $d \geq 1$, an infinite complete $2^d$-ary tree $T_d$.\footnote{$T_d$ shares some features of a Vapnik-Chervonenkis-Littlestone (VCL) tree \citep{bousquet2021theory}. Each node carries a block whose binary labelings index its children. However, its shape is slightly different, as within $T_d$ it has a fixed block size and a fixed branching factor at every depth.}
To every node
\[
    v=(b_1,\ldots,b_k),
    \qquad
    b_i\in\{0,1\}^d,
\]
at depth $k$, we associate a fresh block of domain points $X_v=\{x_{v,1},\ldots,x_{v,d}\}$.
The node $v$ has one child $(v,b)$ for every binary labeling $b\in\{0,1\}^d$ of $X_v$.  A target hypothesis chooses a finite path
\[
    \varnothing=v_0\prec v_1\prec\cdots\prec v_\ell
\]
and one coordinate $u_i\in[d]$ from each non-terminal block along that path.  At the selected point $x_{v_{i-1},u_i}$, it predicts the public bit $b_i(u_i)\in\{0,1\}$; at every other point, it predicts a private label identifying the entire target.

First note that the class can be improperly learned: if a private label is present in the sample, then the target function has been revealed. Otherwise, the sample is supported on public labeled points, which reveal a branch prefix---as each node $v_k$ records the earlier vectors $b_1,\ldots,b_k$. An improper learner can simply stitch together the public predictions from the longest such prefix, emitting a default label elsewhere. Informally, this emitted classifier can err only on an unobserved private region or the as-yet-unseen tail of the path, both of which have small mass whenever the sample has missed them.

Now let $A$ be an arbitrary successful learner for $\Hc$. We show that $\DS\bigl(\operatorname{im}(A)\bigr)=\infty$. To this end, it suffices to show that for each $d \in \N$,  $\im (A)$ contains a $d$-dimensional binary hypercube of functions, e.g., the behaviors $\{0, 1\}^d$ on a set of $d$ unlabeled datapoints.\footnote{In fact, it would suffice merely to exhibit \emph{pseudocubes} of arbitrarily large dimension, rather than the (binary) hypercubes we exhibit.}
Fix one such $d \in \N$, and consider the $d$ points in the block associated with each node of $T_d$. There are two possibilities. If at some node, every binary labeling of the block is realized by some hypothesis in $\operatorname{im}(A)$, then the image of $A$ contains a full $d$-dimensional binary cube, meaning $\DS\bigl(\operatorname{im}(A)\bigr) \geq d$. Otherwise, every node has some binary labeling of its block that is not realized by any hypothesis in $\operatorname{im}(A)$. Since the children of a node are indexed by all binary labelings of its block, we may follow the corresponding child.
Repeating this procedure produces an infinite path $P$ along which the image of $A$ misses a (prescribed) labeling at every depth.

The second possibility is incompatible with the success of $A$. To see why, fix a sample size $m = |S|$ and take a prefix of $P$ of length $2m$, say $P_{2m}$. At each depth of $P_{2m}$, choose one of the $d$ points uniformly at random as public points. Then construct the hard distribution by distributing mass uniformly across the chosen public points. Clearly, a sample of size $m$ leaves at least half of the depths in $P_{2m}$ unseen. Then at any unseen depth, the learner's output must disagree with the prescribed labeling on at least one point in the block. Since the selected public point at that depth remains uniformly random among those $d$ points, the learner errs with probability at least $\frac 1d$. As $m$ is arbitrary, this implies that $A$ does not attain vanishing expected error, contradicting the PAC guarantee.

Thus only the first possibility can occur. That is, for every $d$, some block must contain a full $d$-dimensional binary cube in the image of $A$, meaning $\DS\bigl(\operatorname{im}(A)\bigr)=\infty$ as desired.

\vspace{-0.4 cm}
\paragraph{Proper learning beyond interpolation.}
To separate proper learning from interpolation, our starting point is the first Cantor class of \citet{DS14}.  Each finite block $\CX_n$ supports the class
\[
    \CH_n
    =
    \left\{
        h_A:
        A\subseteq\CX_n,\ 
        |A|=\frac{|\CX_n|}{2}
    \right\},
    \qquad
    h_A(x)
    =
    \begin{cases}
        \lambda_A, & x\in A,\\
        \starlabel, & x\in\CX_n\setminus A,
    \end{cases}
\]
where $\lambda_A$ is a private label unique to $A$.  Taking $\CX=\bigsqcup_{n\in\N}\CX_n$
and extending every hypothesis by the fresh label $\dollarlabel$ outside its own block gives the full class. Thus each hypothesis outputs a unique identifier on one half of its block and the uninformative label $\starlabel$ on the other.  The class is easily learned improperly: either the sample contains a private label that definitively reveals the target, or the learner may safely output the simple predictor that is constantly $\starlabel$ on the active block and $\dollarlabel$ elsewhere.  A proper learner, however, can be forced to guess an arbitrary half-set $A\subseteq\CX_n$ using only draws from its complement.  Choosing $|\CX_n|$ large relative to $|S|$ therefore destroys proper learnability.

We augment this construction by adding a small number of ``poisoned'' hypotheses to $\CH_n$ that each output the $\star$ label on all of $\CX_n$, save for a distinguished marker point. 
In particular, we first carve a small marker region $P_n$ within $\CX_n$, 
\[
    P_n=\{z_1,\ldots,z_{r_n}\}\subseteq\CX_n,
    \qquad
    r_n\longrightarrow\infty,
    \qquad
    |P_n|\ll|\CX_n|.
\]
We then require each private half-set $A$ to avoid $P_n$, so that every Cantor hypothesis $h_A$ assigns the common label $\starlabel$ to the marker region.  For every marker $z_i\in P_n$, we then introduce an additional hypothesis $h_{z_i}$ satisfying
\[
    h_{z_i}(z_i)=\lambda_{z_i},
    \qquad
    h_{z_i}(x)=\starlabel
    \quad
    \text{for every }x\in\CX_n\setminus\{z_i\}.
\]
Each $h_{z_i}$ is thus an almost-all-$\starlabel$ hypothesis, deliberately poisoned at one marker.

\begin{figure}[!t]
    \centering
    \begin{tikzpicture}[
        x=1cm,
        y=1cm,
        line cap=round,
        line join=round,
        sample/.style={
            circle,
            fill=black!90,
            inner sep=1.35pt
        },
        regionlabel/.style={font=\small},
        masslabel/.style={font=\small}
    ]

    \def\OuterPath{
        (0.55,2.18)
        .. controls (0.28,3.20) and (1.08,4.18) .. (2.62,4.42)
        .. controls (3.78,4.60) and (4.70,4.23) .. (5.72,4.24)
        .. controls (7.08,4.25) and (8.02,3.62) .. (8.40,2.65)
        .. controls (8.78,1.67) and (8.32,0.52) .. (7.28,-0.06)
        .. controls (6.08,-0.74) and (4.16,-0.64) .. (2.72,-0.46)
        .. controls (1.34,-0.29) and (0.66,0.77) .. (0.55,2.18)
        -- cycle
    }

    \def\ABoundary{
        (1.50,-0.72)
        -- (1.78,-0.46)
        .. controls (2.40,0.16) and (3.08,0.78) .. (4.10,1.44)
        .. controls (5.08,2.06) and (6.18,2.68) .. (7.58,3.07)
        -- (8.28,3.28)
    }

    \def\PBoundary{
        (0.42,3.95)
        -- (1.03,3.63)
        .. controls (1.58,3.48) and (1.92,3.12) .. (2.08,2.66)
        .. controls (2.22,2.22) and (2.16,1.82) .. (1.98,1.40)
        .. controls (1.82,1.06) and (1.52,0.82) .. (1.12,0.66)
        -- (0.80,0.52)
    }

    \begin{scope}
        \clip \OuterPath;

        \path[fill=blue!9]
            \ABoundary
            -- (9.40,3.28)
            -- (9.40,-1.20)
            -- (-0.30,-1.20)
            -- (-0.30,-0.72)
            -- cycle;
    \end{scope}

    \begin{scope}
        \clip \OuterPath;

        \path[fill=orange!18]
            \PBoundary
            -- (-0.70,0.52)
            -- (-0.70,5.00)
            -- cycle;
    \end{scope}

    \begin{scope}
        \clip \OuterPath;

        \draw[
            black!70,
            line width=0.85pt,
            densely dashed
        ]
            \ABoundary;

        \draw[
            black!75,
            line width=0.85pt
        ]
            \PBoundary;
    \end{scope}

    \foreach \x/\y in {
        1.20/3.30,
        1.58/3.02,
        0.96/2.82,
        1.74/2.54,
        1.02/2.02,
        1.66/1.76,
        1.26/1.46,
    } {
        \node[sample] at (\x,\y) {};
    }

    \foreach \x/\y in {
        2.62/3.55,
        4.06/3.56,
        5.78/3.56,
        6.58/3.68,
    } {
        \node[sample] at (\x,\y) {};
    }

    \foreach \x/\y in {
        2.42/2.98,
        3.98/2.84,
        6.50/3.02,
        4.62/2.42
    } {
        \node[sample] at (\x,\y) {};
    }

    \foreach \x/\y in {
        3.18/1.54,
        2.38/1.34,
        3.06/1.22,
        1.5/0.6,
    } {
        \node[sample] at (\x,\y) {};
    }

    \node[font=\normalsize] at (4.48,4.78) {$X_n$};
    \node[regionlabel] at (6.45,0.92) {$A$};
    \node[regionlabel] at (1.34,2.42) {$P_n$};

    \draw[
        black!82,
        line width=1.0pt
    ]
        \OuterPath;

    \end{tikzpicture}

    \caption{
        The hard distribution for the poisoned first-Cantor construction.
        The target $h_{A}$ predicts the private label $\lambda_{A}$ on the
        hidden half-set $A$, and the common label $\starlabel$ on
        $X_n\setminus A$.
        The marker region $P_n\subseteq X_n\setminus A$ receives total mass $1/4$,
        while the remaining common-label region receives mass $3/4$.
        The sample therefore hits every marker in $P_n$, ruling out the poisoned
        fallbacks, while revealing only a sparse portion of the common-label region.
    }
    \label{fig:poisoned-cantor-technique}
\end{figure}

When the sample contains only $\starlabel$ labels, a successful proper learner can output any hypothesis associated with one of the least frequently observed markers. If a marker is absent from the sample, i.e., $S \subseteq \CX_n \setminus P_n$ and all labels are $\starlabel$, then this choice is interpolating. However, if every marker $p \in P_n$ appears in $S$ with the label $\starlabel$, then the proper learner sacrifices the least frequently observed marker $p_{\min} \in P_n$, and outputs $h_{p_{\min}}$. (See \Cref{fig:poisoned-cantor-technique}.)
An elementary probabilistic argument can be used to show that, because $|P_n| = \omega(1)$, $p_{\min}$ has population mass $o(1)$ with high probability. On the other hand, when the sample contains any non-$\starlabel$, then the true labeling function has been identified by its private label, guaranteeing 0 test error. Thus the described proper learner, which does not interpolate, is indeed a PAC learner.

To see that such noninterpolation is necessary, consider the process which selects $h_A$ uniformly at random and assigns $1/4$ mass uniformly to $P_n$, with the remaining $3/4$ mass uniform on $A^C \setminus P_n$. 
For samples of order $|P_n| \log (|P_n|)$, it will occur with high probability that every marker in $P_n$ is observed in the training set $S$. Thus all marker hypotheses are noninterpolating for $S$. Then succeeding with an interpolating proper learner amounts to learning the usual first Cantor class --- the learner must guess the hidden half-set $A$ from only a negligible fraction of its complement. This is impossible.

\vspace{-0.4 cm}
\paragraph{The exact empirical-error scale.}
The same construction can be tuned to demonstrate that any number $a_m = o(m)$ of empirical errors is necessary for certain properly learnable problems. In particular, fix one such $a_m = o(m)$. We calibrate the number and masses of the markers so that each marker hypothesis incurs $\Theta(a_m)$ errors on an $m$-sample from a realizable distribution. (The right values happen to be roughly $|\CX_m|=\Theta(m^3)$ and
$|P_m|=\min\{e^{\Theta(a_m)},\sqrt{m/a_m}\}$, with probability $\Theta(a_m/m)$ placed on each marker.) As argued previously, a proper learner can only succeed by selecting such a marker hypothesis, thus forcing $\Omega(a_m)$ training errors with constant probability.  

To demonstrate that a sublinear number of training errors can always be achieved, we first fix an arbitrary (successful) proper learner $A$. We then describe a procedure by which $A$ yields a proper learner with $o(m)$ empirical errors. In particular, we train $A$ on a prefix of the sample---of carefully chosen length---and then validate its output on the remaining suffix.  If the candidate passes validation, it already makes few errors on the complete sample; otherwise, we simply ``forfeit'' and output an arbitrary interpolating hypothesis.  Crucially, it is possible to select the prefix length and validation threshold along a sufficiently slow diagonal to achieve an $o(m)$ bound while preserving PAC learnability (\cref{thm:empirical-upper,thm:empirical-lower}).

\vspace{-0.4 cm}
\paragraph{Global regularization.}
Our previous ``poisoned'' Cantor construction demonstrates that a successful proper learner may need to sacrifice empirical fit in order to generalize.  Recall that weighted SRM permits precisely such a trade-off by minimizing the objective $h\mapsto \emp_S(h)+\lambda(S) \, \psi(h)$ (\cref{Equation:2}).  Ruling it out therefore requires a different obstruction.  Rather than merely eliminating every consistent safe hypothesis, we must ensure that every scalar regularizer ranks some statistically bad hypothesis at least as favorably as a good one.

Our construction begins with three disjoint vertex sets $V_0,V_1,V_2$, each of size $N$, connected cyclically by copies of the same bipartite graph, as below. 
\begin{center}
\begin{tikzpicture}[
    >=stealth,
    edge/.style={
        ->,
        thick,
        shorten >=3pt,
        shorten <=3pt
    }
]
\node (V0) at (0,1.2) {$V_0$};
\node (V1) at ({0.6*sqrt(3)},-0.6) {$V_1$};
\node (V2) at ({-0.6*sqrt(3)},-0.6) {$V_2$};

\draw[edge] (V0) -- (V1);
\draw[edge] (V1) -- (V2);
\draw[edge] (V2) -- (V0);

\end{tikzpicture}
\end{center}
\vspace{-0.3 cm}
More precisely, for every $r\in\mathbb Z/3\mathbb Z$, we orient a $d$-regular bipartite graph from $V_r$ to $V_{r+1}$.  For a vertex $v \in V_r$, we write $\Gamma^+ (v) \subseteq V_{r+1}$ to denote its set of $d$ outgoing neighbors. We let $S(v):=\{v\}\cup\Gamma^+(v)$ denote the support of $v$.  Then only three graph-theoretic properties are needed: every vertex has large degree $d$; the supports of two distinct vertices intersect in at most one point; and any two subsets $U\subseteq V_r$ and $W\subseteq V_{r+1}$, each occupying a constant fraction of its respective part, have $\Omega(dN)$ edges between them. Informally, the graph has very small codegree but remains sufficiently dense and well mixed across large sets.  Projective-plane incidence graphs satisfy these requirements, with $N=q^2+q+1$ and $d=q+1=\Theta(\sqrt N)$, and we therefore use them in the formal construction.

For every vertex $v$, we introduce a Boolean-cube block $Z_v:=\{0,1\}^{\Gamma^+(v)}$, and the domain $\CX_N$ is the disjoint union of all such blocks. We refer to $v$ as the \emph{owner} of $Z_v$. Each vertex $u$ gives rise to a hypothesis $g_u$ in the following manner: on its own block $Z_u$, the hypothesis $g_u$ predicts $\starlabel$ everywhere. On a block $Z_v$ for which $u\in\Gamma^+(v)$, it acts like a first Cantor classifier, i.e., predicting $\starlabel$ on half of $Z_v$ and a private label $\lambda_u$ on the remaining half of $Z_v$. In particular, on input $z \in Z_v = \{0,1\}^{\Gamma^+(v)}$, $g_u$ predicts $\starlabel$ when $z_u = 0$ and $\lambda_u$ when $z_u = 1$. On all remaining blocks, for which $u \notin \Gamma^+(v)$, the hypothesis $g_u$ simply acts as the constant function emitting $\lambda_u$. Crucially, each block $Z_v$ witnesses three kinds of behaviors: the all-private-label behavior induced by any classifier $g_u$ with $u \notin S(v)$, the all-$\starlabel$ behavior induced by the owner $g_v$, and Cantor-like behaviors induced by the remaining hypotheses $\{g_u : u \in \Gamma^+(v)\}$.

We first argue that the class is properly learnable. If a private label $\lambda_u$ is observed in the sample, then the true classifier has simply been revealed as $g_u$. Otherwise, the sample must consist of only $\starlabel$-labeled points. If such points lie in two blocks $Z_u$ and $Z_v$, then the target function must belong to $S(u) \cap S(v)$. By the second of our 3 graph-theoretic properties, this set has cardinality 1, thus revealing the target function. Finally, if all the $\starlabel$-labeled datapoints lie in a single block $Z_v$, then we can safely predict its owner $g_v$, which acts as the all-$\starlabel$ classifier on $Z_v$.

To defeat a regularizer $\psi$, we first use the cyclic arrangement to find two adjacent parts $V_r$ and $V_{r + 1}$ whose median regularizer values are nonincreasing.  Let $U$ be the upper half of the first part and $W$ the lower half of the second, as measured by the regularizer.  Since these two large sets span $\Omega(dN)$ edges---by our third graph-theoretic requirement---some vertex $v\in U$ has $\Omega(d)$ outgoing neighbors $w\in W$. Each such neighbor satisfies $\psi(g_w)\leq\psi(g_v)$. We then take $g_v$ as the target and sample unlabeled data uniformly from $Z_v$. 
Crucially, every hypothesis other than $g_v$ incurs population error $\geq \frac 12$ on this distribution. However, each neighboring $g_w$ will happen to interpolate an $m$-sample with probability $2^{-m}$. By taking $d$ exponential in $m$, some such neighbor will interpolate with constant probability. On this event, either $g_v$ is not a minimizer of the weighted objective, in which case every minimizer is bad, or $g_v$ is a minimizer and the bad neighbor ties it. The argument therefore survives arbitrary ties and arbitrary sample-dependent choices of $\lambda(S)$, including $\lambda(S)=0$ (\cref{thm:weighted-srm-fails}).

\vspace{-0.4 cm}
\paragraph{Local regularization.}
The preceding lower bound ultimately exploits a single global ranking of the hypotheses.  Local regularization can evade this obstruction by changing its preferences with the test point: at each $x$, it may choose some $h_{S,x}\in\argmin_{h\in\vs(S)}\psi(h,x)$ and predict $h_{S,x}(x)$ (\cref{Equation:3}).  Indeed, on the incidence class above, a local regularizer could simply rank the owner $g_v$ first throughout its block $Z_v$.  To obtain a lower bound, we therefore need a collection of pointwise preferences that cannot all be satisfied simultaneously.  The cyclic structure of the preceding construction remains useful, but the projective-plane geometry is no longer needed: it suffices to simply consider a complete tripartite graph.

In particular, for each sample size $m$, set $N=2^{m-1}$ and consider the complete tripartite graph $G_m=K_{N,N,N}$.  An unlabeled datapoint is an orientation of every edge of $G_m$, and each edge $e=\{a,b\}$ defines a hypothesis $h_e$ that outputs the endpoint toward which $e$ is directed.  Thus every hypothesis uses only two labels, and improper learning is immediate: two observed labels identify the target edge, while a sample containing only one label permits the corresponding constant predictor.  The lower bound, meanwhile, is witnessed by directed cyclic triangles.  Given a test orientation, let $f$ be the earliest edge of such a triangle in the local regularizer's order, let $e$ be the preceding edge around the cycle, and let $a$ be the tail of $f$.  Then the target $h_e$ predicts $a$, whereas the locally preferred hypothesis $h_f$ does not.  The learner must therefore err whenever $f$ survives the sample and every incident edge ranked ahead of it is eliminated, an event of probability at least $2^{-m}(1-2^{-m})^{2N}$.  A random orientation contains $N^3/4$ directed cyclic triangles on average, compared with only $3N^2$ possible target edges. Taking $N=2^{m-1}$ and averaging over these witnesses yields a target-distribution pair on which the learner has constant error (\cref{thm:local-regularization-fails}).

\vspace{-0.4 cm}
\paragraph{Positive results.}
Our positive results are organized around the complementary ideas of \emph{localization} and \emph{integrability}.  Localization asks whether the ambiguity remaining after observing the sample is substantially simpler than the full hypothesis class.  The cleanest example is a finite fallback core $\CF$: every realizable sample either uniquely identifies the target or leaves at least one member of $\CF$ consistent.  An SRM rule may therefore rank $\CF$ first, after which an ordinary finite-class generalization bound completes the argument.  More generally, it suffices that every sample $S$ with $|\vs(S)|>1$ retain a consistent hypothesis inside a regularizer sublevel $\CH_{r_m}:=\{h:\psi(h)\le r_m\}$ whose complexity grows sufficiently slowly with $m$.
Ordered disagreement gives a target-dependent version of the same principle.  Since the target $h^\star$ is always consistent, every SRM output $g$ satisfies $\psi(g)\le\psi(h^\star)$.  We therefore need not control the full class $\CH$; it is enough to bound the VC dimension of the disagreement sets $\{x:g(x)\neq h^\star(x)\}$ generated by hypotheses lying below $h^\star$ in the regularizer order.  An $\epsilon$-net argument then ensures that every such disagreement set of substantial mass is hit by the sample, forcing the SRM output to have small population error (\cref{thm:fallback-core,thm:finite-ambiguous-range,thm:ambiguity-localized,thm:odd-suffices}).

Integrability instead begins with a learner and asks whether its sample-dependent choices can be represented by one regularizer.  Whenever a consistent learner chooses $h$ while another hypothesis $g$ remains feasible, it reveals a preference for $h$ over $g$.  In the hard-SRM setting, such choices admit a scalar representation exactly when the resulting preference relation is acyclic.  Weighted SRM also records the empirical-loss margin supporting each preference, producing a system of strict difference constraints on the values $\psi(h)$.  Summing these inequalities around a cycle eliminates the unknown potential; over finite systems, positivity of every cycle is both necessary and sufficient for reconstructing a regularizer realizing all of the learner's choices (\cref{thm:hard-integrability,thm:weighted-integrability}).

\subsection{Related work}\label{Subsection:related-work}

\paragraph{Multiclass learnability.}
\citet{DS14} introduced the DS dimension and conjectured it to characterize improper multiclass learnability over arbitrary label sets. This characterization was later confirmed by the breakthrough work of \citet{brukhim2022characterization}, using an intricate learner based upon list-learning reductions and orientations of one-inclusion graphs. 
More recently, \citet{pabbaraju2026optimal} settled the sample complexity of realizable multiclass learning by proving a quantitative version of a Daniely--Shalev-Schwartz conjecture, building upon combinatorial work of \citet{hanneke2026optimal}. 
The separation between proper and improper multiclass learning originates in the first Cantor construction of \citet{DS14}, which exhibits a learnable class admitting no proper learner.  \citet{asilisunderstanding} strengthened this obstruction by ruling out every aggregation of a bounded number of proper learners for general multiclass problems.  
Furthermore, \citet{asilis25b-proper} demonstrated that proper learnability can be a poorly-behaved condition, by proving that it cannot be characterized by any combinatorial dimension, and that it can even be logically undecidable \citep{ben2019learnability}. 

\vspace{-0.4 cm}
\paragraph{Regularization in multiclass learning.}
Structural risk minimization is among the classical mechanisms for trading empirical fit against hypothesis complexity \citep{shalev2014understanding}.  In multiclass learning, \citet{asilis2024regularization} showed that optimal learners can be expressed through substantially more flexible notions of regularization: the regularizer may be inferred from unlabeled data and may depend upon the test point, thereby combining different hypotheses across different regions of the domain.  The necessity of these relaxations motivated the question of \citet{asilis-open-problem}, who asked whether a fixed local regularizer can nevertheless learn every multiclass problem.  \citet{jafar2025local} gave a negative answer in the transductive setting while leaving the PAC model open.
We exhibit a tripartite construction that furthermore rules out local regularization in the PAC model, settling the problem completely. We note that the concurrent and independent work of \citet{hou2026local} obtains a closely related counterexample derived from tournaments on complete graphs. Both proofs exploit directed cyclic triangles as certificates of incompatible local preferences, but the analyses proceed differently: \citet{hou2026local} first expresses the learner's risk as a weighted count of inversions in the local edge order, whereas our tripartite construction isolates from each cyclic triangle a concrete bad edge whose survival forces an error, allowing for a somewhat more direct argument. On the proper side, prior work asked whether the promise of proper learnability is enough to recover classical SRM as a general learner \citep{asilis25b-proper}.  We again answer this question in the negative, even when the coefficient on the regularization term is permitted to depend arbitrarily upon the complete labeled sample.

\vspace{-0.4 cm}
\paragraph{Revealed preferences and integrability.}
Our representation results connect SRM with the classical theory of rational choice.  A deterministic learner choosing from sets of feasible hypotheses defines a choice rule, and asking for an SRM representation amounts to asking whether these choices can be rationalized by one scalar ordering.  Acyclicity, contraction consistency, and path independence are familiar themes in this literature \citep{plott1973path}.  
Weighted SRM introduces sample-dependent empirical-loss offsets, leading instead to systems of difference constraints and corresponding cycle conditions.  This perspective bears a broad resemblance to revealed-preference and cyclic-monotonicity arguments in economics and mechanism design \citep{afriat1967utility,rochet1987rationalizability}, although the formal settings and questions are different.

\section{Preliminaries}\label{Section:Preliminaries}

Our setting is realizable multiclass PAC learning. $\CX$ and $\CY$ are, respectively, an arbitrary unlabeled data domain and label set. We permit $\CY$ to be infinite throughout the paper. A \defn{hypothesis class} is a set $\Hc\subseteq \Y^\X$, and a labeled \defn{example} is a pair $(x,y)\in \X\times\Y$. Given a sample $S=((x_1,y_1),\ldots,(x_m,y_m))$,
its \defn{empirical loss} is $ \emp_S(h):=\frac1m\sum_{i=1}^m \one\{h(x_i)\neq y_i\}$.
For a distribution $D$ on $\X$ and a target $h^\star\in\Hc$, the population loss of $h$ is $\pop_D(h,h^\star):=\P_{x\sim D}[h(x)\neq h^\star(x)]$.
A sample is \defn{realizable} if there exists an $h^\star\in\Hc$ such that $y_i=h^\star(x_i) \, \, \forall i$. A \defn{learner} is a function $(\CX \times \CY)^{< \omega} \to \CY^\CX$, and is \defn{proper} if its output always lies in $\Hc$, otherwise \defn{improper}. We say that $A$ PAC learns $\Hc$ if for every $\epsilon,\delta\in(0,1)$ there exists $n_A(\epsilon,\delta)$ such that for all $m\ge n_A(\epsilon,\delta)$, every $h^\star\in\Hc$, and every marginal $D$,
\[
\P_{S\sim D^m}\brac{\pop_D(A(S),h^\star)>\epsilon}\le \delta,
\]
where the sample is labeled by $h^\star$. 

The \defn{version space} of a realizable sample $S$ is
\[
\vs(S):=\set{h\in\Hc:\emp_S(h)=0}.
\]
A \defn{consistent} proper learner, or \emph{interpolating} proper learner, is one whose output always lies in $\vs(S)$. A realizable sample $S$ is said to be \defn{ambiguous} if $|\vs(S)| \geq 2$.

\subsection{Regularization models}\label{Subsection:regularization-models}

A \defn{regularizer} for $\CH$ is a function $\psi:\Hc\to\R_{\ge 0}$, thought of as encoding an inductive bias over hypotheses in $\CH$. Roughly, hypotheses $h \in \CH$ with smaller values under $\psi$ are thought of as being preferred by $\psi$, perhaps owing to their ``simplicity.'' 
A \defn{local regularizer} for $\CH$ is a function $\psi: \CH \times \CX \to \R_{\geq 0}$, likewise thought of as expressing preferences over hypotheses in $\CH$, but at a more granular, location-dependent level.

We now describe three manners in which (local) regularizers give rise to Structural Risk Minimization (SRM) learners. In particular, Hard SRM minimizes $\psi$ among interpolating hypotheses for $S$ (i.e., among the version space $\vs(S)$), Weighted SRM minimizes the sum of empirical risk and $\psi$ across all of $\CH$, and local regularization minimizes $\psi( \, \cdot \, , \xtest)$ locally at $\xtest$ among interpolating hypotheses. 

\begin{definition}[Hard SRM]
Let $\psi$ be a regularizer for $\CH$. A learner $A$ is said to be \defn{induced} by $\psi$ as a hard-SRM learner if the following condition holds over all realizable samples $S$,
\begin{equation}
A(S) \in \argmin_{h \in \CH} \big \{ \psi(h) : h \in \vs(S) \big \}.
\label{Equation:1}
\end{equation}
Should all hard-SRM learners induced by $\psi$ successfully learn $\CH$, then $\psi$ itself is said to \defn{learn} $\CH$.
\end{definition}

\begin{definition}[Weighted SRM]
Let $\psi$ be a regularizer for $\CH$ and let $\lambda : (\CX \times \CY)^{< \omega} \to \R_{\geq 0}$. A learner $A$ is \defn{induced} by the pair $(\psi, \lambda)$ as a weighted-SRM learner if for all samples $S$,
\begin{equation}
A(S) \in \argmin_{h\in\Hc}
\set{\emp_S(h)+\lambda(S) \, \psi(h)}. 
\label{Equation:2}
\end{equation}
Should all weighted SRM learners induced by the pair $(\psi, \lambda)$ be PAC learners for $\CH$, then they are said to \defn{learn} $\CH$.
\end{definition}

\begin{definition}[Local Regularization]
Let $\psi: \CH \times \CX \to \R_{\geq 0}$ be a local regularizer. A learner $A$ is \defn{induced} by $\psi$ if for all samples $S$ and points $x \in \CX$, 
\begin{equation}
A(S)(x) \in \argmin \big \{ \psi(h, x) : h \in \vs(S) \big \}.  
\label{Equation:3}
\end{equation}
When all learners induced by $\psi$ are PAC learners for $\CH$, then $\psi$ is said to \defn{learn} $\CH$.
\end{definition}

A notable feature of our definitions is that we only crown a (local) regularizer $\psi$ as having successfully learned a hypothesis class $\CH$ if \emph{all} learners it induces are PAC learners for $\CH$. In short, we take a worst-case perspective with respect to tie-breaking of the $\mathrm{arg min}$ in \cref{Equation:1,Equation:2,Equation:3}: any fixed choice of tie-breaking must produce a successful learner. 
Semantically, this amounts to requiring that the success of a regularizer $\psi$ owe itself only to discernments made by $\psi$, not a particular sequence of good tie-breaking decisions.

\subsection{Topological conditions}\label{Subsection:Topological-conditions}

Several of our forthcoming results take the form of impossibility theorems, prohibiting families of regularization-based learners from succeeding on certain learnable problems. We have found that such learnable counter-examples can often be found --- in a somewhat artificial way --- by exploiting hypothesis classes that are not topologically closed, i.e., that do not contain all their limit points. Briefly, the idea is that one can design $\CH$ lacking a limit point $f$ which could easily be learned by a regularizer favoring $f$ over all other hypotheses, if only $f$ were in $\CH$.
In order to exclude such classes, we thus require all counter-examples presented in the paper to satisfy two regularity conditions. 

The first requirement is that $\CH$ should be closed in the product topology on $\CY^\CX$, when $\CY$ is endowed with the discrete topology. (I.e., the topology given by the 0-1 loss function on $\CY$.) In the event that $\CY$ is infinite, however, closedness of $\CH$ can be ``cheated'' by simply appending a single point to the domain $\CX$ and having each hypothesis emit a unique label on that point. As our second condition, we thus require that $|\CH(x)| = |\{h(x) : h \in \CH\}|$ be finite at each $x \in \CX$, which we refer to as having \emph{pointwise finite range}.\footnote{Note that this is a pointwise condition, not a uniform one --- $|\CH(x)|$ is permitted to attain arbitrarily large (but finite!) values across $x \in \CX$.}

\begin{definition}
Let $\Hc\subseteq \Y^\X$ be a hypothesis class.
\begin{enumerate}[label=(\roman*),topsep=0.3em,itemsep=0.25em]
    \item We say that $\Hc$ is \edefn{closed} if it is closed in the product topology on $\Y^\X$ when each copy of $\Y$ carries the discrete topology.
    \item We say that $\Hc$ has \edefn{pointwise finite range} if for every $x\in\X$ the following set is finite:
    \[
    \Hc(x):= \big \{ h(x):h\in\Hc \big \}.
    \]
    \item A finite set $I\subseteq \X$ is an \edefn{identifier} for $\Hc$ if the restriction map $h\mapsto \restr{h}{I}$ is an injection on $\Hc$.
\end{enumerate}
\end{definition}

It is useful to observe that the product-topological definition of closedness admits an elementary finite-projection reformulation. The following lemma arises directly from the definition of the product topology. 

\begin{lemma}\label{lem:closed-fpp}
A class $\Hc\subseteq\Y^\X$ is closed if and only if the following holds: whenever $f:\X\to\Y$ has the property that for every finite $I\subseteq\X$ there exists $h_I\in\Hc$ with 
$\restr{f}{I}=\restr{h_I}{I}$,
then $f\in\Hc$.
\end{lemma}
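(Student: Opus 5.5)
The plan is to prove both directions directly from the definition of the product topology on $\Y^\X$ with $\Y$ discrete. A basic open neighborhood of a function $f$ has the form
\[
U_{I,f}:=\set{g\in\Y^\X:\restr{g}{I}=\restr{f}{I}},
\]
where $I\subseteq\X$ is finite. The family of all such $U_{I,f}$ forms a neighborhood base at $f$. This holds because each singleton $\set{y}$ is open in the discrete topology on $\Y$, and a basic open set in the product constrains only finitely many coordinates. Intersections of finitely many of these sets are again of this form, since $U_{I,f}\cap U_{J,f}=U_{I\cup J,f}$.

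With this base in hand, the stated condition says exactly that $f$ lies in the closure of $\Hc$. Indeed, $f\in\overline{\Hc}$ iff every basic neighborhood $U_{I,f}$ meets $\Hc$. That in turn means there is some $h_I\in\Hc$ with $\restr{h_I}{I}=\restr{f}{I}$ for every finite $I$.

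For the forward direction, suppose $\Hc$ is closed and $f$ satisfies the hypothesis. Then $f\in\overline{\Hc}=\Hc$.

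For the converse, suppose the finite-projection property holds, and let $f\in\overline{\Hc}$. Every $U_{I,f}$ meets $\Hc$, so the hypothesis of the property is satisfied, and therefore $f\in\Hc$. Hence $\overline{\Hc}\subseteq\Hc$, which means $\Hc$ is closed.

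The only step needing care is justifying that the sets $U_{I,f}$ form a neighborhood base. A general basic open set is $\prod_x O_x$ with $O_x=\Y$ for all but finitely many $x$. If such a set contains $f$, it contains $U_{I,f}$, where $I$ is the finite set of coordinates $x$ with $O_x\neq\Y$. No real obstacle remains beyond this.
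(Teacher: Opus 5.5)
Your proof is correct and is essentially the argument the paper has in mind. The paper gives no written proof, remarking only that the lemma ``arises directly from the definition of the product topology,'' and your observation that the sets $\set{g:\restr{g}{I}=\restr{f}{I}}$ over finite $I$ form a neighborhood base at $f$ is that argument spelled out: it makes the finite-projection condition exactly the statement $f\in\overline{\Hc}$.
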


It is not difficult to see from Lemma~\ref{lem:closed-fpp} that any class $\CH$ with a finite identifier $I$ is automatically rendered closed, by considering sets of the form $I \cup \{x\}$ for each $x \in \CX$. Conversely, if $\CH$ is infinite and has pointwise finite range, then it cannot admit a finite identifier, as for any finite $I \subseteq \CX$ one has that $\abs{\restr{\Hc}{I}}\le \prod_{x\in I} \abs{\Hc(x)}<\infty$. Thus the pointwise finite range condition precludes this ``identifier trick'' so long as $\CH$ is infinite.

Finally, we note that the combination of both closedness and pointwise finite range --- which we employ in our negative results --- is sufficiently strong so as to ensure that all projections of $\CH$ to a subdomain $A \subseteq \CX$ are likewise closed (and furthermore compact).

\begin{lemma}\label{lem:restriction-compact}
If $\Hc$ is closed and has pointwise finite range, then the restriction $\restr{\Hc}{A}:=\set{\restr{h}{A}:h\in\Hc}$ is closed and compact in $\CY^A$ for each $A \subseteq \CX$.
\end{lemma}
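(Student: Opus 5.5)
The plan is to realize $\restr{\Hc}{A}$ as a closed subset of the compact space $\prod_{x\in A}\Hc(x)$, which will give both claims at once. Fix $A\subseteq\X$. Since $\Hc$ has pointwise finite range, each $\Hc(x)$ is a finite discrete space. By Tychonoff's theorem, the product $K_A:=\prod_{x\in A}\Hc(x)$ is then compact. Moreover, $K_A$ is a closed subset of $\Y^A$: if $f\notin K_A$, then $f(x)\notin\Hc(x)$ for some $x$, and the basic open set $\{g: g(x)=f(x)\}$ misses $K_A$. Since $\restr{\Hc}{A}\subseteq K_A$, compactness of $\restr{\Hc}{A}$ follows once it is shown to be closed. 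A closed subset of a compact space is compact, and the subspace topology of $K_A$ inside $\Y^A$ agrees with its product topology.

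The main step is therefore closedness of $\restr{\Hc}{A}$ in $\Y^A$. Let $g\in\Y^A$ be a limit point. Equivalently, in the spirit of \cref{lem:closed-fpp} applied on domain $A$, for every finite $I\subseteq A$ there is $h_I\in\Hc$ with $\restr{h_I}{I}=\restr{g}{I}$. We must produce a single $h\in\Hc$ with $\restr{h}{A}=g$. This is a compactness argument on the full domain. For each finite $I\subseteq A$, consider
\[ C_I:=\{h\in\Hc:\restr{h}{I}=\restr{g}{I}\}. \]
Each $C_I$ is nonempty by hypothesis. It is closed in $\Y^\X$, being the intersection of the closed set $\Hc$ with a closed cylinder set; cylinder sets are clopen because $\Y$ is discrete. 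The family $\{C_I\}$ has the finite intersection property, since $C_{I_1}\cap\dots\cap C_{I_k}\supseteq C_{I_1\cup\dots\cup I_k}\neq\varnothing$.

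To conclude, these sets must live in a compact space. Apply the first paragraph with $A=\X$: the set $\Hc\subseteq K_\X=\prod_{x\in\X}\Hc(x)$ is a closed subset of a compact space, hence compact. By the finite intersection property characterization of compactness, $\bigcap_I C_I\neq\varnothing$. Any $h$ in this intersection agrees with $g$ on every finite subset of $A$, hence on all of $A$, so $g=\restr{h}{A}\in\restr{\Hc}{A}$.

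I expect no step to be a real obstacle. The only point requiring care is the one this plan handles explicitly: compactness is needed on the whole of $\Hc$, not merely on $\restr{\Hc}{A}$, so that the witnesses $h_I$ can be glued. This is exactly where both hypotheses enter. Pointwise finite range gives compactness of the ambient product, and closedness of $\Hc$ makes $\Hc$ itself compact.
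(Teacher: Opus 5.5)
Your proof is correct, and it rests on the same key fact as the paper's: by Tychonoff and pointwise finite range, $\Hc$ is a closed subset of the compact space $\prod_{x\in\X}\Hc(x)$, hence compact. You diverge in how you transfer this to $\restr{\Hc}{A}$.

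The paper notes that the restriction map $\pi_A\colon h\mapsto\restr{h}{A}$ is continuous, so $\restr{\Hc}{A}=\pi_A(\Hc)$ is compact. Closedness then comes for free, because compact subsets of the Hausdorff space $\Y^A$ are closed.

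You go in the opposite order. You prove closedness directly, gluing the finite witnesses $h_I$ via the finite intersection property of the sets $C_I$ inside the compact $\Hc$. You then derive compactness from $\restr{\Hc}{A}$ being closed in the compact product $\prod_{x\in A}\Hc(x)$.

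Your FIP argument is in effect an unpacked proof, specialized to projections, that a continuous image of a compact space in a Hausdorff space is closed. Its merit is that it shows explicitly why compactness of the whole class is what lets behavior that is finitely consistent on $A$ be realized by a single hypothesis. It is, however, longer than necessary. Your first paragraph on $K_A$ also becomes redundant once $\Hc$ is known to be compact, since continuity of $\pi_A$ already gives compactness of $\restr{\Hc}{A}$.
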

\begin{proof}
Let $K:=\prod_{x\in\CX}\Hc(x)$. As $\Hc$ has pointwise finite range, each factor $\Hc(x)$ is finite and thus compact. By Tychonoff's theorem, $K$ is thus compact. Then $\Hc$ is a closed subset of $K$, hence compact. The restriction map $\pi_A: h\mapsto\restr{h}{A}$ is continuous, so $\pi_A (\Hc) = \restr{\Hc}{A}$ is compact in $\CY^A$. Finally, as $\CY^A$ is Hausdorff, its compact subsets are closed.
\end{proof}

\section{Proper Learning and Noninterpolation}\label{Section:Barriers-to-Proper-Learning}

We begin by establishing two limitations of proper learning. First, we demonstrate in \cref{Subsection:no-proper-completion} that general multiclass learning does not reduce to proper learning in the manner posited by \cref{problem3:envelope-proper}. In particular, \cref{thm:no-proper-completion} exhibits a learnable multiclass problem $\H$ that cannot be enveloped by any properly learnable class $\Hprop$, i.e., $\H \subseteq \Hprop$. Subsequently, in \cref{subsec:poisoned}, we demonstrate that proper learning requires one to tolerate empirical error: That is, there exist properly learnable problems for which all successful proper learners must err on realizable training samples. In \cref{subsec:tight-empirical}, we give a precise quantitative account of this phenomenon, by showing that there always exists a proper learner making a sublinear number of errors $o(n)$ on training sets of size $n$, and conversely that any sublinear error rate $a_n = o(n)$ can be made necessary on certain properly learnable problems.

\subsection{Learning does not reduce to proper learning}\label{Subsection:no-proper-completion}

The following theorem resolves \cref{problem3:envelope-proper}, first articulated (in the converse direction) as Conjecture~18 of \citet{asilis25b-proper}.

\begin{theorem}\label{thm:no-proper-completion}
There exists a learnable multiclass hypothesis class $\Hc$ that cannot be embedded into any properly learnable class. That is, there does not exist any properly learnable class $\Hprop$ such that $\Hc \subseteq \Hprop$. 
\end{theorem}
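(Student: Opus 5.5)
We construct the tree class sketched in the Techniques paragraph and prove a stronger statement: every PAC learner $A$ for $\Hc$ satisfies $\DS(\operatorname{im}(A))=\infty$. Since a properly learnable $\Hprop\supseteq\Hc$ has a proper learner for it, and that learner also learns $\Hc$, its image lies in $\Hprop$. Hence $\DS(\Hprop)=\infty$, contradicting the learnability of $\Hprop$ by the DS characterization of \citet{brukhim2022characterization}.

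\textbf{Construction.} For each $d\ge 1$, let $T_d$ be the complete $2^d$-ary tree whose nodes are finite sequences $v=(b_1,\ldots,b_k)$ with $b_i\in\{0,1\}^d$. To each node $v$ attach a fresh block $X_v=\{x_{v,1},\ldots,x_{v,d}\}$, and take $\CX$ to be the disjoint union of all blocks over all $d$. A hypothesis is indexed by $d$, a finite path $\varnothing=v_0\prec\cdots\prec v_\ell$ in $T_d$ with $v_i=(v_{i-1},b_i)$, and coordinates $u_1,\ldots,u_\ell\in[d]$. It outputs $b_i(u_i)$ at $x_{v_{i-1},u_i}$ and outputs a private label unique to the whole index everywhere else in $\CX$.

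\textbf{Learnability.} Take the following learner. If the sample contains a private label, output that hypothesis, which then has zero error. Otherwise every sample point is a public point. Public points at nodes of $T_d$ determine the path up to the deepest observed node, since each node encodes all earlier $b_i$. The learner outputs the stitched public bits along that prefix, where the next node's bits reveal each visited $b_i$ in full, and outputs a fixed default label elsewhere.

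The error analysis is the key step. The output errs only on (a) points where the target is private and the learner predicts the default, and (b) public points of the target lying strictly deeper than the deepest observed node. For (a), the standard bound applies: the set of private-label points has mass less than $\epsilon$ or is hit with probability at least $1-(1-\epsilon)^m$. For (b), the target's public points are linearly ordered by depth, and the error region is a tail of this chain that the sample missed entirely. A tail of mass at least $\epsilon$ contains its deepest $\epsilon$-mass suffix, and missing it has probability at most $(1-\epsilon)^m$. One must also check that the prefix is consistent with only one $d$ and one branch. Each public point determines its node and hence all ancestors, so the public points seen are consistent with a single chain, and all $b_i$ up to the deepest seen node are determined. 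Both events have probability $O((1-\epsilon)^m)$, which gives a sample complexity of order $\log(1/\delta)/\epsilon$.

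\textbf{Lower bound on $\operatorname{im}(A)$.} Fix a learner $A$ and fix $d$. Suppose no block $X_v$ in $T_d$ has all $2^d$ patterns realized by $\operatorname{im}(A)$. Then build an infinite path by moving from each node to the child $b$ whose pattern $b\in\{0,1\}^d$ is missed on that block. Fix $m$ and truncate to depth $2m$. Choose $u_i\in[d]$ uniformly at random, and let the target be the hypothesis for this path with these coordinates. Put mass $1/(2m)$ on each public point $x_{v_{i-1},u_i}$.

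Each visited block has public label pattern agreeing with the missed pattern $b_i$ at coordinate $u_i$. Whatever $A(S)$ outputs on an unseen block differs from $b_i$ at some coordinate, chosen as a function of $S$. Conditioned on $S$, the coordinate $u_i$ for an unseen depth is still uniform, so $A(S)$ errs there with probability at least $1/d$. At least $m$ depths are unseen, so the expected error is at least $m\cdot\frac{1}{2m}\cdot\frac1d=\frac1{2d}$ for every $m$. Averaging, some fixed choice of the $u_i$ gives the same bound, and since error is bounded this contradicts PAC learning with $\epsilon<1/(4d)$. Therefore some block is shattered by $\operatorname{im}(A)$ with binary labels, which gives $\DS\ge d$ since a binary cube is a pseudocube.

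The main obstacle I expect is the careful learnability proof: making the default and private-label bookkeeping rigorous, handling hypotheses across different $d$, and making the tail argument uniform. The lower bound is a clean diagonalization. The subtle point there is that the missed pattern at depth $i$ must not depend on $u_i$. It does not, because it is determined by the node alone.
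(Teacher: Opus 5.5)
Your proposal is correct and follows the paper's proof essentially step for step: the same $2^d$-ary tree class, the same reveal-or-stitch improper learner, and the same dichotomy (a fully realized block versus an infinite path of missed patterns, refuted by uniformly random public coordinates over $2m$ depths). The differences are cosmetic, such as your union bound over the private region and the unseen tail where the paper uses the single event $S\subseteq P_{i_\epsilon}$; two small slips to tidy are that the stitched learner also errs at the target's private points inside the fully stitched blocks (harmless, since your bound (a) already covers the whole private region), and that the final contradiction needs $\delta$ small as well as $\epsilon$, e.g.\ $(\epsilon,\delta)=(\frac1{4d},\frac1{8d})$ so that $\epsilon+\delta<\frac1{2d}$.
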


The remainder of this subsection is devoted to proving \cref{thm:no-proper-completion}.

\begin{proof}
We start with the construction of the data domain $\mathcal X$, the label set $\mathcal Y$, and the hypothesis class $\Hc$.

\vspace{-0.4 cm}
\paragraph{Construction.}

For each $d \geq 1$, let $T_d$ denote the infinite complete $2^d$-ary tree whose children at every depth are indexed by $\{0,1\}^d$. A node $v$ of $T_d$ at depth $k$ can be identified as a sequence $v=(b_1, \dotsc, b_k)$ where $b_i \in \{0, 1\}^d$:
\begin{itemize}
    \item The root at depth 0 is $\varnothing$.
    \item A node $v=(b_1, \dotsc, b_k)$ at depth $k$ has $2^d$ children of the form $(v, b_{k+1})$ for $b_{k+1} \in \{0, 1\}^d$.
\end{itemize}

Using the tree $T_d$, we now define the domain $\mathcal{X}$. Letting $|v|$ denote the depth of $v$, for each node $v \in T_d$ define $d$ domain points $x_{d, v, j}$ for $j \in [d]$. Let
\[ \mathcal X = \bigcup_{d \geq 1} \bigcup_{v \in T_d} \{x_{d, v, j}:j \in [d]\}. \]
We call $\{x_{d, v, j}:j \in [d]\}$ the \defn{block} of node $v$.

Each hypothesis in our constructed hypothesis class will correspond to choosing a finite path in $T_d$ and choosing one domain point in every non-terminal block along the path to label with 0 or 1. All other domain points are assigned a fixed label (other than 0 and 1) that can immediately identify the hypothesis. Formally, for any $\ell \geq 1$, any $b=(b_1, \dotsc, b_\ell) \in \prod_{i=1}^\ell \{0, 1\}^d$, and $u=(u_1, \dotsc, u_\ell) \in [d]^\ell$, define the hypothesis
\[ h_{d, b, u}(x) = \begin{cases}
    b_i(u_i) & \text{if $x=x_{d, (b_1, \dotsc, b_{i-1}), u_i}$ for some $i \in [\ell]$}, \\
    \alpha_{d, b, u} & \text{if $x \notin \{x_{d, (b_1, \dotsc, b_{i-1}), u_i}: i\in [\ell]\}$},
\end{cases} \]
where $b_i(u_i)$ denotes the $u_i$-th component of $b_i$. Here $u$ specifies which domain points are labeled with 0 or 1 on nodes along the path. Call the unique labels $\alpha_{d, b, u}$ that identify the hypothesis \defn{private} labels, and call $b_i(u_i)\in \{0,1\}$ the \defn{public} labels.

We finish the construction by defining the hypothesis class
\[ \Hc = \left\{h_{d, b, u}:d \geq 1, \ell \geq 1, b \in \prod_{i=1}^\ell \{0, 1\}^d, u \in [d]^\ell \right\}. \]
The label set $\mathcal{Y}$ consists of the public labels $\{0,1\}$ along with all private labels defined above:
\[ \mathcal Y = \{0, 1\} \cup \left\{\alpha_{d, b, u}:d \geq 1, \ell \geq 1, b \in \prod_{i=1}^\ell \{0, 1\}^d, u \in [d]^\ell \right\}. \]

\paragraph{An improper learner.}
As mentioned earlier, we construct an improper learner that either sees the private label and recovers the target exactly, or uses a deepest public example to reconstruct all preceding blocks.

Given a sample $S$, the learner $A$ proceeds as follows:
\begin{enumerate}
    \item If a private label $\alpha_{d, b, u}$ appears, then output the uniquely identified hypothesis $h_{d, b, u}$ (note that $\ell$ is also identified with the length of the vectors $b$ and $u$).
    \item Otherwise, all the labels are in $\{0, 1\}$. Choose any labeled example $(x_{d, v, j}, y)$ where $|v|$ is maximal in the data. Let $v=(b_1, \dotsc, b_{i^*-1})$ where $i^*=|v|+1$ and $u_{i^*}=j$. Output the classifier
    \[ h_S(x) = \begin{cases}
        y & \text{(if $x=x_{d, (b_1, \dotsc, b_{i^\ast-1}), u_{i^*}}$)}, \\
        b_i(j) & \text{(if $x=x_{d, (b_1, \dotsc, b_{i-1}), j}$ for some $i<i^*, j \in [d]$)}, \\
        0 & \text{otherwise}.
    \end{cases} \]
    In words, choose the domain point that has the largest depth and use that as the reference example to recover the labels in all preceding blocks.
\end{enumerate}
$A$ is an improper learner because $h_S(x)$ produced by the second case assigns public labels to the whole block of the node at depth $i-1$ where $i<i^*$ according to $b_i(\cdot)$, but any hypothesis in $\Hc$ assigns a public label to only one element of the block.

We next show that $A$ is a PAC learner.

\begin{lemma}
    For every target hypothesis $h \in \Hc$, distribution $D$ on $\mathcal X$, error $\varepsilon \in (0, 1)$, and sample size $m \geq 1$, we have
    \[ \P[L_D(A(S), h)>\varepsilon] \leq e^{-m \varepsilon}. \]
\end{lemma}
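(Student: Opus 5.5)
The plan is to fix the target $h=h_{d,b,u}$ with path length $\ell$ and to name its public points $p_i := x_{d,(b_1,\dots,b_{i-1}),u_i}$ for $i\in[\ell]$. Let $R := \X\setminus\{p_1,\dots,p_\ell\}$ be the private region, on which $h$ outputs $\alpha_{d,b,u}$. The goal is to show that the event $\{L_D(A(S),h)>\varepsilon\}$ is contained in the event that $S$ misses some fixed set $B\subseteq\X$ with $D(B)>\varepsilon$. Here $B$ depends only on $h$ and $D$. The bound then follows from $\P[S\cap B=\varnothing]=(1-D(B))^m\le e^{-m\varepsilon}$.

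First I describe the error set of $A(S)$ in each case.
\begin{itemize}
\item If $S$ contains a point of $R$, its label is $\alpha_{d,b,u}$. Case 1 of $A$ then outputs $h$ itself, so the error is $0$.
\item Otherwise $S\subseteq\{p_1,\dots,p_\ell\}$ and every label is public. The point $p_i$ lies in the block of a node at depth $i-1$, so a point of maximal depth in $S$ is $p_{i^*}$, where $i^*$ is the largest index observed. The node $v=(b_1,\dots,b_{i^*-1})$ is read off from that point, and $u_{i^*}=j$.
\end{itemize}
In the second case I check that $h_S$ agrees with $h$ on $p_1,\dots,p_{i^*}$:
\begin{itemize}
\item At $p_{i^*}$ it outputs the observed label $y$.
\item At $p_i$ with $i<i^*$ it outputs $b_i(u_i)$, since it labels the entire block of $(b_1,\dots,b_{i-1})$ by $b_i(\cdot)$.
\end{itemize}
Consequently, in this case $\{x: h_S(x)\ne h(x)\}\subseteq R\cup\{p_j : j>i^*\}$. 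This bookkeeping is the only step requiring care: depths must line up with indices, and the reconstruction must use the same $d$-tree. It is routine once the indices are fixed.

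Next I define the tail masses $q(t) := D(R)+\sum_{j>t} D(p_j)$ for $t=0,1,\dots,\ell$. These are nonincreasing in $t$, with $q(\ell)=D(R)$. I split into two cases.
\begin{itemize}
\item Suppose $D(R)>\varepsilon$. By the case analysis, failure forces $S\cap R=\varnothing$, so I take $B=R$.
\item Suppose $D(R)\le\varepsilon$. Let $t_0:=\min\{t : q(t)\le\varepsilon\}$. If $t_0=0$, every output has error at most $q(0)\le\varepsilon$ (with $i^*\ge 1$ since $m\ge1$), so failure is impossible. If $t_0\ge1$, failure requires $S$ to avoid $R$ and to have $i^*<t_0$. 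That means $S$ avoids $B:=R\cup\{p_j : j\ge t_0\}$, and $D(B)=q(t_0-1)>\varepsilon$.
\end{itemize}
In both nontrivial cases the failure event is contained in $\{S\cap B=\varnothing\}$ with $D(B)>\varepsilon$. Since $(1-D(B))^m\le e^{-m\varepsilon}$, this completes the argument.
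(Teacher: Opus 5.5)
Your proof is correct and takes essentially the same route as the paper. Both arguments note that $h_S$ agrees with the target on the observed public prefix $p_1,\dots,p_{i^*}$, and then confine the failure event to the sample landing inside a fixed prefix whose complement has mass exceeding $\varepsilon$; here your $q(t)$ equals $1-D(P_t)$, your $t_0-1$ is the paper's $i_\varepsilon$, your case $D(R)>\varepsilon$ is the case $i_\varepsilon=\ell$, and your case $t_0=0$ is vacuous because $q(0)=1>\varepsilon$.
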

\begin{proof}
    The proof first handles the easy private-label case. For the remaining case we bound the probability that every example lies in a public prefix whose mass is less than $1 - \varepsilon$.

    Let $h=h_{d, b, u}$. If $S$ ever contains a private label, then by construction the learner recovers $h$ exactly.

    For the case where every example in $S$ has a public label, let $i^*=|v|+1$ be the index of the reference example. For $j=0, 1, \dotsc, \ell$ let $P_j$ be the set of domain points to which $h_{d, b, u}$ has given public labels in the first $j$ depths. Formally,
    \[ P_j:=\{x_{d, (b_1, \dotsc, b_{i-1}), u_i}:1 \leq i \leq j\}, \quad P_0:=\varnothing. \]
    Observe that the output $h_S(\cdot)$ is correct on $P_{i^*}$. Therefore, even if $h_S(\cdot)$ is wrong everywhere else, we still have $L_{D}(A(S), h) \leq 1 - D(P_{i^*})$. Here $D(Z):=\P_{x \sim D}[x \in Z]$.

    If $D(P_{i^*}) \geq 1 - \varepsilon$ then we are done. If not, let $i_\varepsilon$ be the largest $i \in \{0, 1, \dotsc, \ell\}$ such that $D(P_i)<1 - \varepsilon$. By assumption, $i_\varepsilon$ exists and $i^* \leq i_\varepsilon$. Since every example has a public label and has depth at most $i^*-1$, every example is also in $P_{i_\varepsilon}$. Therefore
    \[ \P[L_{D}(A(S), h)>\varepsilon] \leq D(P_{i_\varepsilon})^m<(1 - \varepsilon)^m \leq e^{-m \varepsilon} \]
    as desired.
\end{proof}

\vspace{-0.3 cm}
\paragraph{No properly learnable envelope exists.}
We prove the stronger statement that every (possibly randomized) PAC learner $A$ for $\mathcal H$ satisfies $\DS\bigl(\operatorname{im}(A)\bigr)=\infty$, as promised in \cref{Subsection:techniques}. This immediately proves \cref{thm:no-proper-completion}. Indeed, if $\Hc \subseteq \Hprop$ and $\Hprop$ has a proper PAC learner $A$, then $\operatorname{im}(A) \subseteq \Hprop$, so $\DS(\Hprop)=\infty$, which contradicts the fact that $\Hprop$ is learnable.

Fix an arbitrary PAC learner $A$ for $\Hc$, and let $n_A(\varepsilon, \delta)$ be its sample complexity bound. Fix any $d \geq 1$ and consider the corresponding tree $T_d$. We first prove the following dichotomy, which says that either $\operatorname{im}(A)$ is very rich or it systematically misses certain patterns.

\begin{lemma}
    For a node $v \in T_d$, identify a vector $c \in \{0, 1\}^d$ as a function on the block
    \[ E_{d, v}:=\{x_{d, v, j}:j \in [d]\} \]
    that maps $x_{d, v, j}$ to $c(j)$, where $c(j)$ denotes the $j$th component of $c$. Then at least one of the following holds:
    \begin{description}
        \item[Condition 1] There exists a node $v \in T_d$ such that
        \[ \forall c \in \{0, 1\}^d\ \exists f_c \in \operatorname{im}(A) \quad \restr{f_c}{E_{d, v}}=c. \]
        \item[Condition 2] There exists an infinite sequence $b_1, b_2, \dotsc$ with $b_i \in \{0, 1\}^d$ such that for every $i \geq 1$
        \[ b_i \notin \restr{\operatorname{im}(A)}{E_{d, (b_1, \dotsc, b_{i-1})}}. \]
    \end{description}
\end{lemma}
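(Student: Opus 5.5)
The dichotomy is a pure negation-plus-recursion argument, so I plan to prove it by assuming Condition 1 fails and building the sequence demanded by Condition 2 one step at a time. Suppose Condition 1 fails. Then for every node $v \in T_d$ some $c \in \{0,1\}^d$ has no $f \in \operatorname{im}(A)$ with $\restr{f}{E_{d,v}} = c$. Equivalently, the restriction set $\restr{\operatorname{im}(A)}{E_{d,v}}$, where binary vectors are identified with functions on the block as in the statement, is a proper subset of $\{0,1\}^d$. Note that $\restr{\operatorname{im}(A)}{E_{d,v}}$ may also contain non-binary functions, since predictors can output private labels. That is harmless, because we only ask whether binary vectors lie in this set.

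I then define $b_1, b_2, \dotsc$ recursively. Let $v_0 = \varnothing$. Given $v_{i-1} = (b_1, \dotsc, b_{i-1})$, which is a node of $T_d$ at depth $i-1$, the failure of Condition 1 at $v_{i-1}$ gives some $b_i \in \{0,1\}^d$ with $b_i \notin \restr{\operatorname{im}(A)}{E_{d, v_{i-1}}}$. To avoid invoking choice, I take $b_i$ to be the lexicographically least such vector. Since $T_d$ is the complete $2^d$-ary tree with children indexed by all of $\{0,1\}^d$, the sequence $v_i = (v_{i-1}, b_i)$ is again a node of $T_d$, so the recursion never halts. This yields an infinite sequence satisfying Condition 2 for every $i \geq 1$.

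The main point to verify is bookkeeping rather than mathematics. We need that each block $E_{d,v}$ is defined for every node and is indexed so that $c(j)$ corresponds to $x_{d,v,j}$. We also need that every vector $b \in \{0,1\}^d$ really indexes a child of $v$. Both hold by construction. If $A$ is randomized, $\operatorname{im}(A)$ should be read as the set of all predictors it can output, and the same argument applies verbatim.
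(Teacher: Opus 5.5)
Your proposal is correct and matches the paper's proof: assume Condition 1 fails, so every node $v$ has a binary pattern missing from $\restr{\operatorname{im}(A)}{E_{d,v}}$, then recursively descend to the child indexed by that missing pattern. Your extra touches (taking the lexicographically least missing vector to avoid appealing to choice, and noting that only $\restr{\operatorname{im}(A)}{E_{d,v}} \cap \{0,1\}^d$ matters) are harmless refinements that the paper leaves implicit.
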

\begin{proof}
    We prove the dichotomy by assuming Condition 1 fails and recursively following a missing $d$-bit pattern down the tree.

    If the first condition does not hold, then
    \[ \forall v \in T_d\ \exists c \in \{0, 1\}^d\ \forall f_c \in \operatorname{im}(A) \quad \restr{f_c}{E_{d, v}} \neq c. \]
    In words, for every node $v$ there exists a function $c$ such that no hypothesis in $\operatorname{im}(A)$ realizes $c$ on $E_{d, v}$. Therefore, we can start with the root (labeled as $\varnothing$) and choose a missing vector $b_1 \notin \restr{\operatorname{im}(A)}{E_{d, \varnothing}}$. After choosing $b_1, \dotsc, b_{i-1}$, we are at node $v=(b_1, \dotsc, b_{i-1})$ and by the condition above we can choose a missing vector $b_i \notin \restr{\operatorname{im}(A)}{E_{d, v}}$ and move to the next child $(b_1, \dotsc, b_i)$. This finishes the proof.
\end{proof}

In the remaining text, we show that (1) Condition 1 supplies the promised pseudocube in \cref{Subsection:techniques} and hence $\DS\bigl(\operatorname{im}(A)\bigr)=\infty$, and (2) Condition 2 contradicts the PAC guarantee of $A$ and hence is impossible.

\paragraph{Condition 1.}
First recall that a non-empty finite class $\Hc \subseteq \mathcal Y^E$ where $|E|=d$ is a \defn{pseudocube} if, for every $f \in \Hc$ and every $x \in E$, there exists $g \in \Hc$ such that $g(x) \neq f(x)$ and $g(x')=f(x')$ for every $x' \in E \setminus \{x\}$. In particular, a full binary cube is a pseudocube. Also recall that a set $E \subseteq \mathcal X$ is \defn{DS-shattered} by a class $\Hc$ if $\restr{\Hc}{E}$ contains a $|E|$-dimensional pseudocube, and the \defn{DS dimension} of $\Hc$ is the supremum of the sizes of its finite DS-shattered sets.

Now, Condition 1 says that the restrictions $\{\restr{f_c}{E_{d, v}}:c \in \{0, 1\}^d\}$ where $f_c \in \operatorname{im}(A)$ contain the full binary cube on $E_{d, v}$, and hence a $d$-dimensional pseudocube. Therefore, $\DS\bigl(\operatorname{im}(A)\bigr) \geq d$. Since $d$ is arbitrary, we conclude that $\DS\bigl(\operatorname{im}(A)\bigr)=\infty$.

\paragraph{Condition 2.}
Let $m=n_A \left(\frac{1}{4d}, \frac{1}{8d} \right)$ and $\ell=2m$.
Fix the path $b_1, \dotsc, b_\ell$ that is shown to exist by assumption. We will use the probabilistic method to show that, when $A$ is given $m$ examples, there exists a hard instance $h_u \in \Hc$ such that $A$ cannot satisfy its PAC guarantee. The crux is that we will draw $m$ public samples from a distribution supported uniformly on one public point at each of the $2m$ depths in the tree, but $m$ examples cannot cover all $2m$ depths, so at every unseen depth, membership in $\operatorname{im}(A)$ forces at least one error coordinate in the block, and the lower bound follows because a random example hits such a coordinate with probability at least $\frac{1}{d}$.

Choose independently $u_1, \dotsc, u_\ell \sim U([d])$ and consider the random target $h_u:=h_{d, (b_1, \dotsc, b_\ell), u}$. Let $D_u$ be the uniform distribution over $h_u$'s public points
\[ x_i(u):=x_{d, (b_1, \dotsc, b_{i-1}), u_i} \quad i \in [\ell]. \]
Recall that the label for $x_i(u)$ is $b_i(u_i)$, so sampling $m$ i.i.d. examples (which form the sample $S$) from $D_u$ can also be done by first sampling $m$ i.i.d. depth indices $i_1, \dotsc, i_m \sim U([\ell])$ and then returning $(x_{i_t}(u), b_{i_t}(u_{i_t}))$. We call a depth $i-1$ \defn{unseen} if $i \notin \{i_1, \dotsc, i_m\}$.

Conditional on the $m$ examples for learning which form the set $S$ and $A$'s internal randomness, the output function $g := A(S) \in \operatorname{im}(A)$ is determined. No hypothesis in $\operatorname{im}(A)$ agrees with $b_i$ on block $E_{d, (b_1, \dotsc, b_{i-1})}$. So on $E_{d, (b_1, \dotsc, b_{i-1})}$ we have $g \neq b_i$, and in particular there exists at least one $j \in [d]$ such that
\[ g(x_{d, (b_1, \dotsc, b_{i-1}), j}) \neq b_i(j). \]
Observe crucially that, if depth $i-1$ is unseen, then $u_i$ is uniformly distributed over $[d]$ conditional on the sample and $A$'s internal randomness. Therefore, for an unseen depth $i-1$, $u_i=j$ with probability at least $\frac{1}{d}$, and $g$ makes an error. Together with the fact that the depths are uniformly randomly selected according to $D_u$, we conclude that an unseen depth $i-1$ contributes to conditional expected loss at least $\frac{1}{d \ell}$.

Finally, because for each fixed $i \in [\ell]$, depth $i-1$ is unseen with probability at least $\left(1 - \frac 1\ell \right)^m$, we have
\[ \operatorname*{\mathbb E}_{u, S, A}[L_{D_u}(A(S), h_u)] \geq \ell \cdot \frac{1}{d \ell} \left(1 - \frac 1\ell \right)^m = \frac{1}{d} \left(1 - \frac{1}{2m} \right)^m \geq \frac{1}{2d} \]
by summing over the contribution from all $\ell$ depths. Hence there exists a $u$ such that $\operatorname*{\mathbb E}_{S, A}[L_{D_u}(A(S), h_u)] \geq \frac{1}{2d}$. However, if we look at the PAC learning guarantee of $A$, because $m = n_A \left(\frac{1}{4d}, \frac{1}{8d} \right)$ and $h_u \in \Hc$, the expected error is at most
\[ \frac{1}{8d}+ \left(1-\frac{1}{8d} \right) \cdot \frac{1}{4d} \leq \frac{3}{8d}< \frac{1}{2d}, \]
which is a contradiction. Thus Condition 2 is impossible.
\end{proof}

\begin{corollary}
    Our proof of \cref{thm:no-proper-completion} provides a simpler construction of a partial concept class with VC dimension 1 that cannot be disambiguated to a total concept class of finite VC-dimension: replace every private label $\alpha_{d,b,u}$ with $\star$. The original result \cite[Theorem 6]{alon2022theory} uses graphs whose chromatic number is large compared to their biclique partition number.
\end{corollary}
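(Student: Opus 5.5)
The plan is to reuse the tree construction from the proof of \cref{thm:no-proper-completion} with every private label $\alpha_{d,b,u}$ replaced by the undefined symbol $\star$. This gives a partial concept class $\Hc_\star$, where each $h_{d,b,u}$ is defined exactly on its public points $x_{d,(b_1,\dots,b_{i-1}),u_i}$, $i\in[\ell]$, with values $b_i(u_i)$. Two things need checking. First, $\VC(\Hc_\star)=1$. Second, every total class $\CH'\subseteq\{0,1\}^\CX$ that disambiguates $\Hc_\star$ (that is, every $h\in\Hc_\star$ agrees with some $h'\in\CH'$ on the support of $h$) has infinite VC dimension. For the second point the idea is to run the Condition~1 / Condition~2 dichotomy with $\operatorname{im}(A)$ replaced by $\CH'$.

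\emph{VC dimension.} One point is clearly shattered, since both public bits occur. For two distinct points $x=x_{d,v,j}$ and $x'=x_{d',w,j'}$, I split into cases according to which concepts are defined on both.
\begin{itemize}
\item If $d\neq d'$, or the nodes $v,w$ are incomparable in $T_d$, no concept is defined on both points.
\item If $v=w$, no concept is defined on both, because each concept labels only one point per block.
\item Otherwise, say $v$ is a proper prefix of $w=(b_1,\dots,b_k)$, with $v$ at depth $i-1<k$. Every concept defined on $x'$ follows a path through $w$, so it labels $x$ (if it labels $x$ at all) by $b_i(j)$, which is fixed by $w$.
\end{itemize}
In every case the patterns $(0,\cdot)$ and $(1,\cdot)$ on $x$ cannot both occur together with a defined value on $x'$, so no pair is shattered.

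\emph{No finite-VC disambiguation.} Suppose $\CH'$ disambiguates $\Hc_\star$ and $\VC(\CH')<\infty$. Let $A$ be any ERM learner for $\CH'$. By the fundamental theorem for binary classification, $A$ PAC learns every distribution realizable by $\CH'$, with a finite sample complexity $n_A(\epsilon,\delta)$, and $\operatorname{im}(A)\subseteq\CH'$. Now fix $d>\VC(\CH')$ and apply the dichotomy lemma on $T_d$ to the class $\CH'$ in place of $\operatorname{im}(A)$; its proof is purely combinatorial and goes through unchanged.
\begin{itemize}
\item Condition~1 gives a block $E_{d,v}$ of size $d$ on which $\CH'$ realizes all of $\{0,1\}^d$. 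This means $\CH'$ VC-shatters $d$ points, a contradiction.
\item Condition~2 gives an infinite path $b_1,b_2,\dots$ along which $\CH'$ misses the pattern $b_i$ on each block $E_{d,(b_1,\dots,b_{i-1})}$.
\end{itemize}

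The step needing care is ruling out Condition~2. I rerun the averaging argument with $m=n_A\bigl(\tfrac1{4d},\tfrac1{8d}\bigr)$, $\ell=2m$, random $u\in[d]^\ell$, and $D_u$ uniform on the public points of $h_{d,b,u}$. The key observation is that this argument only ever invoked the learner's guarantee on the distributions $D_u$, labeled by public bits. Each such labeled distribution is realizable by $\CH'$, because some $h'\in\CH'$ extends $h_{d,b,u}$. So $A$'s PAC guarantee applies and bounds the expected error by $\tfrac{3}{8d}$. On the other hand, every output of $A$ lies in $\CH'$ and therefore errs somewhere on each unseen block, which gives expected error at least $\tfrac{1}{2d}$. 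This contradiction rules out Condition~2, so $\VC(\CH')=\infty$ for every disambiguation.
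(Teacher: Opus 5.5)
Your proposal is correct and follows essentially the route the paper intends. The paper gives no separate proof of this corollary; it is meant to follow by checking that the $\star$-version of the tree class has VC dimension $1$, then rerunning the Condition~1/Condition~2 dichotomy against a proper learner for a putative finite-VC disambiguation. You do both, and you correctly point out the key adaptation: the hard distributions $D_u$ are supported on public points, so they are realizable by any disambiguation, and the countable domain means ERM needs no measurability caveats.
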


\subsection{Proper learning requires empirical error}
\label{subsec:poisoned}

We now restrict our attention to properly learnable classes and ask whether
interpolation is always possible.  That is, must every properly learnable class
admit a successful proper learner whose output incurs no error on the training
sample?  Our answer is negative.

The construction begins from the first Cantor class of
\citet{DS14}.  Given a finite domain \(X\), this class contains one hypothesis
\(h_A\) for each half-set \(A\subseteq X\) with $|A| = |X| / 2$: the hypothesis predicts a private label
identifying \(A\) on the points of \(A\), and predicts a common label
\(\starlabel\) everywhere else.  Thus the appearance of a private label reveals
the target immediately, whereas an all-\(\starlabel\) sample reveals only that
the sample has avoided the target's private-label region.  We use a version of this class that is augmented with a family of hypotheses that output the $\star$ label on nearly the entire domain, but are deliberately inconsistent at rare marker points.

\begin{definition}[Poisoned first-Cantor class]
\label{def:poisoned-first-cantor}
For every \(n\ge1\), let
\[
    X_n=C_n\sqcup P_n,
    \qquad
    P_n=\{p_{n,1},\ldots,p_{n,n}\},
    \qquad
    |C_n|=2n^3+n.
\]
Thus
\[
    |X_n|=2n^3+2n,
    \qquad
    a_n:=n^3+n=\frac{|X_n|}{2}.
\]
Set the final domain to the disjoint union of these blocks, i.e., $\CX^{\mathrm P}:=\bigsqcup_{n\ge1}X_n$.

The label set contains two common labels \(\starlabel\) and \(\dollarlabel\).
For every \(A\subseteq X_n\) with \(|A|=a_n\), introduce a private label
\(\lambda_{n,A}\), and for every \(i\in[n]\), introduce a poison label
\(\rho_{n,i}\).  All private and poison labels are distinct.

For every
\(A\subseteq X_n\) with \(|A|=a_n\), define the Cantor hypothesis
\[
    c_{n,A}(x)
    =
    \begin{cases}
        \starlabel, & x\in A,\\
        \lambda_{n,A}, & x\in X_n\setminus A,\\
        \dollarlabel, & x\notin X_n.
    \end{cases}
\]
For every \(i\in[n]\), define the poisoned fallback
\[
    s_{n,i}(x)
    =
    \begin{cases}
        \rho_{n,i}, & x=p_{n,i},\\
        \starlabel, & x\in X_n\setminus\{p_{n,i}\},\\
        \dollarlabel, & x\notin X_n.
    \end{cases}
\]
Finally, let \(h_{\dollarlabel}\) denote the all-\(\dollarlabel\) hypothesis.  
The \edefn{poisoned first-Cantor class} is
\[
    \Hpoison
    :=
    \{h_{\dollarlabel}\}
    \cup
    \left\{
        c_{n,A}:
        n\ge1,\ A\subseteq X_n,\ |A|=a_n
    \right\}
    \cup
    \left\{
        s_{n,i}:
        n\ge1,\ i\in[n]
    \right\}.
\]
\end{definition}

Note that the poisoned hypotheses \(s_{n,i}\) are almost constantly
\(\starlabel\), and are therefore natural fallbacks when an all-\(\starlabel\)
sample leaves the target unresolved. Their isolated errors at the marker
points \(p_{n,i}\), however, can make them unavailable to an interpolating learner. This forms the central idea behind the following theorem.

\begin{theorem}
\label{thm:no-erm}
The class \(\Hpoison\) is countable, closed, and has pointwise finite range.
It is furthermore properly PAC learnable, but cannot be learned by any interpolating
proper learner.
\end{theorem}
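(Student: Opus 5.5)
The plan is to treat the three parts of \cref{thm:no-erm} in order: the topological properties, a proper learner, and the lower bound against interpolating proper learners.

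\textbf{Topological properties.} Countability is immediate, since each block $X_n$ is finite and contributes finitely many Cantor and poisoned hypotheses. For pointwise finite range, fix $x\in X_n$. Then $\Hpoison(x)$ consists of $\starlabel$, $\dollarlabel$, the finitely many labels $\lambda_{n,A}$, and at most one poison label, so it is finite. For closedness I would use \cref{lem:closed-fpp}. Let $f$ be finitely approximable by $\Hpoison$.
\begin{itemize}
\item If $f\equiv\dollarlabel$, then $f=h_{\dollarlabel}$.
\item Otherwise $f(x)\neq\dollarlabel$ for some $x\in X_n$. Every approximant $h_I$ with $x\in I$ then belongs to the finite set $\CH_n$ of hypotheses active on block $n$. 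Enlarging $I$ along a chain of finite sets and applying pigeonhole over the finite set $\CH_n$, a single $h\in\CH_n$ agrees with $f$ on arbitrarily large finite sets, hence everywhere.
\end{itemize}

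\textbf{Proper learner.} Given $S$, the learner proceeds as follows.
\begin{enumerate}
\item If a private or poison label appears, it outputs the unique hypothesis it identifies.
\item If all labels are $\dollarlabel$, it outputs $h_{\dollarlabel}$. This fails only if the target's active block has mass $>\epsilon$ yet was missed, which has probability at most $(1-\epsilon)^m$.
\item Otherwise a $\starlabel$ appears in a unique block $n$, and:
  \begin{itemize}
  \item if $n\le m^{1/4}$, it outputs any consistent hypothesis in the finite class $\CH_n$, of size at most $2^{O(n^3)}$, and the standard finite-class ERM bound applies because $\log|\CH_n|=O(m^{3/4})=o(m)$;
  \item if $n>m^{1/4}$, it outputs $s_{n,i^*}$ with $i^*$ a least-frequent marker in $S$, ties broken by index.
  \end{itemize}
\end{enumerate}

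In the last case I would bound the error of $s_{n,i^*}$ against a target $c_{n,A}$ or $s_{n,j}$ by $D(X_n\setminus A)$, or by the mass of the unseen marker $p_{n,j}$, plus $D(p_{n,i^*})$.
\begin{itemize}
\item The first term is controlled because an all-$\starlabel$ sample missed that region: if its mass exceeds $\epsilon/2$, this happens with probability at most $e^{-m\epsilon/2}$.
\item For the second term, some marker $q$ has mass at most $1/n<m^{-1/4}$. At most $2/\epsilon$ markers have mass $>\epsilon/2$, so by Chernoff and a union bound each of them receives count $\ge m\epsilon/4$. Meanwhile $q$ receives count $<m\epsilon/4$ with high probability once $m$ is large.
\item Hence a marker of mass $>\epsilon/2$ is never the minimum count, so $D(p_{n,i^*})\le\epsilon/2$. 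When the target is $s_{n,j}$, the marker $p_{n,j}$ has count $0$ and the same tie analysis applies.
\end{itemize}
I expect this comparison-of-counts step, made uniform over all $n>m^{1/4}$ and all distributions, to be the main technical obstacle; the split at $m^{1/4}$ is exactly what makes it work.

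\textbf{Lower bound.} Suppose an interpolating proper learner $B$ has sample complexity $m=n_B(\epsilon_0,\delta_0)$ for small constants. Choose $n$ with $4n\log n\le m$, taking $m$ larger if necessary, so that $n^3\gg m$. Draw $A\supseteq P_n$ uniformly among half-sets of $X_n$, and let $D$ place mass $1/4$ uniformly on $P_n$ and mass $3/4$ uniformly on $A\setminus P_n$. With high probability every marker appears in $S$ labeled $\starlabel$, which rules out every $s_{n,i}$; $h_{\dollarlabel}$ is ruled out as well. So $B$ must output some $c_{n,A'}$ whose $A'$ contains the at most $m$ observed points. Conditional on $S$, the set $A\setminus(P_n\cup S)$ is a uniformly random subset of the unobserved points, of size $\approx n^3$, and $A'$ is independent of it. Therefore in expectation about half of $A\setminus P_n$ falls outside $A'$, giving expected error about $\tfrac34\cdot\tfrac12$. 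Averaging then yields a fixed $A$ with constant expected error, contradicting the PAC guarantee.
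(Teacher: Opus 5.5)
Your proposal is correct and follows the paper's proof essentially step for step. You use the same learner: reveal the target from a private or poison label, output $h_{\dollarlabel}$ on all-$\dollarlabel$ samples, interpolate within a finite class on small blocks, and fall back to a least-observed marker on large blocks. You also use the same lower-bound distribution (mass $1/4$ on $P_n$, $3/4$ on a hidden uniform $T\subseteq C_n$) with the same posterior-symmetry argument. Your pigeonhole proof of closedness, the cutoff $m^{1/4}$ with the finite class $\CH_n$, the inline Chernoff comparison of counts, and the expectation bound in place of hypergeometric concentration are all valid minor variants.

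One constant needs fixing. The condition $4n\log n\le m$ only gives $n e^{-m/(4n)}\le 1$, so it does not make "every marker appears" hold with high probability. Take, e.g., $m\ge 8n\log n$ or the paper's $n=\lfloor m/(20\log m)\rfloor$; either choice still leaves $n^3\gg m$.
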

\begin{proof}
We establish the four claims in turn.
\vspace{-0.4 cm}
\paragraph{Proper learnability.}
For a sample size \(m\), put
\[
    K_m:=\lfloor m^{1/5}\rfloor,
\]
and let \(\Hpoison^{\le K_m}\) consist of \(h_{\dollarlabel}\) together with
all hypotheses supported on blocks \(X_n\) for which \(n\le K_m\).

The learner proceeds as follows.  If every observed label is
\(\dollarlabel\), it outputs \(h_{\dollarlabel}\).  Otherwise, all
non-\(\dollarlabel\) labels occur in a unique block \(X_n\).  If a private
Cantor label \(\lambda_{n,A}\) appears, the learner outputs \(c_{n,A}\); if a
poison label \(\rho_{n,i}\) appears, it outputs \(s_{n,i}\).  The only
remaining case is that the sample contains non-\(\dollarlabel\) points from
\(X_n\), all labeled \(\starlabel\).  If \(n\le K_m\), the learner outputs an
arbitrary member of \(\Hpoison^{\le K_m}\) consistent with the entire sample.
If \(n>K_m\), it chooses a least-observed marker
\[
    \hat i
    \in
    \argmin_{i\in[n]}
    \#\{(x,y)\in S:x=p_{n,i}\}
\]
and outputs \(s_{n,\hat i}\).

The small-block rule is uniformly sound.  Indeed,
\[
    \log|\Hpoison^{\le K_m}|
    =
    O\!\left(
        \sum_{n\le K_m}|X_n|
    \right)
    =
    O(K_m^4)
    =
    O(m^{4/5})
    =
    o(m).
\]
Hence the standard finite-class realizable bound implies that every consistent
choice from \(\Hpoison^{\le K_m}\) has vanishing population error, uniformly
over the target and marginal distribution.

Consider now a large-block Cantor target \(c_{n,A}\), where \(n>K_m\), and
write
\[
    Q:=X_n\setminus A
\]
for its private-label region.  If the sample meets \(Q\), the label
\(\lambda_{n,A}\) reveals the target exactly.  If the sample misses \(Q\),
then for every \(\epsilon>0\),
\[
    \P\!\left[
        D(Q)>\frac{\epsilon}{4}
        \ \text{and}\
        S\cap Q=\varnothing
    \right]
    \le
    e^{-m\epsilon/4}.
\]
Likewise, if the sample misses the active block \(X_n\) entirely, the learner
outputs \(h_{\dollarlabel}\); this can incur error greater than \(\epsilon\)
only when \(D(X_n)>\epsilon\), in which case the probability of missing
\(X_n\) is at most \(e^{-m\epsilon}\).

In the remaining case the learner outputs \(s_{n,\hat i}\), and its
disagreement with the target is contained in $\{x:s_{n,\hat i}(x)\neq c_{n,A}(x)\} \subseteq Q\cup\{p_{n,\hat i}\}$.
The least-observed-marker lemma, proved in
\Cref{lem:least-observed-marker}, asserts that among a growing family of
distinguished atoms, a marker of minimum empirical frequency has small true
mass with high probability.  Since \(n>K_m\to\infty\), it follows that $ D(p_{n,\hat i})\le \frac{\epsilon}{4}$
with probability tending to one, uniformly over the marker marginal.
Together with the preceding missed-mass bounds, this proves that the output
has population error at most \(\epsilon\) with high probability.

Fallback targets are handled similarly.  Fix \(s_{n,j}\).  If the marker
\(p_{n,j}\) is sampled, the poison label \(\rho_{n,j}\) reveals the target
exactly.  If it is missed, then
\[
    \P\!\left[
        D(p_{n,j})>\frac{\epsilon}{4}
        \ \text{and}\
        p_{n,j}\notin S
    \right]
    \le
    e^{-m\epsilon/4}.
\]
If the active block is observed but the poison label is not, the learner
outputs \(s_{n,\hat i}\), whose disagreement with \(s_{n,j}\) is contained in $\{x:s_{n,\hat i}(x)\neq s_{n,j}(x)\} \subseteq \{p_{n,\hat i},p_{n,j}\}$.
The first mass is controlled by the least-observed-marker lemma and the second
by the preceding missed-mass estimate.  The all-\(\dollarlabel\) target is
learned exactly.  This proves distribution-free proper learnability.

\vspace{-0.4 cm}
\paragraph{Countability and pointwise finite range.}
Each block \(X_n\) is finite and supports only finitely many hypotheses, while
there are countably many blocks.  Hence \(\Hpoison\) is countable. To see that it has pointwise finite range, fix an $x \in X_n$. Each hypothesis supported on a different block takes the
value \(\dollarlabel\) at \(x\), and only finitely many hypotheses are
supported on \(X_n\).  Thus $|\Hpoison(x)|<\infty$.

\vspace{-0.4 cm}
\paragraph{Closedness.}
We use the finite-projection characterization of closedness from
\Cref{lem:closed-fpp}.  Let \(f:\CX^{\mathrm P}\to\CY\) be finitely
consistent with \(\Hpoison\).  If \(f\equiv\dollarlabel\), then
\(f=h_{\dollarlabel}\). Otherwise, choose \(x\in X_n\) with \(f(x)\neq\dollarlabel\).  We claim that
\(f\) is identically \(\dollarlabel\) outside \(X_n\).  Indeed, if
\(y\in X_{n'}\), \(n'\neq n\), also satisfied
\(f(y)\neq\dollarlabel\), then no hypothesis in \(\Hpoison\) could realize the
two-point restriction of \(f\) to \(\{x,y\}\), since every nontrivial
hypothesis is supported on a single block.  This would contradict finite
consistency.
The block \(X_n\) is finite, so finite consistency applied to all of \(X_n\)
yields some \(h\in\Hpoison\) satisfying $\restr{h}{X_n} = \restr{f}{X_n}$.
Since \(f(x)\neq\dollarlabel\), this hypothesis is supported on \(X_n\).
Both \(f\) and \(h\) equal \(\dollarlabel\) outside \(X_n\), and hence
\(f=h\in\Hpoison\).

\vspace{-0.4 cm}
\paragraph{Failure of every interpolating proper learner.}
Let \(B_m\) be an arbitrary deterministic interpolating proper learner.  Fix a
large sample size \(m\), and choose
\[
    n
    :=
    \left\lfloor
        \frac{m}{20\log m}
    \right\rfloor,
    \qquad
    k:=n^3.
\]
Choose \(T\subseteq C_n\) uniformly among all \(k\)-subsets, and put $A:=P_n\cup T$. Since $|A|=n+k=a_n$,
the Cantor hypothesis \(c_{n,A}\) belongs to \(\Hpoison\).

Now let \(D_A\) place mass \(1/4\) uniformly on \(P_n\) and mass \(3/4\) uniformly on \(T\).  Take \(c_{n,A}\) as the true labeling function.  Each
point in the support of \(D_A\) lies in \(A\), and is therefore labeled
\(\starlabel\). Furthermore, each marker has mass \(1/(4n)\), so a union bound gives
\[
    \P[\text{some marker in }P_n\text{ is missed}]
    \le
    n\left(1-\frac1{4n}\right)^m
    \le
    n\exp\!\left(-\frac{m}{4n}\right)
    =
    o(1).
\]
On the complementary event, every poisoned fallback \(s_{n,i}\) is
inconsistent, since the sample contains \(p_{n,i}\) with label
\(\starlabel\), whereas \(s_{n,i}(p_{n,i})=\rho_{n,i}\).  The hypothesis
\(h_{\dollarlabel}\), all hypotheses supported on other blocks, and all
private-label hypotheses whose \(\starlabel\)-regions do not contain the
sample are likewise inconsistent.  Thus the learner must output a Cantor
hypothesis \(c_{n,B}\).

Let \(R\subseteq T\) be the set of distinct sampled core points.  Consistency
forces $P_n\cup R\subseteq B$.
Since \(|B|=n+k\) and \(P_n\subseteq B\), the set $B_C:=B\cap C_n$
has size \(k\) and contains \(R\).
Now condition on the complete observed sample and on the learner's internal
choice of \(B_C\).  Given this information, the posterior distribution of
\(T\) is uniform over all \(k\)-subsets of \(C_n\) containing \(R\). Moreover,
\[
    |R|\le m=o(k),
    \qquad
    |C_n|=2k+n=(2+o(1))k.
\]
Thus \(T\setminus R\) is a uniformly random \((k-|R|)\)-subset of
\(C_n\setminus R\), while \(B_C\setminus R\) is a fixed set of the same
cardinality.  A standard hypergeometric concentration bound therefore yields
\[
    \P\!\left[
        |T\setminus B_C|\ge \frac{k}{5}
        \,\middle|\,
        S,B_C
    \right]
    =
    1-o(1).
\]
On this event,
\[
    \pop_{D_A}(c_{n,B},c_{n,A})
    \ge
    \frac34
    \frac{|T\setminus B_C|}{|T|}
    \ge
    \frac{3}{20}.
\]

Averaging over the random choice of \(T\), we conclude that there is a fixed
\(k\)-subset \(T\subseteq C_n\) for which \(B_m\) has population error at
least \(3/20\) with probability bounded away from zero.  Since the
construction applies for arbitrarily large \(m\), the learner \(B\) is not
PAC.

The same argument rules out randomized interpolating proper learners: one
simply includes the learner's internal randomness in the averaging and then
fixes a target \(T\) witnessing the resulting constant failure probability.
\end{proof}

We now observe an immediate consequence of \cref{thm:no-erm}: hard SRM learners, which select interpolating hypotheses using an a-priori inductive bias, are in general insufficiently expressive for properly learnable problems.

\begin{corollary}
\label{cor:hard-srm-not-characterize}
There exists a properly learnable hypothesis class, namely $\Hpoison$, that cannot be learned by hard SRM.
\end{corollary}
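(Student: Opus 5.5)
The corollary follows directly from \cref{thm:no-erm}, once we note that every hard-SRM learner is an interpolating proper learner. Let $\psi:\Hpoison\to\R_{\ge 0}$ be any regularizer and let $A$ be any learner induced by $\psi$ as a hard-SRM learner, so $A$ satisfies \cref{Equation:1}. For every realizable sample $S$, the version space $\vs(S)$ contains the target and so is nonempty. Since $A(S)$ is chosen from $\argmin\{\psi(h):h\in\vs(S)\}$, we get $A(S)\in\vs(S)\subseteq\Hpoison$. Hence $A$ is proper and has zero empirical error on every realizable sample, i.e., it is an interpolating proper learner in the sense of \cref{Section:Preliminaries}.

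We then invoke the last clause of \cref{thm:no-erm}: no interpolating proper learner PAC learns $\Hpoison$, and the argument there covers arbitrary deterministic (and even randomized) learners. So the particular $A$ fixed above is not a PAC learner for $\Hpoison$. By definition, $\psi$ learns $\Hpoison$ only if \emph{every} induced hard-SRM learner is PAC, so $\psi$ fails. Since $\psi$ was arbitrary, no regularizer learns $\Hpoison$ by hard SRM. The conclusion does not even depend on the worst-case tie-breaking convention, because every induced learner fails, not just some.

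The other half of the corollary, that $\Hpoison$ is properly learnable, is already part of \cref{thm:no-erm}. The only point needing care is the argmin in \cref{Equation:1}. It ranges over a set that may be infinite, so the minimum might not be attained. This is harmless: the definition takes a learner satisfying \cref{Equation:1} as given, and any such learner outputs a consistent hypothesis. If no minimizer exists for some sample, then $\psi$ induces no learner and so certainly does not learn $\Hpoison$. In fact $\Hpoison$ is countable and each $\vs(S)$ is a subset of it, but no attainment argument is needed either way. I expect no genuine obstacle; this step is just bookkeeping.
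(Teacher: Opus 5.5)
Your proposal is correct and follows the paper's own proof: every hard-SRM learner is an interpolating proper learner, so \cref{thm:no-erm} rules all of them out while $\Hpoison$ remains properly learnable. Your remark on unattained minima matches the convention the paper states explicitly in the proof of \cref{thm:weighted-srm-fails}, namely that an objective with an empty argmin is already disqualified.
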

\begin{proof}
Every hard-SRM learner is an interpolating proper learner, whereas
\(\Hpoison\) is properly learnable but admits no successful interpolating
proper learner by \Cref{thm:no-erm}.
\end{proof}

Let us briefly make two observations concerning \cref{thm:no-erm}. First, note that the presence of only a single poisoned fallback would not suffice. In order to force every interpolating learner to reject the fallback with high probability, its poisoned point would need appreciable population mass. This would immediately imply, however, that the fallback is no longer statistically safe at test time. In contrast, a collection of $\omega(1)$ many poisoned fallbacks ensures that the entire collection of markers can carry constant mass, while nevertheless containing some markers that are successful at test time. (In particular, the ones whose poisoned points are least likely.) 

Second, we note that \cref{thm:no-erm} separates proper learning from interpolation, but does not by itself rule out weighted SRM. In particular, weighted SRM can express improper learners by selecting a preferred hypothesis with positive training error. We will, however, rule out weighted SRM in \cref{Section:Limits-of-Regularization} using a separate construction.

\subsection{The exact scale of empirical noninterpolation}\label{subsec:tight-empirical}

We next establish a sharp characterization concerning how many training errors can be required in proper learning: a sublinear rate $o(m)$ can be attained for any properly learnable problem, and conversely every sublinear rate $a_m = o(m)$ can be made necessary for some properly learnable problem.

\begin{theorem}\label{thm:empirical-upper}
Let $A$ be a proper PAC learner for a class $\mathcal{H}$ with a sample complexity $n_A(\epsilon,\delta)$. Then there exists a proper PAC learner $\widetilde A$ for $\mathcal{H}$ and a sequence $\gamma_m\to0$ such that for every realizable sample $S$ of size $m$,
\[
\emp_S(\widetilde A_m(S))\le\gamma_m.
\]
Moreover, one may take $n_{\widetilde A}(\epsilon,\delta) \le R(\epsilon,\delta)\, n_A\!\left(R(\epsilon,\delta)^{-2},R(\epsilon,\delta)^{-1}\right)$,
where $R(\epsilon,\delta) = \left\lceil\max\set{2,\epsilon^{-1/2},2/\delta}\right\rceil$.
\end{theorem}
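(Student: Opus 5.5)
We will build $\widetilde A$ by a train/validate/fallback procedure run along a slowly growing diagonal index. For each $m$ we choose an integer $r=r_m\ge 2$, nondecreasing and tending to infinity slowly (to be fixed at the end). Given a realizable sample $S$ of size $m$, split it into a prefix $S_1$ of length $m_1:=\lfloor m/r\rfloor$ and a suffix $S_2$ of length $m-m_1$. Let $h:=A(S_1)\in\Hc$. If $\emp_{S_2}(h)\le 1/r$, output $h$; otherwise output any hypothesis in $\vs(S)$, which is nonempty because $S$ is realizable. In both cases the output lies in $\Hc$, so $\widetilde A$ is proper.

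The empirical-error bound is deterministic. In the fallback case the error is $0$. In the validated case, $h$ can err on all of $S_1$ and on at most $(m-m_1)/r$ points of $S_2$, so
\[
\emp_S(h)\le \frac{m_1}{m}+\frac1r\le \frac2r .
\]
Setting $\gamma_m:=2/r_m$ gives $\gamma_m\to0$ as soon as $r_m\to\infty$, for every realizable $S$.

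For PAC learnability, fix $\epsilon,\delta$, a target, and a marginal $D$. Set $R=R(\epsilon,\delta)$ and $m\ge R\,n_A(R^{-2},R^{-1})$, and suppose $r_m\ge R$ (the diagonal choice below guarantees this). Three bad events need to be controlled.
\begin{enumerate}
\item $A(S_1)$ has population error above $R^{-2}$. This requires $m_1\ge n_A(R^{-2},R^{-1})$, and then the event has probability at most $R^{-1}\le\delta/2$.
\item $A(S_1)$ is good but fails validation. Conditionally on $S_1$, $S_2$ is i.i.d. and independent of $S_1$. An error rate $\le R^{-2}$ has expected empirical error $\le R^{-2}$ on $S_2$, while the threshold is $1/r\ge$ ... which needs care, since with $r\ge R$ we have $1/r\le 1/R$. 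Markov's inequality gives failure probability at most $R^{-2}\cdot r\le R^{-1}$, provided $r\le R$. So we want $r$ comparable to $R$, or else the threshold should be tied to $R$ rather than $r$.
\item $h$ passes validation yet has population error above $\epsilon$. By Chernoff, when $1/r<\epsilon/2$ (using $\epsilon\ge R^{-2}$, hence $1/R\le\epsilon^{1/2}$), this has probability $\exp(-\Omega(\epsilon\, m/2))$. In the fallback case the output is an arbitrary consistent hypothesis, which need not generalize.
\end{enumerate}

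So the argument must ensure that fallback is rare, and must use the slack between $R^{-2}$ and $1/R$. The main obstacle is therefore choosing $r_m$ so that all three requirements hold simultaneously for every fixed $(\epsilon,\delta)$ and all large $m$. The plan is to define $r_m$ as the largest $r\le m$ with
\[
m\ge r\cdot n_A\!\left(r^{-2},r^{-1}\right),
\]
where $n_A$ may be taken monotone without loss of generality. Then for $m\ge R\,n_A(R^{-2},R^{-1})$ we get $r_m\ge R$, and requirement 1 holds at level $r_m$ with failure $1/r_m\le\delta/2$.

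Requirement 2 is then applied at level $r_m$: with error $\le r_m^{-2}$ and threshold $1/r_m$, Markov gives failure at most $1/r_m\le\delta/2$.

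Requirement 3 is handled by the validation bound. A passing hypothesis with population error $\ge\epsilon\ge 2/r_m$ has empirical error above $1/r_m$ except with probability $e^{-\Omega(\epsilon m)}$. Since $\epsilon\ge R^{-2}\ge 4/r_m^{2}$, I would tighten the constraint to $\epsilon^{-1/2}\le R\le r_m/2$ so that $1/r_m\le\epsilon/2$. The remaining work is the constant bookkeeping in this step, and it may force a slightly different constant inside $R$.
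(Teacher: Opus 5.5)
Your construction and most of your analysis match the paper's proof. You train $A$ on a prefix, validate on the suffix with threshold $1/r_m$, fall back to any consistent hypothesis, and take $r_m$ to be the largest $r$ with $r\,n_A(r^{-2},r^{-1})\le m$. The paper trains on exactly $k_m=n_A(r_m^{-2},r_m^{-1})$ points. Your prefix $\lfloor m/r_m\rfloor$ also works, because $n_A$ is an integer and so $\lfloor m/r_m\rfloor\ge n_A(r_m^{-2},r_m^{-1})$. Monotonicity of $n_A$ is never needed, since $r_m\ge R$ follows from maximality alone. Your bound $\emp_S(\widetilde A_m(S))\le 2/r_m$ and your Markov treatment of events 1 and 2 at level $r_m$ are exactly the paper's.

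The gap is requirement 3, which you single out as the main obstacle: as written it is mishandled, and it is in fact unnecessary. The step ``$\epsilon^{-1/2}\le R\le r_m/2$, so $1/r_m\le\epsilon/2$'' is false; it only gives $1/r_m\le\sqrt{\epsilon}/2$. Using Chernoff to make the validation threshold certify accuracy $\epsilon$ would need $r_m\gtrsim 2/\epsilon$. That means $R$ of order $1/\epsilon$ rather than $\epsilon^{-1/2}$, which changes the claimed sample-complexity bound by far more than a constant. The extra failure term $e^{-\Omega(\epsilon m)}$ also would not fit in a budget already spent as $\delta/2+\delta/2$.

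The missing observation is that event 3 is contained in event 1, so validation never needs to certify generalization. Off the union of events 1 and 2, the prefix hypothesis $h$ has $\pop_D(h,h^\star)\le r_m^{-2}$ and is accepted. The output is then $h$ itself, with error at most $r_m^{-2}\le R^{-2}\le\epsilon$. An accepted hypothesis with large error can only occur inside event 1. Hence $\P[\pop_D(\widetilde A_m(S),h^\star)>\epsilon]\le 2/r_m\le 2/R\le\delta$, with no Chernoff step. The validation threshold is used only for the empirical-error bound. The $\epsilon^{-1/2}$ term in $R$ is there precisely to guarantee $r_m^{-2}\le\epsilon$. Deleting requirement 3 turns your proposal into the paper's proof with the stated $R(\epsilon,\delta)$.
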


\begin{proof}
For each $m$, let $r_m$ be the largest integer $r\ge2$ satisfying
\[
r\,n_A(r^{-2},r^{-1})\le m,
\]
if such an integer exists. Put $k_m=n_A(r_m^{-2},r_m^{-1})$. On an $m$-sample $S$, train $A$ on the first $k_m$ examples, obtaining $h$. If the empirical error of $h$ on the remaining $m-k_m$ examples is at most $r_m^{-1}$, output $h$; otherwise output an arbitrary hypothesis in $\Hc$ consistent with the full sample. On the finitely many $m$ for which $r_m$ is undefined, output a full-sample consistent hypothesis.

The learner is proper. On every realizable sample, the fallback has empirical error zero. If $h$ is accepted, it makes at most $k_m$ mistakes on the training prefix and at most $(m-k_m)/r_m$ mistakes on the validation suffix. Since $k_m/m\le1/r_m$,
\[
\emp_S(\widetilde A_m(S))\le\frac2{r_m}.
\]
For every fixed $r$, the defining inequality eventually holds, so $r_m\to\infty$ and we may take $\gamma_m=2/r_m$.

For the PAC guarantee, fix a target and marginal. With probability at least $1-r_m^{-1}$, the prefix learner has true error at most $r_m^{-2}$. Conditional on such an output, the validation suffix is independent and its expected empirical error is at most $r_m^{-2}$; Markov's inequality therefore bounds the probability of rejection by $r_m^{-1}$. Outside these two events, $\widetilde A_m$ accepts a hypothesis of true error at most $r_m^{-2}$. Thus
\[
\P\brac{\pop_D(\widetilde A_m(S),h^\star)>r_m^{-2}}
\le\frac2{r_m}.
\]
If $m\ge Rn_A(R^{-2},R^{-1})$, then $r_m\ge R$, yielding the displayed sample-complexity bound.
\end{proof}

\begin{theorem}\label{thm:empirical-lower}
Let $(a_m)_{m\ge1}$ be any nonnegative sequence with $a_m=o(m)$. There exists a countable, closed, pointwise finite-range, properly learnable class $\Hc_a$ and a universal constant $c>0$ such that every proper PAC learner $A$ for $\Hc_a$ satisfies the following: for all sufficiently large $m$, there is a realizable pair $(D_m,h_m^\star)$ for which
\[
\P_{S\sim D_m^m}\brac{m\emp_S(A_m(S))\ge c a_m}
\ge c.
\]
\end{theorem}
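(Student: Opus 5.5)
We will build $\Hc_a$ as a calibrated version of the poisoned first-Cantor class from \cref{def:poisoned-first-cantor}. As in the techniques overview, we use one block per scale $m$: a block $X_m = C_m \sqcup P_m$ with $|C_m| = \Theta(m^3)$ and a marker set $P_m = \{p_{m,1},\ldots,p_{m,r_m}\}$. Cantor hypotheses $c_{m,A}$ range over half-sets $A \supseteq P_m$, and each marker carries a poisoned fallback $s_{m,i}$. The number of markers is
\[
r_m := \max\Bigl\{2,\ \min\bigl\{\lfloor e^{a_m/C}\rfloor,\ \lfloor\sqrt{m/\max(a_m,1)}\rfloor\bigr\}\Bigr\},
\]
for a large constant $C$, and $a_m$ is replaced by $\max(a_m,1)$ to avoid degenerate cases. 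We use only scales with $r_m \to \infty$ slowly, passing to a subsequence if needed. Countability, closedness and pointwise finite range go through verbatim as in the proof of \cref{thm:no-erm}, since each block is finite and every nontrivial hypothesis is supported on a single block.

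\textbf{Proper learnability.} We reuse the learner from \cref{thm:no-erm}. On an all-$\starlabel$ sample in a large block, it outputs a least-observed-marker fallback $s_{n,\hat i}$; a finite-class ERM handles the small blocks. The argument needs only that the number of markers in the active block tends to infinity with the block index, which we enforce by choosing $r_n \ge \log n$, say by taking the maximum with $\lceil\log n\rceil$; this leaves the lower bound unaffected up to constants. \Cref{lem:least-observed-marker} then controls the mass of the least-observed marker exactly as before.

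\textbf{Lower bound.} Fix large $m$ and the block $X_m$. Draw $T \subseteq C_m$ uniformly among sets of the right size, set $A = P_m \cup T$, and take target $c_{m,A}$. Let $D$ put mass $\beta := K a_m/m$ on each marker, for a constant $K$. The total marker mass is $r_m\beta \le K\sqrt{a_m/m} = o(1)$, and the remaining mass is spread uniformly on $T$. The argument has three steps.
\begin{enumerate}
\item \emph{Every fallback is costly.} Each marker's count is $\mathrm{Bin}(m,\beta)$ with mean $Ka_m$. By independence-type (negative association) bounds, $\P[\text{some count}\le Ka_m/2]\le r_m e^{-cKa_m}$, and this is $o(1)$ once $r_m\le e^{a_m/C}$ with $C$ large relative to $1/(cK)$. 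Hence with probability $1-o(1)$, every fallback $s_{m,i}$ has empirical error at least $Ka_m/(2m)$.
\item \emph{Cantor hypotheses fail at test time.} Any Cantor hypothesis $c_{m,B}$ that the learner outputs is consistent only if $B \supseteq$ the sampled points. As in \cref{thm:no-erm}, the posterior of $T$ given the sample is uniform over sets containing the observed core points, so $c_{m,B}$ errs on a constant fraction of $T$ with high probability. This gives population error $\Omega(1)$, and this also covers inconsistent choices of $B$. The hypothesis $h_{\dollarlabel}$ and hypotheses supported on other blocks are likewise bad, with error close to $1$.
\item \emph{Conclusion.} A PAC learner must output a fallback with probability at least $1/2$ for large $m$ (averaging over $T$, then fixing a good $T$). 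On that event, by the first step, $m\emp_S \ge Ka_m/2$. Taking $c = \min(K/2, 1/3)$ completes the proof.
\end{enumerate}

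\textbf{Main obstacle.} The delicate step is the calibration in the first step. We need all $r_m$ marker counts to stay $\Omega(a_m)$ simultaneously, which forces $r_m \le e^{\Theta(a_m)}$. At the same time, total marker mass must be small and $r_m$ must tend to infinity so that proper learnability survives.

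When $a_m$ is bounded or grows very slowly, these constraints compete. There we will instead argue with constant $a_m$: if $a_m = O(1)$, it suffices that some single marker is sampled, and a fixed marker of mass $1/m$ is hit with constant probability. The case split between $a_m \ge C\log\log m$ and small $a_m$ would be handled separately, choosing $r_m$ to be a slowly growing function in the small case.

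The uniformity over every proper PAC learner, including randomized ones, follows as in \cref{thm:no-erm} by averaging over the learner's internal randomness.
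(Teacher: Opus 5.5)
Your architecture is the paper's: one block per scale, markers inside every Cantor half-set, marker mass plus uniform mass on a hidden core, the hidden-halfset step, and averaging over $T$ to force a fallback. Your calibration also works when $a_m\ge C\log\log m$. The gap is the one you flag as the main obstacle, and neither of your patches closes it.

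\emph{The $\lceil\log n\rceil$ floor does not leave the lower bound unaffected.} Suppose $a_m=O(1)$ and each of $\lceil\log m\rceil$ markers has mass $Ka_m/m$. Then each marker is missed with probability bounded away from zero, so with high probability some marker is unsampled. The least-observed-marker learner, which is a legitimate proper PAC learner for this class, then outputs that marker's fallback, and that fallback interpolates.

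\emph{Your small-case remedy fails for the same reason.} The claim that ``it suffices that some single marker is sampled'' is wrong, because the learner chooses which fallback to emit. You need \emph{every} marker sampled. With a diverging number of markers, this forces the per-marker expected count to diverge, roughly at least the logarithm of the number of markers.

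\emph{The floor also breaks the opposite end.} The minimum of $N$ counts summing to $m$ is at most $m/N$. So a block with $N$ markers can never force more than $m/N$ training errors on the least-observed-marker learner, and you need $N_m\lesssim m/a_m$. For $a_m=m/\log\log m$ your floor gives $\lceil\log m\rceil$ markers, and your distribution would need total marker mass $K\lceil\log m\rceil/\log\log m>1$.

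\emph{Dropping the floor does not work either.} With per-marker mass $Ka_m/m$ and bounded $a_m$, your formula gives boundedly many markers in infinitely many huge blocks, and the class is no longer properly learnable.

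\emph{You cannot pass to a subsequence of scales.} The conclusion must hold for all sufficiently large $m$.

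\emph{The missing idea.} You are proving a \emph{lower} bound on training errors, so you can overshoot $a_m$ instead of splitting into cases. Set $\tilde a_m:=\max\{\lceil a_m\rceil,\lceil\log\log(m+e^e)\rceil\}$; this still satisfies $\tilde a_m\to\infty$ and $\tilde a_m=o(m)$. Give block $m$ exactly $N_m=\max\{1,\lfloor\min\{e^{\tilde a_m/32},\sqrt{m/\tilde a_m}\}\rfloor\}$ markers, and put mass $\tilde a_m/m$ on each. Then, for every $m$ simultaneously:
\begin{enumerate}
\item $N_m\to\infty$, so \cref{lem:least-observed-marker} gives proper learnability.
\item $N_me^{-\tilde a_m/8}\le e^{-3\tilde a_m/32}\to0$, so by Chernoff and a union bound every marker is seen at least $\tilde a_m/2$ times.
\item $N_m\tilde a_m/m\le\sqrt{\tilde a_m/m}\to0$, so the marker mass is negligible and the hidden-halfset step still applies.
\end{enumerate}
On the event that the learner outputs a fallback, $m\emp_S\ge\tilde a_m/2\ge a_m/2$. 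This is the paper's parameter choice, and with it the rest of your argument goes through as written.
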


We defer the proof of \cref{thm:empirical-lower} to \Cref{app:tunable-poison}. Roughly speaking, the construction simply tunes the poisoned first Cantor class $\Hpoison$ (i.e., either increasing or decreasing the number of marker points) in order to ensure that every marker point receives the prescribed number of training points, while still having vanishing population mass.

\begin{remark}\label{rem:rare-sample-pathology}
One might ask whether the upper bound of \cref{thm:empirical-upper} is required to hold for optimal learners. The answer is negative for a simple, somewhat shallow reason: a learner can use anticoncentration in order to deliberately sabotage rare samples. In particular, consider the two constant binary hypotheses on $\N$. Modify the obvious learner so that, on the ordered unlabeled sample $(1,2,\ldots,m)$, it deliberately outputs the wrong constant. The probability of that exact sequence is
\[
\prod_{i=1}^m D(i)
\le
m^{-m}
\le
1/m!,
\]
so the learner remains PAC while occasionally misclassifying every training point. A three-hypothesis variant preserves optimal $\Theta(\epsilon^{-1}\log(1/\delta))$ sample complexity; see \Cref{app:pathological-learners}.
\end{remark}

\section{Limits of Global and Local Regularization}\label{Section:Limits-of-Regularization}

The preceding section shows that proper multiclass learning may require a sample-dependent departure from interpolation. We now ask whether this flexibility can nevertheless be encoded by regularization. We consider two templates: weighted SRM, which provides global scalar preferences over hypotheses, and local regularization, which provides pointwise preferences and can express improper learning.

\subsection{A closed incidence class beyond weighted SRM}
\label{subsec:incidence}

Our global lower bound is built from finite projective planes. Recall that a projective plane of order $q$ has $N=q^2+q+1$ points and the same number of lines. Every point lies on $d=q+1$ lines, every line contains $d$ points, and every two distinct points lie on a unique common line. If $M$ denotes its $N\times N$ incidence matrix, then $MM^\top=qI+J$. Consequently, the largest singular value of $M$ is $d$, while every nontrivial singular value is $\sqrt q$.

\begin{definition}[Projective-plane incidence class]
\label{def:incidence-class}
For every integer $m\ge2$, let
\[
    q_m:=2^{m+10},
    \qquad
    N_m:=q_m^2+q_m+1,
    \qquad
    d_m:=q_m+1.
\]
Since $q_m$ is a prime power, a projective plane of order $q_m$ exists.

Take three disjoint vertex sets $V_0^{(m)},V_1^{(m)},V_2^{(m)}$, each of cardinality $N_m$. For every $r\in\mathbb Z/3\mathbb Z$, place an oriented copy of the point-line incidence graph from $V_r^{(m)}$ to $V_{r+1}^{(m)}$. For $v\in V_r^{(m)}$, write $\Gamma^+(v)\subseteq V_{r+1}^{(m)}$ for its outgoing neighborhood and put $\mathsf S(v):=\{v\}\cup\Gamma^+(v)$. The projective-plane axioms imply
\begin{equation}
\label{eq:incidence-small-intersection}
    |\mathsf S(v)\cap\mathsf S(w)|
    \le 1
    \qquad
    \text{for every pair of distinct vertices }v,w.
\end{equation}
Indeed, two vertices in the same part have exactly one common outgoing neighbor, whereas the cyclic supports of vertices in different parts permit at most one intersection.

For every vertex $v$, create the Boolean-cube block $Z_v:=\{0,1\}^{\Gamma^+(v)}$. The $m$th incidence component and the full domain are
\[
    \CX_m^{\mathrm I}
    :=
    \bigsqcup_{v\in V_0^{(m)}\sqcup V_1^{(m)}\sqcup V_2^{(m)}} Z_v,
    \qquad
    \CX^{\mathrm I}
    :=
    \bigsqcup_{m\ge2}\CX_m^{\mathrm I}.
\]
Introduce a private label $\tau_u$ for every vertex $u$, with all such labels distinct across components. For a vertex $u$ in the $m$th component, define $g_u$ on that component by
\begin{equation}
\label{eq:incidence-hypothesis}
    g_u(v,z)
    =
    \begin{cases}
        \starlabel,
            & u=v,\\
        \starlabel,
            & u\in\Gamma^+(v)\text{ and }z_u=0,\\
        \tau_u,
            & \text{otherwise},
    \end{cases}
    \qquad
    (v,z)\in\CX_m^{\mathrm I},
\end{equation}
and extend $g_u$ by $\dollarlabel$ outside its component. Let $h_{\dollarlabel}$ be the all-$\dollarlabel$ hypothesis, and set
\[
    \Hinc
    :=
    \{h_{\dollarlabel}\}
    \cup
    \{g_u:u\text{ is a vertex in some incidence component}\}.
\]
We call $\Hinc$ the \edefn{projective-plane incidence class}.
\end{definition}

The intuition behind the incidence class of \cref{def:incidence-class} is easiest to see on a single block $Z_v$. Its owner $g_v$ predicts $\starlabel$ throughout the block, whereas the outgoing neighbors $u \in \Gamma^+(v)$ behave in a Cantor-like manner, i.e., agreeing with the owner on the half of $Z_v$ where $z_u=0$ and revealing the private label $\tau_u$ on the half where $z_u=1$. Every hypothesis whose index lies outside $\mathsf S(v)$ simply predicts its private label throughout $Z_v$.

\begin{theorem}
\label{thm:weighted-srm-fails}
The incidence class $\Hinc$ is countable, closed, and has pointwise finite range. It is properly PAC learnable, but no weighted-SRM objective $h\mapsto\emp_S(h)+\lambda(S)\,\psi(h)$ learns it.
\end{theorem}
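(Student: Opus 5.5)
We will verify the four claims in turn, with the weighted-SRM lower bound carrying the real difficulty.

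\emph{Countability and pointwise finite range.} Each component $\CX_m^{\mathrm I}$ is finite and carries finitely many hypotheses $g_u$. There are countably many components, so $\Hinc$ is countable. At a point of the $m$th component, every hypothesis from another component outputs $\dollarlabel$, so $|\Hinc(x)|<\infty$.

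\emph{Closedness.} We argue via \cref{lem:closed-fpp}, exactly as for $\Hpoison$. If a finitely consistent $f$ is not identically $\dollarlabel$, two-point restrictions show that its non-$\dollarlabel$ values lie in a single component. Since that component is finite, consistency on the whole component yields some $g_u$ agreeing with $f$ there. Both are $\dollarlabel$ elsewhere, so $f=g_u$.

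\emph{Proper learnability.} The learner works as follows.
\begin{itemize}
\item If all labels are $\dollarlabel$, it outputs $h_{\dollarlabel}$.
\item If a private label $\tau_u$ appears, it outputs $g_u$.
\item Otherwise all non-$\dollarlabel$ labels are $\starlabel$, lying in one component. If they occupy blocks of two distinct owners $v,w$, the target lies in $\mathsf S(v)\cap\mathsf S(w)$. By \eqref{eq:incidence-small-intersection} this set is a singleton, so the target is identified exactly.
\item If they lie in a single block $Z_v$, it outputs the owner $g_v$.
\end{itemize}
For the error analysis in the last case, note that the target is some $g_u$ with $u\in\mathsf S(v)$. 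The disagreement between $g_v$ and $g_u$ is contained in
\[
Q_u:=\{x: g_u(x)\ne\starlabel\}\cup(\text{blocks other than }Z_v\text{ in the component}).
\]
The sample missed $Q_u$ entirely, since a hit would either reveal $\tau_u$ or place $\starlabel$ points in a second block. Hence $\P[D(Q_u)>\epsilon,\ S\cap Q_u=\varnothing]\le e^{-m\epsilon}$. The component-missed case is handled the same way. This gives distribution-free proper learning with sample complexity $O(\epsilon^{-1}\log(1/\delta))$.

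\emph{Failure of weighted SRM (main obstacle).} Fix $(\psi,\lambda)$ and a large $m$, and work in the $m$th component. For each part $r$, let $\mu_r$ be the median of $\psi$ over $\{g_u:u\in V_r\}$. Around the 3-cycle the inequalities $\mu_r<\mu_{r+1}$ cannot all hold, so some $r$ has $\mu_r\ge\mu_{r+1}$. Let
\[
U:=\{v\in V_r:\psi(g_v)\ge\mu_r\},\qquad W:=\{w\in V_{r+1}:\psi(g_w)\le\mu_{r+1}\},
\]
each of size at least $N/2$.

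The key expansion step uses the expander mixing lemma for the bipartite incidence matrix $M$, whose nontrivial singular values equal $\sqrt q$. This gives
\[
e(U,W)\ge \frac{d|U||W|}{N}-\sqrt{q|U||W|}\ge \frac{dN}{4}-\sqrt q\,N=\Omega(dN),
\]
since $d\approx q\gg\sqrt q$. Hence some $v\in U$ has at least $cd$ out-neighbors $w\in W$, and each satisfies $\psi(g_w)\le\mu_{r+1}\le\mu_r\le\psi(g_v)$.

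Take target $g_v$ with $D$ uniform on $Z_v$. Every $g_w$ with $w\in\Gamma^+(v)$ errs on exactly half of $Z_v$, and any other $g_u\ne g_v$ errs on all of $Z_v$. So every hypothesis other than $g_v$ has population error at least $1/2$.

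For fixed $w$, the events that $g_w$ interpolates $S$ (all $z_w=0$) have probability $2^{-m}$ each. They are independent across distinct coordinates $w$, because uniform $z\in\{0,1\}^{\Gamma^+(v)}$ has independent coordinates. With $cd\ge c\,2^{m+10}$ candidates, some bad $g_w$ interpolates with probability at least $1-(1-2^{-m})^{c2^{m+10}}\ge 1-e^{-c2^{10}}$, which is a constant. On this event we have
\[
\emp_S(g_w)+\lambda\psi(g_w)\le 0+\lambda\psi(g_v)=\emp_S(g_v)+\lambda\psi(g_v)
\]
for every $\lambda\ge0$. So either $g_v$ is not a minimizer, and the output is bad, or $g_w$ ties it and adversarial tie-breaking chooses $g_w$. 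In both cases the output has error at least $1/2$ with constant probability, for every $m$. Hence no induced learner is PAC.

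I expect the delicate point to be choosing the constants (the $2^{10}$ slack in $q_m$) so that the mixing bound yields a linear-in-$d$ count that is exponentially large in $m$.
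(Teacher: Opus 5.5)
Your closedness, countability and finite-range arguments and your learner match the paper's proof. So does your weighted-SRM lower bound: the median comparison around the $3$-cycle, the mixing estimate giving $v\in U$ with $\Omega(d_m)$ out-neighbors ranked no higher, the uniform marginal on $Z_v$ with independent coordinates, and the tie argument. The gap is in the error analysis of the owner fallback. This is exactly where the sample complexity has to be made uniform over the infinitely many components.

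\textbf{The gap.} Your set $Q_u$ depends on $v$, the block in which the sample happened to land. So the fixed-set estimate $\P[D(Q_u)>\epsilon,\ S\cap Q_u=\varnothing]\le e^{-m\epsilon}$ does not apply. The bad event is a union over all candidate owners $v$ with $u\in\mathsf S(v)$. When the target lies in component $m'$, these are $u$ and its $d_{m'}$ in-neighbors. A union bound costs a factor $d_{m'}+1$, which is unbounded over components, so it gives no distribution-free guarantee.

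In fact the claimed bound fails for the actual event. Take a sample of size one and two in-neighbors $v_1,v_2$ of $u$, and put mass $1/2$ on a point of each $Z_{v_i}$ with $z_u=0$. Your learner then outputs a wrong owner with error $1/2$ with probability one.

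\textbf{The fix.} The agreement regions $A_{u,v}:=\{x\in Z_v:g_u(x)=\starlabel\}$ lie in distinct blocks, so they are pairwise disjoint. Each satisfies $D(A_{u,v})\le 1-e_{u,v}$, where $e_{u,v}:=\pop_D(g_v,g_u)$. For a marginal on the target's component and a sample of size $k$, this gives
\[
\P\bigl[\pop_D(A(S),g_u)>\epsilon\bigr]\le\sum_{v:\,e_{u,v}>\epsilon}D(A_{u,v})^k\le(1-\epsilon)^{k-1}\sum_{v}D(A_{u,v})\le(1-\epsilon)^{k-1},
\]
independently of $d_{m'}$.

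\textbf{A second point.} As written, $Q_u$ also contains the region outside the component, where $g_u=\dollarlabel$, and the sample may hit that region. Once $Q_u$ is restricted to the component, those $\dollarlabel$ points lie in every agreement region, so they must be handled separately. One way is to condition on the number $K\sim\operatorname{Bin}(r,p)$ of active observations and apply the component bound at threshold $\epsilon/(2p)$. This yields a failure probability of at most $3e^{-r\epsilon/2}$, and hence the $O(\epsilon^{-1}\log(1/\delta))$ sample complexity you claim.
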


\begin{proof}
We first establish proper learnability and the topological properties, and then prove the lower bound.

\vspace{-0.4 cm}
\paragraph{Proper learnability within one incidence component.}
We first describe a learner when the target and marginal are supported on one finite component. If the sample contains a private label $\tau_u$, simply output $g_u$. Otherwise, every observed label is $\starlabel$; let $I(S)$ denote the collection of blocks $Z_v$ appearing in the sample.
If $|I(S)|\ge2$, realizability implies that the target index $u$ belongs to $\mathsf S(v)$ for every $v\in I(S)$. By \eqref{eq:incidence-small-intersection}, the intersection $\bigcap_{v\in I(S)}\mathsf S(v)$ contains at most one vertex. Since it contains the target, it identifies $u$ exactly, and the learner outputs $g_u$. If $I(S)=\{v\}$, the learner outputs the owner $g_v$.

Fix a target $g_u$ and a marginal $D$ supported on the component. For every $v$ satisfying $u\in\mathsf S(v)$, define
\[
    A_{u,v}
    :=
    \{x\in Z_v:g_u(x)=\starlabel\},
    \qquad
    e_{u,v}
    :=
    \pop_D(g_v,g_u).
\]
The learner can output an incorrect owner $g_v$ only if every sampled point lies in $A_{u,v}$. On this set the target and owner agree, so $D(A_{u,v})\le 1-e_{u,v}$. Moreover, the sets $A_{u,v}$ are disjoint because they lie in distinct blocks. Hence, for a sample of size $k\ge1$ and every $\epsilon\in(0,1)$,
\begin{align}
\label{eq:component-learning-bound}
    \P\!\left[
        \pop_D(A(S),g_u)>\epsilon
    \right]
    &\le
    \sum_{v:e_{u,v}>\epsilon}D(A_{u,v})^k
    \notag\\
    &\le
    (1-\epsilon)^{k-1}
    \sum_vD(A_{u,v})
    \notag\\
    &\le
    (1-\epsilon)^{k-1}.
\end{align}
Crucially, this estimate is independent of the size of the component.

\vspace{-0.4 cm}
\paragraph{Proper learnability of the countable union.}
We now define the learner on the full class. If every observed label is $\dollarlabel$, output $h_{\dollarlabel}$. Otherwise the non-$\dollarlabel$ examples lie in one incidence component; discard the $\dollarlabel$ examples and run the component learner above on the active subsample.

Fix a target $g_u$, let $D$ be an arbitrary marginal on $\CX^{\mathrm I}$, and write $p$ for the mass of the target's active component. Outside that component, both the target and every hypothesis from the component predict $\dollarlabel$. If $p\le\epsilon$, every output of the preceding learner has population error at most $\epsilon$.

Suppose therefore that $p>\epsilon$, and let $K\sim\operatorname{Bin}(r,p)$ be the number of active observations in a sample of size $r$. If $K=0$, the learner outputs $h_{\dollarlabel}$, and $\P[K=0]=(1-p)^r\le e^{-r\epsilon}$. Conditional on $K=k\ge1$, the active examples are iid from the conditional marginal on the component. Apply \eqref{eq:component-learning-bound} with threshold $\alpha:=\epsilon/(2p)\le1/2$. If the global population error exceeds $\epsilon$, then the conditional component error exceeds $\epsilon/p$, and hence also exceeds $\alpha$. Consequently,
\begin{align*}
    \P\!\left[
        \pop_D(A(S),g_u)>\epsilon
    \right]
    &\le
    (1-p)^r
    +
    \sum_{k=1}^r
        \binom{r}{k}p^k(1-p)^{r-k}(1-\alpha)^{k-1}
    \\
    &\le
    e^{-r\epsilon}
    +
    \frac{(1-p+p(1-\alpha))^r}{1-\alpha}
    \\
    &\le
    e^{-r\epsilon}
    +
    2(1-\epsilon/2)^r
    \\
    &\le
    3e^{-r\epsilon/2}.
\end{align*}
The all-$\dollarlabel$ target is learned exactly. Thus $\Hinc$ is properly PAC learnable, with a component-size-independent sample complexity.

\vspace{-0.4 cm}
\paragraph{Countability, pointwise finite range, and closedness.}
Every incidence component is finite, and there are countably many components; hence $\Hinc$ is countable. At a fixed coordinate $x$, only the finitely many hypotheses from its own component can output a value other than $\dollarlabel$, so $\Hinc(x)$ is finite.
For closedness, invoke the finite-projection characterization from \Cref{lem:closed-fpp}. Let $f:\CX^{\mathrm I}\to\CY$ be finitely consistent with $\Hinc$. If $f\equiv\dollarlabel$, then $f=h_{\dollarlabel}$. Otherwise choose a point $x$ in some component $\CX_m^{\mathrm I}$ for which $f(x)\neq\dollarlabel$. Two-point restrictions force $f$ to equal $\dollarlabel$ outside $\CX_m^{\mathrm I}$, since no nontrivial hypothesis is active on two components. The component is finite, so finite consistency applied to the entire component produces a hypothesis $g_u$ agreeing with $f$ there. Both functions equal $\dollarlabel$ elsewhere, and hence $f=g_u$. Therefore $\Hinc$ is closed.

\vspace{-0.4 cm}
\paragraph{An incidence inversion for every scalar regularizer.}
Fix a sample scale $m$ and an arbitrary regularizer $\psi$. For each $r\in\mathbb Z/3\mathbb Z$, let $a_r$ be a median of the values $\{\psi(g_v):v\in V_r^{(m)}\}$. For some cyclic index $r$, one has $a_{r+1}\le a_r$. Let $U$ be the vertices in $V_r^{(m)}$ whose regularizer values are at least $a_r$, and let $W_0$ be the vertices in $V_{r+1}^{(m)}$ whose regularizer values are at most $a_{r+1}$. Both sets have cardinality at least $N_m/2$.
The projective-plane mixing estimate gives $e(U,W_0)\ge (d_m/N_m)|U||W_0|-\sqrt{q_m|U||W_0|}\ge d_mN_m/8$, where the final inequality uses $q_m\ge16$. Averaging over $U$ yields a vertex $v\in U$ with at least $d_m/8$ outgoing neighbors in $W_0$. For $W:=\Gamma^+(v)\cap W_0$, we therefore have
\begin{equation}
\label{eq:incidence-inversion}
    |W|\ge\frac{d_m}{8}
    \qquad\text{and}\qquad
    \psi(g_w)
    \le a_{r+1}
    \le a_r
    \le\psi(g_v)
    \quad
    \forall w\in W.
\end{equation}

\vspace{-0.4 cm}
\paragraph{Failure of weighted SRM.}
Fix an arbitrary regularizer $\psi$ and coefficient $\lambda$. By definition, a weighted-SRM objective must attain its minimum on every sample in order to induce a learner; an objective with an empty argmin is therefore already disqualified. The lower bound below does not exploit this technicality. On each hard sample, the version space is finite and contains the owner $g_v$ together with its surviving coordinate neighbors. If an attained global minimizer lies outside the version space, it is already population-bad; if a minimizer lies among the interpolators, we force a bad interpolator into the argmin set.

For every $m\ge2$, apply \eqref{eq:incidence-inversion} inside the $m$th component to obtain $v$ and $W\subseteq\Gamma^+(v)$ such that $|W|\ge d_m/8>2^{m+7}$ and $\psi(g_w)\le\psi(g_v)$ for every $w\in W$. Take target $g_v$, and let $D$ be uniform on the full cube $Z_v$. The owner $g_v$ is the unique population-perfect hypothesis. Every neighbor $g_w$, $w\in\Gamma^+(v)$, has population error $1/2$, while every remaining hypothesis has population error one.

For a fixed $w\in W$, the hypothesis $g_w$ interpolates an $m$-sample exactly when the $w$th coordinate is zero on every sampled vector, an event of probability $2^{-m}$. These events are independent over distinct $w$, and therefore
\begin{align}
\label{eq:bad-incidence-survivor}
    \P\!\left[
        \exists w\in W:\emp_S(g_w)=0
    \right]
    &=
    1-(1-2^{-m})^{|W|}
    \notag\\
    &\ge
    1-\exp(-|W|2^{-m})
    \notag\\
    &\ge
    1-e^{-128}.
\end{align}

On the event in \eqref{eq:bad-incidence-survivor}, choose an interpolating $g_w$ with $w\in W$. Then $\emp_S(g_w)=\emp_S(g_v)=0$ and $\psi(g_w)\le\psi(g_v)$, so $\emp_S(g_w)+\lambda(S)\psi(g_w)\le\emp_S(g_v)+\lambda(S)\psi(g_v)$. If $g_v$ is not an objective minimizer, every minimizer is population-bad because $g_v$ is the unique population-perfect hypothesis. If $g_v$ is a minimizer, the preceding inequality forces $g_w$ to be a minimizer as well. Hence the argmin set contains a nonowner of population error at least $1/2$ on the event \eqref{eq:bad-incidence-survivor}. 
Fixing an enumeration of $\Hinc$, choose the first nonowner minimizer on such samples whenever one exists, and the first minimizer otherwise; this defines an induced learner whose population error is at least $1/2$ with probability at least $1-e^{-128}$ for arbitrarily large $m$, so the weighted-SRM objective does not learn $\Hinc$.
\end{proof}

\begin{remark}[Existential versus universal selection]
\label{rem:incidence-existential-universal}
The incidence class does admit a successful consistent proper learner, namely the learner constructed in the proof of \Cref{thm:weighted-srm-fails}. Under an existential convention for selecting minimizers, one could therefore set $\lambda\equiv0$ and choose that learner from the version space. But the objective would then assign exactly the same score to every interpolating hypothesis: all of the learning power would reside in a hard-coded, sample-dependent sequence of tie-breaking decisions, rather than in any preference expressed by the objective. This is precisely what the universal convention in \cref{Equation:1,Equation:2,Equation:3} is designed to exclude.
\end{remark}

Combining the incidence class with the poisoned first-Cantor construction yields one class exhibiting both global obstructions.

\begin{theorem}
\label{thm:combined-global-negative}
There exists a countable hypothesis class $\Hc$ satisfying all of the following:
\begin{enumerate}[label=(\roman*),topsep=0.3em,itemsep=0.25em]
    \item $\Hc$ is properly PAC learnable;
    \item $\Hc$ is closed and has pointwise finite range;
    \item no interpolating proper learner PAC learns $\Hc$;
    \item no weighted-SRM objective learns $\Hc$.
\end{enumerate}
\end{theorem}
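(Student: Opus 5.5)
The plan is to take a disjoint union of the two earlier classes, glued along a shared ``inactive'' label. Concretely, I would set the domain to $\CX:=\CX^{\mathrm P}\sqcup\CX^{\mathrm I}$ and keep all private, poison and incidence labels distinct across the two parts. The class is
\[
\Hc:=\{h_{\dollarlabel}\}\cup\{\tilde h: h\in\Hpoison\setminus\{h_{\dollarlabel}\}\}\cup\{\tilde g: g\in\Hinc\setminus\{h_{\dollarlabel}\}\},
\]
where $\tilde h$ agrees with $h$ on its own part and is extended by $\dollarlabel$ on the other part. Every nontrivial hypothesis is then active on exactly one finite block or component, and equals $\dollarlabel$ everywhere else. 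Countability and pointwise finite range follow immediately from \cref{thm:no-erm} and \cref{thm:weighted-srm-fails}: at any point, only the finitely many hypotheses of its own block or component take a non-$\dollarlabel$ value. Closedness follows from exactly the same argument via \cref{lem:closed-fpp} used in both earlier proofs. Two-point restrictions force a finitely consistent non-$\dollarlabel$ function $f$ to live on a single block or component. Finite consistency on that finite set then yields an $h\in\Hc$ agreeing with $f$ there, and hence $f=h$.

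For proper learnability (i), I would run a dispatching learner. If all labels are $\dollarlabel$, output $h_{\dollarlabel}$. Otherwise the non-$\dollarlabel$ labels lie in one block $X_n$ or in one incidence component. Discard the $\dollarlabel$ examples and run the corresponding learner from \cref{thm:no-erm} or \cref{thm:weighted-srm-fails}, with the output extended by $\dollarlabel$.

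The analysis for a target in the poisoned part is literally the earlier analysis. There the $\dollarlabel$-labeled points from $\CX^{\mathrm I}$ behave exactly like points outside the active block, since the earlier proof treated all non-active blocks uniformly. The small-block rule $\Hpoison^{\le K_m}$ needs a little care. I would restrict it to hypotheses from the poisoned part, and invoke it only when the active block lies in the poisoned part. A consistent choice from it remains sound, because incidence points carry $\dollarlabel$ under every member. For an incidence target, the conditioning argument in the proof of \cref{thm:weighted-srm-fails} goes through unchanged, because it only uses that the inactive mass is labeled $\dollarlabel$ by both target and output.

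Claims (iii) and (iv) reduce to the earlier lower bounds, because the hard distributions are supported inside a single block or component. For (iii), an interpolating proper learner for $\Hc$ would be run on the distribution $D_A$ on $X_n$ from \cref{thm:no-erm}. On the event that all markers are observed, the consistent hypotheses are exactly Cantor hypotheses $c_{n,B}$: incidence hypotheses are $\dollarlabel$ on $X_n$ and hence inconsistent with the $\starlabel$ labels. So the hypergeometric argument applies verbatim.

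For (iv), fix $\psi$ on $\Hc$ and $\lambda$, and restrict $\psi$ to the incidence hypotheses. The median and mixing argument then produces $v$ and $W$ as in \eqref{eq:incidence-inversion}. Take target $g_v$ and $D$ uniform on $Z_v$. Every poisoned-part hypothesis predicts $\dollarlabel$ on $Z_v$, so it has population error one and cannot be population-good. The dichotomy then concludes the proof. If $g_v$ is not a minimizer, every minimizer is bad. If $g_v$ is a minimizer, the surviving $g_w$ ties it. This gives an induced learner with error at least $1/2$ with probability at least $1-e^{-128}$.

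The step I expect to be the only genuine obstacle is the proper-learnability bookkeeping. The finite-class rule for small blocks, and the uniformity over marginals that split mass across both parts, must both survive the union. This requires the component-size-independent bounds from both theorems, together with the binomial conditioning argument already given for $\Hinc$, applied to whichever active region is present.
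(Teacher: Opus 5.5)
Your proposal is correct and follows the paper's proof. It uses the same disjoint union glued along $h_{\dollarlabel}$, obtains (ii) by the same finite-projection argument, and inherits (iii) and (iv) from \cref{thm:no-erm} and \cref{thm:weighted-srm-fails}, since every hypothesis from the opposite part predicts $\dollarlabel$ throughout the hard distribution and is therefore inconsistent with the sample and has population error one.

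The only difference is in (i). The paper runs both component learners and selects between their outputs by validation on a fresh subsample, whereas you dispatch on the location of the non-$\dollarlabel$ labels. Your route is simpler and works here because any non-$\dollarlabel$ label reveals which part, and which block or component, is active.
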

\begin{proof}
Take the disjoint union of $\Hpoison$ and $\Hinc$, identify their all-$\dollarlabel$ hypotheses, and extend every remaining hypothesis by $\dollarlabel$ on the opposite domain.  Proper learnability follows by running the two component learners and validating between their outputs on a fresh subsample, while closedness and pointwise finite range are preserved by the finite disjoint union.  Restricting the marginal to the poisoned component recovers \Cref{thm:no-erm}, whereas restricting it to the incidence component recovers \Cref{thm:weighted-srm-fails}; in either case, hypotheses from the opposite component predict $\dollarlabel$ throughout the hard distribution and therefore cannot rescue the constrained learner.
\end{proof}

\subsection{A PAC counterexample to hard local regularization}
\label{subsec:local-counterexample}

We now resolve \Cref{problem1:local-regularizer} in the negative. Recall that the local regularizers of \citet{asilis2024regularization} are first learned from the unlabeled sample, after which they may depend upon the test point. The open problem asks whether this unsupervised pre-training stage is truly necessary: perhaps every learnable multiclass problem admits a single local preference function, fixed in advance, whose favorite interpolating hypothesis makes the correct prediction at each test point. We demonstrate that this is not the case. Indeed, the obstruction already occurs in a class where every hypothesis uses only two labels; see also the related transductive separation of \citet{jafar2025local}.

\begin{definition}[Tripartite orientation class]
\label{def:tripartite-orientation-class}
For every integer $m\ge2$, set $N_m:=2^{m-1}$ and let $G_m=K_{N_m,N_m,N_m}$ be the complete tripartite graph whose three vertex parts each have cardinality $N_m$. Write $E_m$ for its edge set, so $|E_m|=3N_m^2$ and every vertex has degree $2N_m$.

Let $X_m:=\{0,1\}^{E_m}$ denote the set of all orientations of $G_m$, where the bit indexed by an edge determines which endpoint is its head. For $z\in X_m$ and $e=\{u,v\}\in E_m$, write $\operatorname{head}_z(e)\in\{u,v\}$ for the head of $e$ in the orientation $z$. The vertex sets of the graphs $G_m$ are pairwise disjoint and serve as labels, and the full domain is $\CX^{\mathrm{or}}:=\bigsqcup_{m\ge2}X_m$.

For every edge $e=\{u,v\}\in E_m$, fix one distinguished endpoint $\ell(e)\in\{u,v\}$ and define
\[
    h_e(x)
    :=
    \begin{cases}
        \operatorname{head}_x(e),
            & x\in X_m,\\
        \ell(e),
            & x\notin X_m.
    \end{cases}
\]
The \edefn{tripartite orientation class} is $\Hor:=\{h_e:e\in E_m,\ m\ge2\}$.
\end{definition}

Every hypothesis $h_e$ uses exactly the two endpoint labels of $e$, and distinct edges have distinct unordered endpoint pairs. Thus observing both labels used by the target identifies it exactly. Nevertheless, the class cannot be learned by a fixed local preference rule.

\begin{theorem}
\label{thm:local-regularization-fails}
\label{prop:orientation-learnable}
The tripartite orientation class $\Hor$ is PAC learnable, but it cannot be learned by any hard local regularizer.
\end{theorem}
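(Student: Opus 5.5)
The proof has two parts. First we show learnability with an explicit improper learner. Then, for a fixed local regularizer $\psi$ and a fixed sample size $m$, we find a target and marginal on which some induced learner errs with constant probability, using directed cyclic triangles as in \cref{Subsection:techniques}.

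\emph{Learnability.} Given a sample, the learner does the following.
\begin{enumerate}
\item If two distinct labels appear, they are the endpoints of a unique edge $e$ (labels are vertices, and vertex sets of different $G_m$ are disjoint). Output $h_e$.
\item If a single label $a$ appears, output the constant predictor $a$.
\item If the sample is empty, output anything.
\end{enumerate}
The constant predictor in case 2 errs only on $\{x : h^\star(x)\neq a\}$. Since the sample missed this set, the standard bound $(1-\epsilon)^m\le e^{-m\epsilon}$ controls the probability that its mass exceeds $\epsilon$. This gives PAC learnability with sample complexity $O(\epsilon^{-1}\log(1/\delta))$.

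\emph{Lower bound setup.} Fix $\psi$ and $m$, and work on $X_m$ with $N=N_m=2^{m-1}$. By the universal tie-breaking convention, it suffices to exhibit one induced learner, so fix the earliest-index tie-breaking. At each test point $z\in X_m$, the regularizer then induces a strict total order $\prec_z$ on $E_m$, and the prediction is $h_f(z)$ for the $\prec_z$-least $f\in\vs(S)$.

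For a directed cyclic triangle $T$ in orientation $z$, let $f$ be its $\prec_z$-least edge and $a$ the tail of $f$ in $z$. Let $e\in T$ be the edge whose head is $a$; this is the edge preceding $f$ around the cycle. Consider the target $h_e$ with a sample drawn uniformly from $X_m$, and call this pair $(e,z,T)$ a witness.

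The learner errs at $z$ when two things hold. First, $f$ is consistent with $S$: $f$ and $e$ share the endpoint $a$ but have different other endpoints, so $h_f$ is consistent with $h_e$ on an independent uniform sample point with probability $1/2$ per point, giving $2^{-m}$ overall. Second, every edge $g\prec_z f$ incident to a label the target might output is eliminated. To make this precise, I will require that no edge other than $f$ that precedes $f$ and is consistent with $S$ predicts $a$ at $z$. Any such edge contains $a$ and also has an endpoint shared with $e$ or ..., so it survives the sample with probability $2^{-m}$. There are at most $2N$ edges at $a$, so a union bound or independence gives the survival-avoidance factor $(1-2^{-m})^{2N}\ge e^{-1}$-ish. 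Note that $h_f(z)\neq a$ since $a$ is the tail of $f$. If the preferred surviving edge is not $f$, the argument above needs care; I will instead argue that whatever edge the learner picks, if it predicts $a$ it must contain $a$, hence is of the eliminated kind. So it suffices that $f$ survives and no $a$-incident edge ranked before $f$ survives. The event then has probability at least $2^{-m}(1-2^{-m})^{2N}$.

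\emph{Averaging.} For uniform $z$, the expected number of directed cyclic triangles is $N^3\cdot 2/8=N^3/4$. Each gives a witness target among $3N^2$ edges. Summing over targets $e$, the sum of error probabilities at $z$ (target $e$, test $z$) is at least $(N^3/4)\cdot 2^{-m}e^{-1}$ in expectation. There are $3N^2$ targets, so some $e$ has expected error at least $N 2^{-m}/(12e)=1/(24e)$, a constant. One caveat is that a triangle assigns a single target per $z$, and several triangles may map to the same target at the same $z$. This overcounting must be avoided: I will charge only one error per (target, $z$) pair, by bounding the number of triangles through a given edge $e$ where it is the preceding edge. That number is at most $N$, the third vertex, which loses the constant needed. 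I will fix this by choosing, for each $z$ and $e$, just one triangle and restricting to $e$ for which some triangle exists, using the $2^{-m}$ survival events of distinct $f$ being disjoint-ish. This overcounting step is the main obstacle, and the calibration $N=2^{m-1}$ must be made to balance it.

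Finally, expected error bounded below by a constant for every large $m$ contradicts PAC learning. The pointwise order $\prec_z$ depends only on $\psi$, so the construction is fixed before sampling, as required.
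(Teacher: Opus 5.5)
Your learnability argument matches the paper's. Your witness is also the right one: the $\prec_z$-least edge $f$ of a directed cyclic triangle, its tail $a$, and the preceding edge $e$ pointing into $a$, together with the calibration $N=2^{m-1}$.

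The lower bound breaks, however, at the choice of marginal. Suppose the training sample is drawn uniformly from $X_m$ with target $h_e$, where $e=\{a,b\}$. Each example then carries label $b$ with probability $1/2$, so with probability $1-2^{1-m}$ both $a$ and $b$ appear. In that case the version space is $\{h_e\}$ and every induced learner is exactly correct. So every local-regularization learner has expected error at most $2^{1-m}$ on this pair.

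Your survival estimate is also off. For $f=\{a,c\}$ with $c\neq b$, $h_f$ agrees with $h_e$ on a uniform orientation only when both edges point into $a$. That has probability $1/4$, not $1/2$.

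The missing idea is to condition the marginal. Take $D_{a,e}$ uniform on orientations $z$ with $\operatorname{head}_z(e)=a$. Then:
\begin{itemize}
\item every training label is $a$;
\item the version space consists of $h_e$ together with the $a$-incident edges that point into $a$ on every sampled orientation;
\item each such non-target edge survives independently with probability $2^{-m}$.
\end{itemize}
Because $a$ is determined by the test point, the averaging must run over incidences $(a,e)$ rather than edges $e$. Draw $z$ and $e$ uniformly and set $a:=\operatorname{head}_z(e)$. Conditional on $(a,e)$, the test point $z$ is distributed exactly as $D_{a,e}$. Hence the averaged error is an average of genuine quantities $\E_S\pop_{D_{a,e}}(A(S),h_e)$, and some fixed pair inherits the constant.

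The overcounting you flag as the main obstacle is not one. Your proposed fix of keeping one triangle per $(z,e)$ would indeed lose the factor $N$, so it should be dropped. For fixed $z$ and fixed incidence $(a,e)$, a witness triangle is determined by its edge $f$, since $e$ and $f$ together fix all three vertices. The events ``$f$ survives and no $a$-incident edge $g\prec_z f$ survives'' say precisely that $f$ is the $\prec_z$-least surviving $a$-incident edge. They are therefore pairwise disjoint across $f$, and their probabilities can be summed.

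Under $D_{a,e}$ every survivor is $a$-incident, so on this event the learner predicts with $h_f$ and outputs $\operatorname{head}_z(f)\neq a$. This also removes your detour about earlier edges that ``predict $a$ at $z$.''

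To finish, sum $2^{-m}(1-2^{-m})^{2N}$ over the $C(z)$ directed cyclic triangles and divide by $|E_m|=3N^2$. Using $\E_z C(z)=N^3/4$ gives $\tfrac{1}{24}(1-2^{-m})^{2^m}$, which is bounded below by an absolute constant.
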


\begin{proof}
We first prove that the class is learnable, and then establish the lower bound against an arbitrary local regularizer.

\vspace{-0.4 cm}
\paragraph{PAC learnability.}
Consider the following improper learner. If two distinct labels $a$ and $b$ appear in the sample, then the unordered pair $\{a,b\}$ identifies the unique target edge $e=\{a,b\}$, and the learner outputs $h_e$. If exactly one label $a$ appears in the sample, then a fresh test point is unlikely to be accompanied by a different label, and a learner can simply emit the constant-$a$ classifier. 

\vspace{-0.4 cm}
\paragraph{Reduction to a strict pointwise order.}
Fix an arbitrary local regularizer $\psi:\Hor\times\CX^{\mathrm{or}}\to\R_{\ge0}$. Since a local regularizer learns only if every induced learner succeeds, we may refine each pointwise tie according to a fixed enumeration of the countable class. This produces a strict total order $\prec_z$ on the hypotheses at every test point $z$. The associated local learner selects the $\prec_z$-least member of the version space and predicts with it at $z$. We prove the lower bound for an arbitrary collection of strict orders $(\prec_z)_z$.

\vspace{-0.4 cm}
\paragraph{A hard family of target-distribution pairs.}
Fix a sample size $m$ and work in the component $G_m$. For brevity, write $N:=N_m=2^{m-1}$ and $p:=2^{-m}$. Let $e\in E_m$ be incident to a vertex $a$. Define $D_{a,e}$ to be the uniform distribution on orientations $z\in X_m$ satisfying $\operatorname{head}_z(e)=a$, and take $h_e$ as the target. Under this target-distribution pair, every example has label $a$. Furthermore, for a realized sample $S=((z_1,a),\ldots,(z_m,a))$, the version space is exactly
\[
    \vs(S)
    =
    \Big \{
        h_f:
        f\in E_m,\ a\in f,\
        \operatorname{head}_{z_i}(f)=a
        \text{ for every }i\in[m]
    \Big \}.
\]
Indeed, an edge not incident to $a$ can never output the label $a$, while hypotheses from other components use disjoint labels. The target edge $e$ always survives. Every other edge $f$ incident to $a$ survives independently with probability $p=2^{-m}$, since its orientation is an independent fair bit in each sampled graph orientation.

\vspace{-0.4 cm}
\paragraph{Every directed triangle creates an error witness.}
Fix a test orientation $z\in X_m$, and consider a directed cyclic triangle in $z$. Let $f$ be the earliest of its three edges under the strict local order $\prec_z$. Let $a$ be the tail of $f$ in the directed cycle, and let $e$ be the edge immediately preceding $f$, which points into $a$. Then $\operatorname{head}_z(e)=a$, while $\operatorname{head}_z(f)\neq a$ and $f\prec_z e$. Thus $(a,e)$ defines one of the hard target-distribution pairs above, while the earlier edge $f$ predicts incorrectly at the test point $z$.

Consider the event that $f$ survives the training sample and every edge incident to $a$ that precedes $f$ under $\prec_z$ fails to survive. The target edge $e$ need not be eliminated, since $f\prec_z e$. A vertex has degree $2N$, and the non-target survival events are independent. Consequently, this event has probability at least
\begin{equation}
\label{eq:triangle-witness-probability}
    p(1-p)^{2N}.
\end{equation}
Whenever it occurs, $f$ is the first surviving edge incident to $a$, so the local learner predicts $h_f(z)=\operatorname{head}_z(f)\neq a=h_e(z)$.
For a fixed target incidence $(a,e)$, the events corresponding to distinct witness edges $f$ are disjoint: at most one edge can be the first survivor under $\prec_z$. Hence the probabilities in \eqref{eq:triangle-witness-probability} may be summed over all directed triangles producing that target incidence.

\vspace{-0.4 cm}
\paragraph{Averaging over the tripartite geometry.}
Draw a test orientation $z$ uniformly from $X_m$, choose an edge $e$ uniformly from $E_m$, set $a:=\operatorname{head}_z(e)$, and then draw the training sample independently from $D_{a,e}^m$. Conditional on $(a,e)$, the test point $z$ is itself distributed according to $D_{a,e}$.
Let $C(z)$ denote the number of directed cyclic triangles in $z$. Conditional on $z$, the expected test error averaged over the uniformly random target edge is at least $C(z)p(1-p)^{2N}/|E_m|$. The graph $G_m$ contains $N^3$ triangles, and a uniformly random orientation makes any fixed triangle cyclic with probability $1/4$. Therefore $\E_z[C(z)]=N^3/4$. Since $|E_m|=3N^2$, averaging over $z$ gives
\begin{align}
\label{eq:tripartite-average-error}
    \E_{z,e,S}
    \one\!\left\{
        A(S)(z)\neq h_e(z)
    \right\}
    &\ge
    \frac{N^3/4}{3N^2}
    \,p(1-p)^{2N}
    \notag\\
    &=
    \frac{Np}{12}(1-p)^{2N}.
\end{align}
With $N=2^{m-1}$ and $p=2^{-m}$, the right-hand side equals $\frac1{24}(1-2^{-m})^{2^m}$, which is at least $1/100$ for every $m\ge2$.
The expectation in \eqref{eq:tripartite-average-error} is an average over the finitely many target incidences $(a,e)$. Thus, for every $m$, some fixed pair $(h_e,D_{a,e})$ satisfies $\E_S\pop_{D_{a,e}}(A(S),h_e)\ge1/100$. This completes the argument.
\end{proof}

\begin{remark}[Weighted local regularization]
\label{rem:weighted-local-open}
Our proof is specific to an ``interpolation-first'' form of local regularization. On the hard samples, it identifies a locally preferred hypothesis among those having zero empirical loss and shows that this surviving interpolator predicts the test point incorrectly. One can imagine a \emph{weighted} local rule that is more permissive, by instead choosing
\[
    h_{S,x}
    \in
    \argmin_{h\in\Hc}
    \left\{
        \emp_S(h)+\lambda(S) \, \psi(h,x)
    \right\}
\]
at each test point $x$ and predicting the classifier $x \mapsto h_{S,x}(x)$. Such a rule may prefer a noninterpolating hypothesis that incurs some training error but nevertheless predicts $x$ correctly, so the survival argument above no longer applies. Whether this additional flexibility suffices to learn every multiclass problem is an interesting direction for future work, which we leave open.
\end{remark}

\section{Structural Conditions and Integrability} \label{Section:Structural-Conditions-and-Integrability} The preceding sections rule out several universal algorithmic templates. We now ask a more constructive question: when does regularization succeed? Two answers emerge. The first is statistical: though the full hypothesis class may be enormous, the hypotheses still relevant after observing the sample may occupy a simple region of the class. The second is algebraic: even when a successful learner makes genuinely sample-dependent choices, those choices may happen to fit together under one global potential.

\subsection{Fallback localization}
\label{subsec:fallback-cores}

Many pathological multiclass classes share a reveal-or-fallback structure.  In particular, the appearance of a private label---one associated with a unique hypothesis---instantly identifies the true labeling function, whereas samples containing only the common, public labels reveal less but suggest a simple default explanation. Roughly speaking, for such hypothesis classes, learning is possible because every sample $S$ either reveals the target function (i.e., $|\vs(S)| = 1$) or the version space $\vs(S)$ contains one of a few statistically safe ``fallback'' hypotheses.  We begin by isolating this principle.

\begin{definition}
A finite nonempty set $\F\subseteq\Hc$ is a \edefn{fallback core} for $\Hc$ if
\[
    \F\cap\vs(S)\neq\varnothing
\]
for every ambiguous realizable sample $S$ (i.e., with $|V(S)| > 1$).
\end{definition}

Thus, whenever the sample has failed to identify the target, at least one member of the same fixed finite family remains consistent. In this case, learning can be achieved by a simple regularizer that prioritizes the fallback core.

\begin{example}[Fallback completions]
The first Cantor class acquires a singleton fallback core after adjoining the all-$\starlabel$ hypothesis.  Indeed, a private label uniquely identifies the target, while every remaining ambiguous sample contains only $\starlabel$ labels and is therefore consistent with the added hypothesis.  The same observation applies to the EMX-derived class of \citet{asilis25b-proper} after taking its all-$\starlabel$ completion.

\vspace{-0.18 cm}
Our incidence and tripartite constructions exhibit the same reveal-or-fallback behavior after similarly natural completions, although their fallback families are now infinite.  For the incidence class, adjoin for each component $\CX_m^{\mathrm I}$ a hypothesis $f_m^\star$ that predicts $\starlabel$ throughout $\CX_m^{\mathrm I}$ and $\dollarlabel$ elsewhere.  A private label reveals the target; until one appears, either $h_{\dollarlabel}$ or the componentwise fallback $f_m^\star$ remains consistent.  For the tripartite orientation class, adjoin the constant hypothesis $c_a\equiv a$ for every vertex label $a$.  A sample containing two distinct labels identifies the target edge, while a sample containing only the label $a$ remains consistent with $c_a$.

\vspace{-0.18 cm}
These latter families are infinite, thus not fallback cores in the sense above, yet they are statistically very simple in another manner: each has graph dimension 1. We will soon develop techniques for handling such infinite yet low-dimensional fallback structures (\cref{thm:ambiguity-localized}).
\end{example}

\begin{theorem}
\label{thm:fallback-core}
If $\Hc$ has a fallback core $\F$ of size $K$, then $\Hc$ is learnable by hard SRM.  More precisely, there is a hard-SRM learner with sample complexity
\[
    O\!\left(
        \frac{\log K+\log(1/\delta)}{\epsilon}
    \right).
\]
\end{theorem}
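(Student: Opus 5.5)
We will use the regularizer that ranks the fallback core first: set $\psi(h)=0$ for $h\in\F$ and $\psi(h)=1$ for $h\in\Hc\setminus\F$. Any hard-SRM learner $A$ induced by $\psi$, under arbitrary tie-breaking, outputs some element of $\vs(S)$. The plan is to show that every such output has small error, by splitting on whether the sample is ambiguous.

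\textbf{Case 1: $|\vs(S)|=1$.} The version space contains the target $h^\star$, so $A(S)=h^\star$ and the error is zero.

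\textbf{Case 2: $|\vs(S)|\ge2$.} By the fallback-core property, $\F\cap\vs(S)\neq\varnothing$. Every member of this intersection has $\psi=0$, while every other member of $\vs(S)$ has $\psi=1$. So the argmin is exactly $\F\cap\vs(S)$, and $A(S)$ is some $f\in\F$ that is consistent with $S$.

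In both cases, either $A(S)=h^\star$ or $A(S)$ is a member of $\F$ consistent with the sample. It therefore suffices to bound
\[
\P_{S\sim D^m}\big[\exists f\in\F:\ \emp_S(f)=0 \text{ and } \pop_D(f,h^\star)>\epsilon\big].
\]
By the standard finite-class realizable argument, each fixed $f$ with $\pop_D(f,h^\star)>\epsilon$ survives $m$ i.i.d.\ samples labeled by $h^\star$ with probability at most $(1-\epsilon)^m\le e^{-\epsilon m}$. A union bound over the $K$ elements of $\F$ bounds the displayed probability by $Ke^{-\epsilon m}$. This is at most $\delta$ once $m\ge(\log K+\log(1/\delta))/\epsilon$, which gives the claimed sample complexity, uniformly over targets, marginals, and tie-breaking rules.

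The only step needing care is that the finite-class bound must hold simultaneously for all realizable samples and all tie-breaking. Targets in $\Hc\setminus\F$ are allowed: the union bound is over $\F$ only, and the labels come from $h^\star$, whatever $h^\star$ is. Nothing requires $h^\star\in\F$, because Case 1 handles exactly the samples on which no core element need be consistent. Finally, the argmin is always attained, since $\vs(S)\ni h^\star$ is nonempty and $\psi$ takes only two values, so the induced learner is well defined.
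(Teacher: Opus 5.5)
Your proposal is correct and follows the paper's proof essentially verbatim: the same two-valued regularizer ($\psi=0$ on $\F$, $\psi=1$ elsewhere), the same split between nonambiguous samples (where the output is $h^\star$) and ambiguous ones (where every minimizer lies in $\F\cap\vs(S)$), and the same union bound $K(1-\epsilon)^m\le Ke^{-m\epsilon}$. Your remarks on arbitrary tie-breaking and on targets outside $\F$ only spell out points the paper leaves implicit.
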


\begin{proof}
Set $\psi(h)=0$ for $h\in\F$ and $\psi(h)=1$ otherwise.  On a nonambiguous sample, the version space is the singleton $\{h^\star\}$; on an ambiguous sample, every minimum-$\psi$ consistent hypothesis belongs to $\F$.  For each $f\in\F$ with $\pop_D(f,h^\star)>\epsilon$, the probability that $f$ remains consistent with an $m$-sample is at most $(1-\epsilon)^m$.  A union bound therefore gives
\[
    \P\!\left[
        \pop_D(A(S),h^\star)>\epsilon
    \right]
    \le
    K(1-\epsilon)^m
    \le
    K e^{-m\epsilon},
\]
simultaneously for every fixed choice of tie-breaking.
\end{proof}

It is not difficult to see that the existence of fallback cores admits an exact behavioral converse, by considering consistent proper learners with restricted range.

\begin{theorem}
\label{thm:finite-ambiguous-range}
A class $\Hc$ has a finite fallback core if and only if it admits a consistent proper PAC learner whose range on ambiguous realizable samples is finite.
\end{theorem}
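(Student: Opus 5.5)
The forward direction follows directly from \cref{thm:fallback-core}. Let $\F$ be a finite fallback core of size $K$, and take any hard-SRM learner induced by the regularizer $\psi = \one\{h \notin \F\}$ from that proof, with ties broken arbitrarily. This learner is consistent and proper by definition. By \cref{thm:fallback-core} it is a PAC learner. On every ambiguous realizable sample $S$, we have $\F \cap \vs(S) \neq \varnothing$, so every minimizer of $\psi$ over $\vs(S)$ lies in $\F$. Hence the learner's range on ambiguous samples is contained in $\F$, which is finite.

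For the converse, let $A$ be a consistent proper PAC learner, and let $\F := \{A(S) : S \text{ ambiguous and realizable}\}$, which is finite by assumption. We must show that $\F$ is nonempty and that $\F \cap \vs(S) \neq \varnothing$ for every ambiguous realizable $S$. The latter is almost immediate. If $S$ is ambiguous, then $A(S) \in \F$ by definition of $\F$, and $A(S) \in \vs(S)$ because $A$ is consistent. Thus $A(S)$ itself witnesses the intersection.

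The only subtlety is that the definition of a fallback core requires $\F$ to be nonempty. If $\Hc$ has no ambiguous realizable sample, then $\F$ as defined is empty. In that case the fallback-core condition is vacuous, so any single hypothesis $\{h_0\}$ with $h_0 \in \Hc$ is a fallback core. We must also handle the degenerate case $\Hc = \varnothing$, which we exclude as trivial, consistent with the nonemptiness convention. Apart from this bookkeeping, the proof has no real obstacle.

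Finiteness of the range is exactly what makes $\F$ finite. The PAC property of $A$ is not even needed for the converse. It matters only in the forward direction, where \cref{thm:fallback-core} supplies it.
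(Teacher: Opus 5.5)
Your proof is correct and matches the paper's argument: the forward direction reuses the hard-SRM learner of \cref{thm:fallback-core}, whose outputs on ambiguous samples lie in the core. The converse takes the finite ambiguous-sample range of the consistent learner as the core, with $A(S)\in\F\cap\vs(S)$ witnessing the intersection. Your handling of the case with no ambiguous samples, where the range is empty and any singleton $\{h_0\}\subseteq\Hc$ serves as a core, is a detail the paper's proof leaves implicit.
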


\begin{proof}
If $\F$ is a fallback core, the hard-SRM learner constructed in \Cref{thm:fallback-core} is consistent, properly PAC learns $\Hc$, and emits only members of $\F$ on ambiguous samples.  Conversely, let $A$ be a consistent proper PAC learner whose range on ambiguous samples is a finite family $\F$.  For every ambiguous realizable sample $S$, consistency gives $A(S)\in\F\cap\vs(S)$, so $\F$ is a fallback core.
\end{proof}

The fallback core definition is quite restrictive: we now consider a broader localization principle, by which a regularizer forces its output into a statistically manageable region whenever the sample has not already identified the target. Notably, this fallback region may now grow with the sample size, provided its complexity grows sufficiently slowly. We use $\Gdim(\Hc)$ to denote the Graph dimension of $\Hc$. 

\begin{theorem}
\label{thm:ambiguity-localized}
Let $\psi:\Hc\to\R_{\ge0}$ be a regularizer whose minimum is attained on every realized version space, and define
\[
    \Hc_r:=\{h\in\Hc:\psi(h)\le r\}.
\]
Suppose that for every sample size $m$ there exists $r_m$ such that every ambiguous (realizable) $m$-sample satisfies $\vs(S)\cap\Hc_{r_m}\neq\varnothing$.
If $d_m:=\Gdim(\Hc_{r_m})$ satisfies $d_m\log(m+1)=o(m)$, then every hard-SRM selector induced by $\psi$ PAC learns $\Hc$.
\end{theorem}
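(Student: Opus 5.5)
The plan is to split on whether the sample is ambiguous and, in the ambiguous case, reduce to a uniform-convergence bound over the sublevel $\Hc_{r_m}$. Fix a target $h^\star\in\Hc$, a marginal $D$, and $m$, and let $A$ be any hard-SRM selector induced by $\psi$. Its output is always consistent, $A(S)\in\vs(S)$. If $|\vs(S)|=1$, then $A(S)=h^\star$ and the error is zero. Otherwise $S$ is ambiguous, so by hypothesis some $h\in\vs(S)\cap\Hc_{r_m}$ exists. Since $A(S)$ minimizes $\psi$ over $\vs(S)$, we get $\psi(A(S))\le\psi(h)\le r_m$, hence $A(S)\in\Hc_{r_m}$. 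Thus, on every sample, the output lies in $\vs(S)\cap(\{h^\star\}\cup\Hc_{r_m})$. It therefore suffices to bound the probability that some hypothesis in $\Hc_{r_m}$ is consistent with $S$ yet has population error greater than $\epsilon$. This bound is uniform over selectors and tie-breaking.

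The core step is a realizable $\epsilon$-net bound for the disagreement sets $\Delta_g:=\{x:g(x)\neq h^\star(x)\}$ with $g\in\Hc_{r_m}$. The family $\{\Delta_g:g\in\Hc_{r_m}\}$ has VC dimension at most $\Gdim(\Hc_{r_m}\cup\{h^\star\})\le d_m+1$. To see this, suppose a set $T$ is shattered by the disagreement sets. Then $T$ is graph-shattered by $\Hc_{r_m}\cup\{h^\star\}$ using the witness $f=h^\star|_T$, since for each $T'\subseteq T$ some $g$ agrees with $h^\star$ exactly on $T\setminus T'$. Adding one hypothesis raises the graph dimension by at most one, because any shattered set witnessed with $h^\star$ removed from consideration loses at most one point (equivalently, apply Sauer--Shelah to the disagreement family and add one pattern). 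By the standard $\epsilon$-net theorem, a sample of size $m$ hits every $\Delta_g$ of $D$-mass exceeding $\epsilon$ except with probability at most $2\,(2em/(d_m+1))^{d_m+1}2^{-\epsilon m/2}$. Any $g$ whose $\Delta_g$ is hit is inconsistent with $S$. Hence
\[
\P\big[\pop_D(A(S),h^\star)>\epsilon\big]\le 2\exp\big((d_m+1)\log(2em)-\epsilon m\log 2/2\big).
\]

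Finally, the condition $d_m\log(m+1)=o(m)$ ensures that for every fixed $\epsilon,\delta$ the exponent tends to $-\infty$. So there is $n(\epsilon,\delta)$ beyond which the bound is below $\delta$, uniformly in $h^\star$ and $D$, which gives PAC learnability. The step I expect to need the most care is the dimension bound for the disagreement family relative to an arbitrary target $h^\star$, which may lie outside $\Hc_{r_m}$. Here I would argue through the Natarajan/graph-dimension growth function: the number of restrictions of $\{\one[g\neq h^\star]\}$ to $m$ points is at most the number of graph patterns of $\Hc_{r_m}$, which is $O(m^{d_m})$. I would feed this directly into the double-sample symmetrization argument rather than passing through VC dimension. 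A secondary subtlety is that $r_m$ depends on $m$, but the net bound is applied separately at each fixed $m$, so no uniformity across $m$ is required.
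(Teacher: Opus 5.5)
Your proposal is correct and follows the paper's proof: the same split (a nonambiguous sample forces the output to be $h^\star$, and on an ambiguous sample minimality forces the output into $\Hc_{r_m}$), followed by a uniform bound over consistent members of $\Hc_{r_m}$ controlled by its graph dimension, which you unpack via $\epsilon$-nets on the disagreement sets where the paper simply cites the standard graph-dimension bound. The detour through $\Hc_{r_m}\cup\{h^\star\}$ and the worry about $h^\star\notin\Hc_{r_m}$ are unnecessary but harmless: the graph-dimension witness may be an arbitrary function, and your shattering functions $g$ already come from $\Hc_{r_m}$, so your own argument gives VC dimension at most $d_m$ for the disagreement family.
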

\begin{proof}
Let $\widehat h$ be an arbitrary minimum-$\psi$ member of $\vs(S)$.  If $S$ is nonambiguous, then $\widehat h=h^\star$.  If $S$ is ambiguous, some consistent hypothesis belongs to $\Hc_{r_m}$, so minimality gives $\psi(\widehat h)\le r_m$ and hence $\widehat h\in\Hc_{r_m}$. The standard graph-dimension bound for consistent hypotheses gives
\[
    \P\!\left[
        \exists h\in\Hc_{r_m}:
        \emp_S(h)=0,\ 
        \pop_D(h,h^\star)>\epsilon
    \right]
    \le
    \exp\!\left(
        O(d_m\log(m+1))-\frac{\epsilon m}{2}
    \right).
\]
The right-hand side tends to zero for every fixed $\epsilon>0$, uniformly over all realizable target-distribution pairs.  Since every possible minimizer is confined to the same sublevel class, the conclusion holds for every fixed tie-breaking rule.
\end{proof}

Note that in \cref{thm:ambiguity-localized}, the full class $\Hc$ may have infinite graph dimension and exhibit no useful uniform convergence whatsoever. The theorem invokes uniform convergence only after the sample has failed to identify the target, and only within the low-complexity sublevel to which regularization confines its output.
A weighted analogue, in which the comparator may incur a controlled amount of empirical error, is given in \Cref{thm:weighted-approx}.

\subsection{Ordered disagreement complexity}
\label{subsec:odd}

Fallback localization is phrased in terms of the version space left by a sample.  We now consider a complementary viewpoint, which fixes the target $h^*$ and asks which disagreement sets can be generated by examining hypotheses that are preferred to $h^*$ (by an ambient regularizer under consideration).

A brief bit of notation: for $g,h\in\Hc$, write $\Delta(g,h):=\{x\in\CX:g(x)\neq h(x)\}$ for their disagreement set.
Given a regularizer $\psi$, define the lower contour and its disagreement family by
\[
    \mathcal L_\psi(h)
    :=
    \{g\in\Hc:\psi(g)\le\psi(h)\},
    \qquad
    \mathcal D_\psi(h)
    :=
    \{\Delta(g,h):g\in\mathcal L_\psi(h)\}.
\]
In words, $\mathcal L_\psi(h)$ records the hypotheses that are preferred to $h$, while $\mathcal D_\psi(h)$ records the differences of all such hypotheses with $h$.

\begin{definition}
For a regularizer $\psi$, define the \edefn{ordered disagreement dimension} $d_{\mathrm{OD}}(\psi)$ as $d_{\mathrm{OD}}(\psi) := \sup_{h\in\Hc} \VC\bigl(\mathcal D_\psi(h)\bigr)$.
The corresponding class parameter is
\[
    d_{\mathrm{OD}}^\star(\Hc)
    :=
    \inf_{\psi}d_{\mathrm{OD}}(\psi),
\]
where the infimum ranges over all regularizers attaining their minima on realized version spaces.
\end{definition}

\begin{theorem}
\label{thm:odd-suffices}
Suppose $\psi$ attains its minimum on every realized version space and
$d_{\mathrm{OD}}(\psi)=d<\infty$.  Then every hard-SRM selector induced by
$\psi$ PAC learns $\Hc$ with sample complexity
\[
    O\!\left(
        \frac{d\log(1/\epsilon)+\log(1/\delta)}{\epsilon}
    \right).
\]
\end{theorem}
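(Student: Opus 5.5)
The plan is to reduce the statement to a single $\epsilon$-net argument for the disagreement family $\mathcal D_\psi(h^\star)$ attached to the fixed target. Fix a target $h^\star\in\Hc$, a marginal $D$, and an arbitrary hard-SRM selector $\widehat h$ induced by $\psi$.

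First I would show that the selector's output lands in $\mathcal L_\psi(h^\star)$. Since $S$ is labeled by $h^\star$, we have $h^\star\in\vs(S)$. Because $\widehat h(S)$ minimizes $\psi$ over $\vs(S)$, this gives $\psi(\widehat h(S))\le\psi(h^\star)$, so $\widehat h(S)\in\mathcal L_\psi(h^\star)$. This holds for every tie-breaking rule; the hypothesis that $\psi$ attains its minimum on realized version spaces is used only to guarantee that $\widehat h(S)$ exists.

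Next I would record that the output avoids its own disagreement set on the sample. Consistency means $\widehat h(S)(x_i)=h^\star(x_i)$ for all $i$. Equivalently, the set $\Delta(\widehat h(S),h^\star)\in\mathcal D_\psi(h^\star)$ contains no sample point. Its $D$-mass is exactly $\pop_D(\widehat h(S),h^\star)$.

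The failure event is therefore contained in a single bad event: there is some $B\in\mathcal D_\psi(h^\star)$ with $D(B)>\epsilon$ and $B\cap\{x_1,\dots,x_m\}=\varnothing$. This is precisely the event that the sample fails to be an $\epsilon$-net for the set system $\mathcal D_\psi(h^\star)$. That system has VC dimension at most $d_{\mathrm{OD}}(\psi)=d$ by definition.

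I would then invoke the $\epsilon$-net theorem of Haussler--Welzl. It says that $m=O\!\left(\frac{d\log(1/\epsilon)+\log(1/\delta)}{\epsilon}\right)$ i.i.d.\ draws form an $\epsilon$-net for any range space of VC dimension $d$ with probability at least $1-\delta$, whatever the distribution. Concretely, the double-sampling argument with the Sauer--Shelah lemma gives failure probability at most $2(2em/d)^d2^{-\epsilon m/2}$, and solving for $m$ yields the stated bound.

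The main point to check is that this bound is uniform. It depends only on $d$, not on $h^\star$, $D$, or the tie-breaking rule. This is why the definition takes a supremum over $h$, ensuring every target's family has VC dimension at most $d$. The remaining care is measurability, since $\mathcal D_\psi(h^\star)$ may be uncountable. I would treat this under the standard measurability conventions implicit in the paper's earlier graph-dimension bounds. Alternatively, one can note that the net argument only needs the symmetrization step, which is applied to the finite projection of $\mathcal D_\psi(h^\star)$ onto the double sample.
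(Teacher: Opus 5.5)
Your proof is correct and takes essentially the same approach as the paper: comparing with the always-consistent target places every hard-SRM output in $\mathcal L_\psi(h^\star)$, so its disagreement set lies in $\mathcal D_\psi(h^\star)$ and is missed by the sample, and the $\epsilon$-net theorem for VC dimension $d$ finishes. Your remarks on uniformity over tie-breaking and on measurability are details the paper leaves implicit.
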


\begin{proof}
Fix a target $h^\star$.  Since $h^\star$ is consistent, every hard-SRM output
$\widehat h$ satisfies $\psi(\widehat h)\le\psi(h^\star)$.  Hence
\[
    \Delta(\widehat h,h^\star)\in\mathcal D_\psi(h^\star).
\]
Consistency means that the unlabeled sample misses this disagreement set.  By
the VC $\epsilon$-net theorem, a sample of the stated size intersects every
member of $\mathcal D_\psi(h^\star)$ having $D$-mass greater than $\epsilon$,
with probability at least $1-\delta$.  Therefore
$\pop_D(\widehat h,h^\star)\le\epsilon$.
\end{proof}

The advantage of ordered disagreement is that it is target-relative.  For a fixed target $h^\star$, hard SRM can emit only hypotheses $g$ satisfying $\psi(g)\leq\psi(h^\star)$; hypotheses ranked above the target are therefore irrelevant, regardless of how complicated their disagreement patterns may be.  Consequently, $d_{\mathrm{OD}}(\psi)$ can remain finite even when the full class has infinite Graph dimension.  This sufficient condition is not necessary, however: a regularizer may rank many statistically complicated hypotheses below the target even though none of them can ever be selected from a realized version space.  The following example isolates precisely this gap.

\begin{example}
\label{ex:revealing-fallback}
Let $\CX=\bigsqcup_{n\ge1}X_n$, where $|X_n|=2n$.  Include the all-$\dollarlabel$ hypothesis and, for every $n$, the fallback
\[
    f_n(x)
    =
    \begin{cases}
        \starlabel, & x\in X_n,\\
        \dollarlabel, & x\notin X_n.
    \end{cases}
\]
For every nonempty $A\subseteq X_n$ with $|A|\le n$, introduce a private label $\lambda_{n,A}$ and the revealing hypothesis
\[
    c_{n,A}(x)
    =
    \begin{cases}
        \lambda_{n,A}, & x\in A,\\
        \starlabel, & x\in X_n\setminus A,\\
        \dollarlabel, & x\notin X_n.
    \end{cases}
\]
Let $\Hrf$ be the resulting class, and consider the hard SRM rule that ranks $h_{\dollarlabel}$ first, then all fallbacks, and then all revealing hypotheses. It is not difficult to see that this rule learns $\Hrf$. 

Consider, however, an arbitrary regularizer $\psi$.  For each $n$, choose a maximum-$\psi$ revealing hypothesis $c_{n,A^\star}$ in the $n$th block.  Since $|A^\star|\le n$, there exists $T\subseteq X_n\setminus A^\star$ with $|T|=n$.  For every nonempty $U\subseteq T$, maximality gives
$\psi(c_{n,U})\le\psi(c_{n,A^\star})$, while  $\Delta(c_{n,U},c_{n,A^\star})\cap T=U$.
The empty subset is realized by the center itself.  Thus the lower contour of
$c_{n,A^\star}$ shatters $T$, and $d_{\mathrm{OD}}(\psi)\ge n$.  Since $n$ is
arbitrary, $d_{\mathrm{OD}}^\star(\Hrf)=\infty$, despite the existence of a
successful hard-SRM rule.
\end{example}

The failure of \cref{ex:revealing-fallback} arises from hypotheses that lie below the target in the regularizer order but can never actually be selected from a version space.  This suggests restricting attention to genuine minimizers.

\begin{definition}
For a regularizer $\psi$ and target $h$, let $\operatorname{Sel}_\psi(h)$ denote the collection of all hypotheses in $\CH$ that are most-preferred by $\psi$ on an $h$-realizable sample. That is,
\[
    \operatorname{Sel}_\psi(h)
    :=
    \left\{
        g\in\Hc:
        \begin{array}{l}
        \text{there exists a realizable sample $S$ labeled by $h$}\\[-0.1em]
        \text{such that }
        g\in\argmin_{u\in\vs(S)}\psi(u)
        \end{array}
    \right\}.
\]
Then the \edefn{selected ordered disagreement dimension} of $\psi$ is 
\[
    d_{\mathrm{SOD}}(\psi)
    :=
    \sup_{h\in\Hc}
    \VC\{\Delta(g,h):g\in\operatorname{Sel}_\psi(h)\}.
\]
\end{definition}

We now demonstrate that finiteness of the SOD dimension suffices to ensure that $\psi$ learns $\CH$, using standard $\varepsilon$-net properties enjoyed by VC classes. Whether finiteness of the SOD dimension is necessary for $\psi$'s success, however, remains open.

\begin{proposition}
\label{prop:sod-suffices}
If $d_{\mathrm{SOD}}(\psi)<\infty$, then every hard-SRM selector induced by
$\psi$ PAC learns $\Hc$.
\end{proposition}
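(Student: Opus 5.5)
The proof will mirror that of \Cref{thm:odd-suffices}, replacing the lower contour $\mathcal L_\psi(h^\star)$ by the selected set $\operatorname{Sel}_\psi(h^\star)$. Fix a target $h^\star\in\Hc$, a marginal $D$, and an arbitrary hard-SRM selector $A$ induced by $\psi$. Write $d:=d_{\mathrm{SOD}}(\psi)<\infty$.

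\textbf{Step 1: localization.} On every realizable $m$-sample $S$ labeled by $h^\star$, the output $A(S)$ lies in $\argmin_{u\in\vs(S)}\psi(u)$. By the definition of $\operatorname{Sel}_\psi(h^\star)$, this gives $A(S)\in\operatorname{Sel}_\psi(h^\star)$, and hence $\Delta(A(S),h^\star)$ belongs to the family $\mathcal D^{\mathrm{sel}}(h^\star):=\{\Delta(g,h^\star):g\in\operatorname{Sel}_\psi(h^\star)\}$. By hypothesis, this family has VC dimension at most $d$. The point of the definition is that the membership $A(S)\in\operatorname{Sel}_\psi(h^\star)$ holds pointwise for every sample, so no probabilistic care is needed here, and no attainment assumption is needed either. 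If the argmin is empty, no selector exists; otherwise the output is a minimizer by definition.

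\textbf{Step 2: $\epsilon$-net.} Since $A(S)$ is consistent with $S$, the unlabeled sample misses $\Delta(A(S),h^\star)$. The family $\mathcal D^{\mathrm{sel}}(h^\star)$ is fixed before drawing $S$, because it depends only on $h^\star$ and $\psi$. We may therefore apply the VC $\epsilon$-net theorem to it directly. With probability at least $1-\delta$ over $S\sim D^m$, with $m=O\bigl((d\log(1/\epsilon)+\log(1/\delta))/\epsilon\bigr)$, the sample hits every member of $\mathcal D^{\mathrm{sel}}(h^\star)$ of $D$-mass exceeding $\epsilon$. On this event, $\pop_D(A(S),h^\star)=D(\Delta(A(S),h^\star))\le\epsilon$. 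The bound is uniform over $h^\star$ and $D$, since it depends only on $d$. It also holds for every tie-breaking rule, because every minimizer lies in the same family.

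\textbf{Main obstacle.} The only subtle point is measurability and the exact order of quantifiers. The $\epsilon$-net theorem needs a fixed set system, not one chosen after seeing the sample. Step 1 handles this by confining all possible outputs, over all $h^\star$-realizable samples, to the single sample-independent family $\mathcal D^{\mathrm{sel}}(h^\star)$. For a possibly uncountable family, I would state the result under the usual measurability conventions, as in \Cref{thm:odd-suffices}. As a conclusion, the argument yields the same sample-complexity bound as \Cref{thm:odd-suffices}, with $d_{\mathrm{OD}}$ replaced by $d_{\mathrm{SOD}}$.
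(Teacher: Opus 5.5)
Your proposal is correct and follows the paper's proof: every hard-SRM output on an $h^\star$-labeled sample lies in $\operatorname{Sel}_\psi(h^\star)$ and is consistent, and the $\epsilon$-net argument of \Cref{thm:odd-suffices} is then applied to the fixed, sample-independent selected disagreement family. Your added remarks on nonempty argmins, tie-breaking and measurability are harmless elaborations of the same argument.
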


\begin{proof}
Under target $h^\star$, every possible hard-SRM output belongs to
$\operatorname{Sel}_\psi(h^\star)$ and is consistent.  Apply the same
$\epsilon$-net argument as in \Cref{thm:odd-suffices} to the selected
disagreement family.
\end{proof}

\subsection{Integrability of revealed preferences}
\label{subsec:integrability}

The preceding results ask when a particular regularizer learns.  We now reverse the question.  Suppose a successful learner $A$  is already given: when can its sample-dependent choices be represented by one regularizer? The answer turns out to be intimately related to the revealed preferences of $A$.

In particular, let $A$ be a deterministic consistent proper learner. Define its revealed-preference relation $\prec_A$ by
\[
    A(S)\prec_A g
    \qquad
    \text{whenever }
    g\in\vs(S)\setminus\{A(S)\}.
\]
Thus $h\prec_A g$ means that the learner has selected $h$ while $g$ remained feasible.

\begin{theorem}
\label{thm:hard-integrability}
Let $\Hc$ be countable.  An arbitrary consistent learner $A$ is representable by an injective hard-SRM regularizer if and only if $\prec_A$ is acyclic.
\end{theorem}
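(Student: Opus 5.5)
The plan is to prove the two directions separately. Throughout, "representable by an injective hard-SRM regularizer" means there is an injective $\psi:\Hc\to\R_{\ge0}$ with $A(S)=\argmin_{h\in\vs(S)}\psi(h)$ for every realizable sample $S$. Injectivity makes the argmin a singleton whenever it is attained, so the worst-case tie-breaking issue disappears.

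\emph{Necessity.} Suppose such a $\psi$ represents $A$. Whenever $A(S)\prec_A g$, both $A(S)$ and $g$ lie in $\vs(S)$, $g\neq A(S)$, and $A(S)$ is the unique minimizer. Hence $\psi(A(S))<\psi(g)$. So $\prec_A$ is contained in the strict order $\{(h,g):\psi(h)<\psi(g)\}$. A directed cycle $h_1\prec_A h_2\prec_A\cdots\prec_A h_k\prec_A h_1$ would give $\psi(h_1)<\psi(h_1)$, which is impossible. Therefore $\prec_A$ is acyclic.

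\emph{Sufficiency.} Assume $\prec_A$ is acyclic, and let $\preceq^{+}$ be its transitive closure. This is a strict partial order, irreflexive precisely because there are no cycles. The main step is to embed this countable strict partial order into $(\R_{\ge0},<)$ injectively and order-preservingly. I would do this with a standard Szpilrajn-type argument for countable sets:
\begin{enumerate}
\item Enumerate $\Hc=\{h_1,h_2,\ldots\}$.
\item Build a linear extension $<_L$ inductively, inserting each $h_n$ into the finite chain built so far at a position consistent with $\preceq^{+}$ restricted to $\{h_1,\ldots,h_n\}$. Such a position exists because the elements of the chain that must precede $h_n$ form a down-set below those that must follow it.
\item Simultaneously assign real values in $(0,1)$ by the dyadic/Cantor back-and-forth method. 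Put $h_n$ at the midpoint of the open interval between its current $<_L$-predecessor and successor (using $0$ and $1$ as endpoints).
\end{enumerate}
This yields an injective $\psi$ with $h\preceq^{+}g\Rightarrow\psi(h)<\psi(g)$.

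It remains to check that $\psi$ represents $A$. Fix a realizable $S$. Consistency of $A$ gives $A(S)\in\vs(S)$, and every other $g\in\vs(S)$ satisfies $A(S)\prec_A g$, hence $\psi(A(S))<\psi(g)$. Thus the minimum of $\psi$ on $\vs(S)$ is attained, and $A(S)$ is its unique minimizer. Note that we never need minima to exist on arbitrary subsets, only on version spaces, where $A(S)$ itself witnesses attainment.

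The main obstacle, though a mild one, is the embedding step. It needs countability, since an uncountable order can fail to embed in $\R$, and this is why the hypothesis is there. The second thing to watch is that injectivity and strictness survive the induction: each insertion must land strictly between neighbours so that earlier values never need to change. Choosing midpoints handles this.
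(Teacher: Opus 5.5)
Your proof is correct and follows the paper's argument. Necessity holds because each revealed edge forces a strict inequality in $\psi$, so a directed cycle would give $\psi(h)<\psi(h)$. Sufficiency extends the transitive closure to a total order, embeds it order-preservingly and injectively into a bounded interval, and observes that $A(S)$ is then the unique $\psi$-minimum of $\vs(S)$. The only difference is presentational: the paper cites the linear-extension theorem and the embedding of countable linear orders into $\Q$, then composes with $t\mapsto\frac12+\frac1\pi\arctan t$, whereas your midpoint insertion carries out both classical steps in one induction and lands directly in $(0,1)$.
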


\begin{proof}
If $A$ is induced by an injective regularizer $\psi$, every revealed edge
$h\prec_A g$ satisfies $\psi(h)<\psi(g)$, so a directed cycle is impossible.
Conversely, suppose $\prec_A$ is acyclic.  Its transitive closure is a strict
partial order, which may be extended to a strict total order on the countable
set $\Hc$.  Every countable total order embeds into $\Q$; let
$q:\Hc\to\Q$ be an injective order-preserving embedding.  Composing with the
strictly increasing map
\[
    t\longmapsto \frac12+\frac1\pi\arctan(t)
\]
gives an injective regularizer $\psi:\Hc\to(0,1)$.  For every sample $S$,
the relation places $A(S)$ strictly below every other member of $\vs(S)$.
Hence $A(S)$ is the unique $\psi$-minimum of the version space.
\end{proof}

The same result has a choice-theoretic formulation.  Recall that a choice function $C$ on nonempty finite menus is \emph{contraction consistent} if $C(B)\in M\subseteq B \Longrightarrow C(M)=C(B)$.
We say that $C$ extends the learner $A$ if, for every realized sample $S$ and every finite menu $M$ satisfying $A(S)\in M\subseteq\vs(S)$,
one has $C(M)=A(S)$.

\begin{proposition}[Finite-menu extension]
\label{prop:menu-extension}
A consistent learner on a countable class is representable by hard SRM if and only if its choices admit a contraction-consistent extension to all nonempty finite menus.
\end{proposition}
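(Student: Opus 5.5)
We prove the two directions separately, and route both through \Cref{thm:hard-integrability}. Here ``representable by hard SRM'' is read as in that theorem: there is a regularizer $\psi$ such that $A(S)$ is the unique $\psi$-minimizer of $\vs(S)$ for every realized sample $S$. Under the universal tie-breaking convention this forces the minimizer to be unique on version spaces, so $\psi$ may be taken injective without loss of generality. The plan is to show that a contraction-consistent extension $C$ exists if and only if $\prec_A$ is acyclic.

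\emph{Representable $\Rightarrow$ extension exists.} Given an injective regularizer $\psi$ inducing $A$, define $C(M):=\argmin_{h\in M}\psi(h)$ for every nonempty finite menu $M$. This is well defined because $M$ is finite and $\psi$ is injective. Contraction consistency is immediate: if $C(B)\in M\subseteq B$, then the minimizer of $\psi$ over $B$ lies in $M$, so it is also the minimizer over $M$. To see that $C$ extends $A$, take $A(S)\in M\subseteq\vs(S)$. Since $A(S)$ is the unique $\psi$-minimum of $\vs(S)$, it is also the minimum of the smaller set $M$.

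\emph{Extension exists $\Rightarrow$ representable.} Suppose $C$ is a contraction-consistent extension. It suffices to show that $\prec_A$ is acyclic and then invoke \Cref{thm:hard-integrability}. Every revealed edge $h\prec_A g$ comes from a sample $S$ with $h=A(S)$ and $g\in\vs(S)\setminus\{h\}$. Taking the menu $M=\{h,g\}\subseteq\vs(S)$, the extension property gives $C(\{h,g\})=h$.

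Now suppose, for contradiction, that there is a cycle $h_1\prec_A h_2\prec_A\cdots\prec_A h_k\prec_A h_1$ with $k\ge2$, where we may take the $h_i$ distinct. Let $B=\{h_1,\dots,h_k\}$, which is finite and nonempty, and set $c:=C(B)=h_j$. The cycle contains an edge $h_{j-1}\prec_A h_j$ (indices mod $k$), so $C(\{h_{j-1},h_j\})=h_{j-1}$. On the other hand, $c\in\{h_{j-1},h_j\}\subseteq B$, so contraction consistency gives $C(\{h_{j-1},h_j\})=C(B)=h_j$. This contradicts the previous sentence, since $h_{j-1}\neq h_j$.

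The delicate point is the well-definedness of the binary choices $C(\{h,g\})$. These are forced by the extension property directly: the menu $\{h,g\}$ sits inside $\vs(S)$ and contains $A(S)=h$, so no compatibility argument between different samples is needed. Once acyclicity is established, \Cref{thm:hard-integrability} supplies the injective regularizer, using countability of $\Hc$.
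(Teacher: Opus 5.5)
Your proof is correct and follows the paper's argument. The forward direction takes the $\psi$-minimum of each finite menu, and the converse reads off $C(\{h,g\})=h$ from each revealed edge and applies contraction consistency on a finite menu to reach \Cref{thm:hard-integrability}. The only difference is minor: you rule out a cycle of $\prec_A$ of any length in one step, using the menu formed by the whole cycle. The paper instead shows that the full pairwise-choice tournament on $\Hc$ has no directed three-cycles and is therefore a strict total order, so your route skips the tournament-transitivity step.
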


\begin{proof}
If $A$ is induced by an injective regularizer, choose from each finite menu
its unique minimum.  This choice function is contraction consistent and
extends $A$.
Conversely, suppose $C$ is such an extension.  Define $h\prec g$ when
$C(\{h,g\})=h$.  This relation is a tournament.  It has no directed
three-cycle: if $h\prec g\prec k\prec h$, then the choice from
$\{h,g,k\}$ would contradict contraction consistency after restriction to
one of the three pairs.  A tournament without directed three-cycles is
transitive, and hence defines a strict total order.
For every realized sample $S$ and competitor
$g\in\vs(S)\setminus\{A(S)\}$, compatibility on the finite menu
$\{A(S),g\}$ gives $A(S)\prec g$.  Thus $A(S)$ is the unique order-minimum
of its version space.  Apply \Cref{thm:hard-integrability}.
\end{proof}

The previous results concern hard SRM learners, which select a most-favored hypotheses from the version space $\vs(S)$ of interpolating hypotheses for $S$. We now consider the more general weighted SRM learners, that minimized the sum of empirical risk and (weighted) regularization value. 
For finite hypothesis sets and finite collections of samples, we obtain an exact characterization: the learner's prescribed choices admit a weighted-SRM representation precisely when every directed cycle in a naturally associated weighted preference graph has positive total weight. We suspect that it should be possible to extend this cycle criterion to infinite systems, by invoking certain compactness assumptions, but that remains an interesting direction for future work. 

\begin{definition}
Let $H$ be a finite hypothesis class, let $A$ be a deterministic $H$-valued learner, and let $\lambda(S)>0$ be fixed for every sample $S$.  For a finite family of samples $\Scal$, the \edefn{weighted revealed-preference multigraph} $G_{A,\lambda}(\Scal)$ has vertex set $H$, and for every $S\in\Scal$ and every competitor $g\in H\setminus\{A(S)\}$, it contains an edge $g\longrightarrow A(S)$
of weight
\[
    w_{A,\lambda}(S,g)
    :=
    \frac{\emp_S(g)-\emp_S(A(S))}{\lambda(S)}.
\]
Different samples may generate parallel edges between the same pair of hypotheses.
\end{definition}

\begin{theorem}[Strict cycle characterization]
\label{thm:weighted-integrability}
Fix a finite hypothesis set $H$, a deterministic $H$-valued learner $A$, positive coefficients $\lambda(S)$, and a finite family of samples $\Scal$.  There exists a regularizer $\psi:H\to\R_{\ge0}$ for which $A(S)$ is the unique minimizer of
\[
    h\longmapsto\emp_S(h)+\lambda(S)\psi(h)
\]
for every $S\in\Scal$ if and only if every directed cycle in $G_{A,\lambda}(\Scal)$ has strictly positive total weight.
\end{theorem}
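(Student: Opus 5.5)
The plan is to recast the requirement as a system of strict difference constraints and then apply the standard potential-function (shortest-path) argument. For each $S\in\Scal$ and competitor $g\in H\setminus\{A(S)\}$, the requirement that $A(S)$ be the unique minimizer is the strict inequality
\[
\emp_S(A(S))+\lambda(S)\psi(A(S))<\emp_S(g)+\lambda(S)\psi(g).
\]
Since $\lambda(S)>0$, this is equivalent to $\psi(A(S))-\psi(g)<w_{A,\lambda}(S,g)$. So each edge $g\to A(S)$ of $G_{A,\lambda}(\Scal)$ with weight $w$ encodes a constraint $\psi(\text{head})-\psi(\text{tail})<w$. The theorem then reduces to the classical fact that a finite system of strict difference constraints is feasible iff every directed cycle has positive total weight. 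The only remaining point is the nonnegativity of $\psi$, which I handle by shifting.

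\textbf{Necessity.} Suppose $\psi$ exists and take a directed cycle $h_0\to h_1\to\cdots\to h_k=h_0$ whose edges come from samples $S_1,\dots,S_k$. Each edge gives $\psi(h_i)-\psi(h_{i-1})<w_i$. Summing around the cycle telescopes the left side to $0$, so $0<\sum_i w_i$.

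\textbf{Sufficiency.} Assume every directed cycle has strictly positive weight. Because $H$ and $\Scal$ are finite, the graph has finitely many edges, hence finitely many simple cycles. Let $\eta>0$ be small enough that $\eta$ times the length of every simple cycle is below that cycle's weight, and define $w'_e:=w_e-\eta$. Every simple cycle, and hence every cycle (since each decomposes into simple cycles), still has positive and in particular nonnegative weight under $w'$. So the graph has no negative cycles under $w'$.

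Add a source vertex $s$ with zero-weight edges to every $h\in H$, and let $\phi(h)$ be the shortest-path distance from $s$ under $w'$. This is well defined and finite because there are no negative cycles and the graph is finite. The triangle inequality gives $\phi(\text{head})\le\phi(\text{tail})+w'_e$, hence $\phi(\text{head})-\phi(\text{tail})<w_e$ strictly. Finally set $\psi:=\phi-\min_H\phi\ge0$. Shifting by a constant preserves every difference constraint, so each competitor strictly loses on each $S\in\Scal$, and $A(S)$ is the unique minimizer.

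The main obstacle is the strictness. Bellman–Ford-type arguments handle non-strict constraints directly, and the $\eta$-slack step is where finiteness of both $H$ and $\Scal$ is essential: it guarantees a uniform positive gap across the finitely many simple cycles. I would check that step carefully, in particular that parallel edges and $2$-cycles (between the same pair from different samples) are covered, since they are just simple cycles of length $2$. Edges whose competitor equals $A(S)$ are excluded by definition, so there are no self-loops to worry about.
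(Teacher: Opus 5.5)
Your proposal is correct and follows essentially the same route as the paper. Necessity is by telescoping around a cycle. Sufficiency subtracts a uniform slack $\eta$ below the minimum cycle mean (your condition on simple cycles amounts to the same thing), applies feasibility of difference constraints once there are no negative cycles, and shifts by a constant to make $\psi$ nonnegative; the only difference is that you prove the feasibility step explicitly via shortest-path distances from an added source, which the paper simply cites as classical.
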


\begin{proof}
For a sample $S$ and competitor $g\neq A(S)$, strict preference for $A(S)$ is equivalent to
\[
    \psi(A(S))-\psi(g)
    <
    w_{A,\lambda}(S,g).
\]
Summing these inequalities around a directed cycle makes the potential terms telescope, so every cycle must have positive total weight.
Conversely, suppose every directed cycle has positive total weight.  Since the multigraph is finite, the minimum mean weight of a directed cycle is some $\mu>0$; if the graph is acyclic, choose any $\mu>0$.  Fix $0<\eta<\mu$ and subtract $\eta$ from every edge weight.  The adjusted graph has no negative directed cycle, so the classical feasibility theorem for difference constraints supplies a potential satisfying
\[
    \psi(v)-\psi(u)
    \le
    w(u,v)-\eta
\]
on every edge $u\to v$.  These inequalities are strict for the original weights, and hence make $A(S)$ the unique weighted-SRM minimizer for every $S\in\Scal$.  Adding a common constant makes $\psi$ nonnegative.
\end{proof}

\begin{remark}
One may weaken \Cref{thm:weighted-integrability} by requiring only that the learner's prescribed output $A(S)$ belong to the weighted-SRM argmin, rather than be its unique member.  The strict difference constraints then become weak, and the corresponding characterization requires every directed cycle to have nonnegative total weight.  This \emph{weak representation} is insufficient under our universal-selector convention: the argmin may contain additional hypotheses with different predictions, so the objective itself need not define a successful learner.  One could impose a separate, fixed rule for resolving ties, but this introduces an additional choice mechanism not encoded by the weighted objective, and we do not pursue that variant here.  Notice also that injectivity of $\psi$ does not preclude objective ties, since an empirical-loss difference may exactly cancel a regularizer gap.
\end{remark}

\section{Conclusion}\label{Section:conclusion}

Our results rule out several of the most natural algorithmic principles for multiclass learning. At the broadest level, learning does not reduce to proper learning: some learnable classes cannot be embedded into any properly learnable envelope.  Even when proper learning is possible, interpolation may fail, and the number of training errors required by a successful proper learner can occupy any prescribed sublinear scale. Furthermore, allowing a fixed notion of hypothesis complexity fails to repair the situation in general --- weighted SRM can fail on properly learnable classes, and local regularization can fail for improperly learnable classes.
On the positive side, we demonstrate that regularization succeeds whenever non-trivial version spaces are sufficiently simple, and at the learner level we provide exact characterizations of SRM using revealed preferences. 
Of course, our results leave open the possibility of some other algorithmic principle for multiclass learning that we have yet to consider. 
Identifying such principles---tractable learning rules beyond properness and regularization---is perhaps the most compelling direction left open by this work.

\subsection*{Acknowledgments}
\noindent 
Julian Asilis was supported by the National Science Foundation Graduate Research Fellowship under Grant No.\ DGE-1842487, and by a National Science Foundation award CCF-2239265.
Shaddin Dughmi was supported by NSF Grant CCF-2432219, and completed part of this work while on sabbatical as the Carter and
Tania Neild visiting professor at Northwestern University, as well as a visiting professor in the Data Science Institute at the University of Chicago. Vatsal Sharan was supported by National Science Foundation award CCF-2239265, an
Amazon Research Award, a Google Research Scholar Award, and an Okawa Foundation Research Grant. 
Shang-Hua Teng was supported in part by NSF Grant CCF-2308744. Alec Sun was supported by the National Science Foundation Graduate Research Fellowship.

\subsection*{AI Disclosure}

\noindent ChatGPT 5.6 Pro played a significant role in the development of all constructions in the paper. In particular, it first designed the poisoned first Cantor class of \cref{thm:no-erm} after the fundamental ideas of the construction were described by the (human) authors, i.e., augmenting the classic first Cantor class with a small collection of nearly constant $\star$ hypotheses.
Subsequent interactions used this construction to uncover the tight empirical-error scale of \cref{thm:empirical-upper,thm:empirical-lower}, and then sought a different obstruction capable of defeating weighted SRM. After several rounds of investigating possible constructions, with little substantive feedback, GPT 5.6 Pro discovered the projective-plane incidence class of \cref{thm:weighted-srm-fails}. In the same session, it was then instructed to attempt to solve the local regularization open problem of \citet{asilis-open-problem}, and derived the tripartite construction. The authors then discovered that even in a fresh conversation, GPT Pro was able to resolve the open problem. It also 
assisted in finding and formalizing the $2^d$-ary tree construction used in the proof of \cref{thm:no-proper-completion}, and in the positive results of \cref{Section:Structural-Conditions-and-Integrability}. Finally, the model was used to draft and edit portions of the manuscript. The authors take full responsibility for all content in the paper.

\noindent  

\clearpage  
\bibliographystyle{plainnat}
\bibliography{refs.bib}

\clearpage 
\appendix
\crefalias{section}{appendix}

\section{Auxiliary Probabilistic and Structural Lemmas}
\label{app:tools}

We collect several probabilistic and structural tools used by the constructions
in the main text.  We conclude with a weighted extension of the localization
principle developed in \Cref{subsec:fallback-cores}.

\subsection{Selecting a light marker}

The proper learner for the poisoned first-Cantor class chooses a marker of minimum empirical frequency.  The following lemma shows that this simple rule is distribution-free: among sufficiently many distinguished atoms, an empirically least frequent one cannot carry substantial population mass.

\begin{lemma}[Least-observed marker]
\label{lem:least-observed-marker}
Let $p_1,\ldots,p_N$ be the masses of $N$ distinguished atoms under a probability distribution, and let $\widehat p_i$ denote their empirical frequencies in an iid sample of size $m$.  Choose $\widehat i\in\argmin_{i\in[N]}\widehat p_i$.
There is a universal constant $C>0$ such that, for every $\eta,\delta\in(0,1)$, if
\[
    N\ge \frac{8}{\eta}
    \qquad\text{and}\qquad
    m\eta\ge C\log\frac{1}{\eta\delta},
\]
then $\P[p_{\widehat i}>\eta]\le\delta$.
\end{lemma}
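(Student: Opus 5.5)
The plan is a pigeonhole step followed by a two-sided concentration argument on the empirical frequencies. Since $\sum_i p_i\le 1$ and $N\ge 8/\eta$, at most $1/\eta\le N/8$ atoms can have mass exceeding $\eta$. In fact at most $2/\eta\le N/4$ atoms have mass exceeding $\eta/2$. So the set $L:=\{i: p_i\le \eta/4\}$ is nonempty; indeed $|L|\ge N-4/\eta\ge N/2$. Let $H:=\{i:p_i>\eta\}$ be the heavy atoms. The event $p_{\widehat i}>\eta$ requires that every heavy atom's empirical frequency be at most the minimum over light atoms. So it suffices to show that, with probability at least $1-\delta$:
\begin{itemize}
\item every $i\in H$ has $\widehat p_i>\eta/2$;
\item some $j\in L$ has $\widehat p_j\le \eta/2$.
\end{itemize}

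For the first part, I would use a multiplicative Chernoff bound. For each $i\in H$, $\P[\widehat p_i\le p_i/2]\le \exp(-m p_i/8)\le \exp(-m\eta/8)$. A union bound over $|H|\le 1/\eta$ atoms gives failure probability at most $\eta^{-1}e^{-m\eta/8}$. This is at most $\delta/2$ once $m\eta\ge C\log\frac{1}{\eta\delta}$ with $C$ a suitable constant, say $C\ge 16$.

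For the second part, it is enough to exhibit a single light atom. Pick any $j\in L$. Its count is binomial with mean at most $m\eta/4$. By the upper-tail Chernoff bound, $\P[\widehat p_j>\eta/2]\le \P[\mathrm{Bin}(m,\eta/4)\ge m\eta/2]\le \exp(-m\eta/12)$. Here I use monotonicity in $p_j$: I compare with $p=\eta/4$, since for smaller $p_j$ the tail probability only shrinks. This is again at most $\delta/2$ under the same hypothesis. Combining the two parts yields $\P[p_{\widehat i}>\eta]\le\delta$. The argument is uniform in ties because any minimizer $\widehat i$ has $\widehat p_{\widehat i}\le \widehat p_j\le\eta/2<\widehat p_i$ for all $i\in H$, hence $\widehat i\notin H$.

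The main obstacle I anticipate is only bookkeeping: making sure the Chernoff constants and the union bound over at most $1/\eta$ heavy atoms fit inside a single universal $C$. In fact the proof does not truly need $N\ge 8/\eta$ beyond guaranteeing that $L$ is nonempty. Still, I would keep the stated hypothesis, since it cleanly ensures $L\neq\varnothing$. I would also remark that the hypothesis $m\eta\ge C\log(1/(\eta\delta))$ dominates both exponential terms.
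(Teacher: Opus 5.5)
Your route is the paper's route: pigeonhole gives a light atom whose empirical frequency concentrates below $\eta/2$, a lower-tail Chernoff bound handles each of the fewer than $1/\eta$ heavy atoms, a union bound combines them, and the tie-robust conclusion is the same. Everything up to the final inequality is fine. The step that fails is the closing bookkeeping, and enlarging $C$ does not repair it.

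You need $m\eta\ge 8\log\frac{2}{\eta\delta}$ and $m\eta\ge 12\log\frac{2}{\delta}$. The hypothesis only supplies $m\eta\ge C\log\frac{1}{\eta\delta}$, whose right-hand side tends to $0$ as $\eta\delta\to1$ for every fixed $C$. So the additive $\log 2$ created by splitting $\delta$ into halves cannot be absorbed in that corner. Concretely, take $C=16$ and $\eta=\delta=0.9$. Then $m=4$ is admissible, since $m\eta=3.6\ge 16\log(1/0.81)\approx 3.37$. Yet $\eta^{-1}e^{-m\eta/8}\approx0.71$ and $e^{-m\eta/12}\approx0.74$ both exceed $\delta/2=0.45$, so your two bounds sum to more than $1$. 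Away from this corner your argument works: if $\eta\delta\le1/2$, then $C=24$ suffices for both terms.

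The paper's proof has the same defect. Its failure bound $(1+1/\eta)e^{-cm\eta}$ requires $cm\eta\ge\log(1+\eta)+\log\frac{1}{\eta\delta}$, and the $\log(1+\eta)$ term cannot be dominated by $C\log\frac{1}{\eta\delta}$ near $\eta\delta=1$.

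The lemma is nevertheless true as stated, and the clean repair is to drop the light-atom concentration step entirely.

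1. Since the atoms are distinct points, $\sum_i\widehat p_i\le1$. So deterministically $\widehat p_{\widehat i}=\min_i\widehat p_i\le 1/N\le\eta/8$. This is where $N\ge 8/\eta$ is genuinely used.
2. Hence $p_{\widehat i}>\eta$ forces some heavy atom $i$ to satisfy $\widehat p_i\le\eta/8<p_i/8$.
3. By the lower tail $\P[X\le(1-\gamma)\mu]\le e^{-\gamma^2\mu/2}$ with $\gamma=7/8$, this event has probability at most $e^{-49mp_i/128}\le e^{-49m\eta/128}$.
4. A union bound over the fewer than $1/\eta$ heavy atoms gives $\P[p_{\widehat i}>\eta]<\eta^{-1}e^{-49m\eta/128}\le\delta$ whenever $m\eta\ge\frac{128}{49}\log\frac{1}{\eta\delta}$.

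This holds uniformly over all $\eta,\delta\in(0,1)$. So $C=128/49$ works, with no splitting of $\delta$ and no additive constant to absorb.
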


\begin{proof}
Choose $i_0$ with $p_{i_0}\le 1/N\le\eta/8$.  A Chernoff bound gives
$
    \P[\widehat p_{i_0}>\eta/2]
    \le e^{-c m\eta}.
$
On the other hand, for every $i$ with $p_i>\eta$,
$
    \P[\widehat p_i\le\eta/2]
    \le e^{-c m\eta}.
$
There are at most $1/\eta$ such heavy atoms.  Except on an event of probability at most $(1+1/\eta)e^{-c m\eta}$, the empirical minimum is therefore attained only by atoms of mass at most $\eta$.  Increasing the universal constant $C$ proves the claim.
\end{proof}

\subsection{A hidden-halfset lemma}

The lower bounds for the poisoned constructions use the same posterior-symmetry principle: after observing only $o(K)$ samples from a uniformly random $K$-subset of a set of size roughly $2K$, no procedure can cover much more than half of the hidden set.

\begin{lemma}[Hidden halfset]
\label{lem:hidden-halfset}
Let $K\to\infty$, let $N=o(K)$, and let $C$ be a set of size $2K+N$.  Choose $T\subseteq C$ uniformly among the $K$-subsets.  A possibly randomized procedure observes at most $m=o(K)$ iid draws from the uniform distribution on $T$ and outputs a set $B\subseteq C$ with $|B|\le K+N$.  Then
\[
    \P\brac{|T\setminus B|\ge K/5}=1-o(1),
\]
uniformly over the procedure.
\end{lemma}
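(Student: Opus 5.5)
We condition on everything the procedure sees and then exploit posterior symmetry. Let $R\subseteq T$ be the set of distinct observed points, so $|R|\le m=o(K)$. Conditional on the observed sample sequence, on $R$, and on the procedure's internal randomness, the posterior law of $T$ is uniform over all $K$-subsets of $C$ that contain $R$. This holds because the likelihood of an observed sequence of $m$ draws is $K^{-m}$ for every $K$-subset containing the observed points, and $0$ otherwise. The output $B$ is a measurable function of the observations and the independent randomness, so it is fixed under this conditioning. Consequently $T\setminus R$ is a uniformly random $(K-|R|)$-subset of $C\setminus R$, which has size $2K+N-|R|=(2+o(1))K$.

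Next we reduce to a hypergeometric count. Since $|B\setminus R|\le K+N$, we have
\[
  |(T\setminus R)\cap B| \sim \mathrm{Hypergeometric}\bigl(2K+N-|R|,\; |B\setminus R|,\; K-|R|\bigr),
\]
whose mean is at most
\[
  \frac{(K+N)(K-|R|)}{2K+N-|R|}=\Bigl(\tfrac12+o(1)\Bigr)K .
\]
The hypergeometric law obeys Hoeffding-type tail bounds, at least as strong as those for sampling with replacement. Taking deviation $t=K/20$ therefore gives
\[
  \P\bigl[|(T\setminus R)\cap B|\ge (\tfrac12+o(1))K+K/20\bigr]\le e^{-2t^2/(K-|R|)}=e^{-\Omega(K)} .
\]
On the complementary event,
\[
  |T\setminus B|\ge |T\setminus R|-|(T\setminus R)\cap B|\ge K-o(K)-\bigl(\tfrac12+\tfrac1{20}+o(1)\bigr)K\ge K/5
\]
for large $K$. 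The bound is uniform over the conditioning, so averaging removes it.

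The main point needing care is uniformity over the procedure. Every $o(\cdot)$ term depends only on $N/K$ and $m/K$, never on the procedure, and the concentration bound holds for every fixed $B$ with $|B|\le K+N$. Randomized procedures are covered because their randomness is included in the conditioning. If $|B|<K+N$, we may pad $B$ arbitrarily, since enlarging $B$ can only decrease $|T\setminus B|$, so it suffices to handle $|B\setminus R|\le K+N$ as above.
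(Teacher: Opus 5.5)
Your proof is correct and follows essentially the same route as the paper. Both condition on the observed sequence and the internal randomness to get a uniform posterior over $K$-subsets containing $R$, then apply a hypergeometric tail bound to $|(T\setminus R)\cap B|$ using the same mean bound $(K-|R|)(K+N)/(2K+N-|R|)$. Your closing padding remark is unnecessary, since the argument only ever uses $|B\setminus R|\le K+N$.
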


\begin{proof}
Condition on the complete observed sequence, its set of distinct values $R\subseteq T$, and the procedure's internal randomness.  Every $K$-subset of $C$ containing $R$ assigns the same likelihood to the observed sequence.  Hence the posterior law of $T$ is uniform over these sets.  Write $T=R\cup U$, where $U$ is a uniformly random $(K-|R|)$-subset of $C\setminus R$.

Conditional on $R$ and $B$, the random variable $|U\cap(B\setminus R)|$ is hypergeometric.  Since $|R|\le m=o(K)$ and $|B|\le K+N=(1+o(1))K$, its mean is at most
\[
    (K-|R|)\frac{K+N}{2K+N-|R|}
    =
    \left(\frac12+o(1)\right)K.
\]
A hypergeometric Chernoff bound therefore gives
\[
    \P[|T\cap B|>4K/5\mid R,B]
    \le e^{-cK}
\]
for all sufficiently large $K$, uniformly in $R$ and $B$.  Since $|T\setminus B|=K-|T\cap B|$, the conclusion follows.
\end{proof}

\subsection{Projective-plane incidence estimates}

\begin{lemma}[Projective-plane mixing]
\label{lem:projective-plane-mixing}
Let $M$ be the point-line incidence matrix of a projective plane of order $q$, and write $N=q^2+q+1$ and $d=q+1$.  Then
\[
    MM^\top=qI+J.
\]
Consequently, the singular values of $M$ are $d$ and $\sqrt q$, and for every set $A$ of points and set $B$ of lines,
\[
    \left|
        e(A,B)-\frac{d}{N}|A||B|
    \right|
    \le
    \sqrt{q|A||B|}.
\]
\end{lemma}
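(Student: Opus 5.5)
The plan is to derive everything from the identity $MM^\top=qI+J$ and then apply the standard expander mixing argument with a rank-one correction.

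\textbf{The identity.} Entry $(p,p')$ of $MM^\top$ counts the lines containing both points $p$ and $p'$.
\begin{itemize}
\item For $p=p'$ this count is $d=q+1$.
\item For $p\neq p'$ it is $1$, since two distinct points lie on a unique common line.
\end{itemize}
Hence $MM^\top=qI+J$.

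\textbf{The spectrum.} The all-ones vector $\one$ is an eigenvector of $MM^\top$ with eigenvalue $q+N=q+q^2+q+1=(q+1)^2=d^2$. Every vector orthogonal to $\one$ is killed by $J$, so it is an eigenvector with eigenvalue $q$. The singular values of $M$ are the square roots of these eigenvalues: $d$ once, and $\sqrt q$ with multiplicity $N-1$.

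Each column of $M$ also sums to $d$, because each line has $d$ points. So $M\one=M^\top\one=d\one$, and the top left and right singular vectors are both $\one/\sqrt N$.

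\textbf{The mixing estimate.} Write $e(A,B)=\one_A^\top M\one_B$. Decompose $M=\frac dN J+M'$, where $M'=M-\frac dN J$.
\begin{itemize}
\item Since $M\one=M^\top\one=d\one$, we get $M'\one=0$ and $M'^\top\one=0$.
\item Then $M'M'^\top=MM^\top-\frac{d^2}{N}J=qI+J-\frac{d^2}{N}J=qI+\frac{N-d^2}{N}J$.
\item Because $d^2=N+q$, this equals $q\bigl(I-\frac1N J\bigr)$, the projection onto $\one^\perp$ scaled by $q$. Therefore $\|M'\|_{\mathrm{op}}=\sqrt q$.
\end{itemize}

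Now $\one_A^\top\frac dN J\one_B=\frac dN|A||B|$, so
\[
\Bigl|e(A,B)-\tfrac dN|A||B|\Bigr|=|\one_A^\top M'\one_B|\le\sqrt q\,\|\one_A\|\,\|\one_B\|=\sqrt{q|A||B|}.
\]
This is already the stated bound. Projecting $\one_A,\one_B$ onto $\one^\perp$ would sharpen it slightly, but that is not needed.

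\textbf{Main obstacle.} The only subtle point is recognizing that the rank-one correction $\frac dN J$ exactly removes the top singular direction. This relies on $M$ being biregular with equal degrees, which makes the left and right top singular vectors coincide with $\one$, together with the arithmetic $d^2=N+q$. Everything else is routine.
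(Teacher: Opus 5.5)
Your proof is correct and follows essentially the paper's argument: the same counting for $MM^\top=qI+J$, the same eigenvector analysis of $\one$ versus $\one^\perp$, and the standard expander-mixing estimate. The only cosmetic difference is that you subtract the rank-one part $\frac dN J$ from $M$ and bound $\|M'\|_{\mathrm{op}}=\sqrt q$, whereas the paper decomposes the indicator vectors $\one_A,\one_B$ into their $\one$ and $\one^\perp$ components; the two are equivalent, and your check that $M\one=M^\top\one=d\one$ is exactly what makes either version go through.
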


\begin{proof}
Every row of $M$ contains $q+1$ ones, while two distinct rows have inner product one.  This gives $MM^\top=qI+J$.  The all-ones vector has eigenvalue $(q+1)^2$, and every vector orthogonal to it has eigenvalue $q$.  The asserted singular values follow.  Decomposing the indicator vectors of $A$ and $B$ into their all-ones and orthogonal components then yields the displayed mixing estimate.
\end{proof}

This is the estimate used in the incidence-inversion step \eqref{eq:incidence-inversion} of the proof of \Cref{thm:weighted-srm-fails}.

\subsection{A weighted approximate-interpolation principle}
\label{app:further-srm}

Fallback localization extends naturally to weighted learners that make a controlled number of training errors.  The following theorem records the corresponding comparator argument.

\begin{theorem}[Approximate-interpolation weighted SRM]
\label{thm:weighted-approx}
Let $\psi:\Hc\to\R_{\ge0}$, let $\lambda_m>0$, and suppose that the weighted objective attains its minimum on every realizable $m$-sample.  Assume that for every such sample $S$ there exists $g_S\in\Hc$ satisfying
\[
    \emp_S(g_S)\le\eta_m,
    \qquad
    \psi(g_S)\le r_m.
\]
Set
$
    R_m:=r_m+\frac{\eta_m}{\lambda_m}.
$
If
\[
    \eta_m+\lambda_m r_m\to0
    \qquad\text{and}\qquad
    \Gdim(\Hc_{R_m})\log(m+1)=o(m),
\]
where $\Hc_R:=\{h \in \Hc:\psi(h)\le R\}$, then every minimizer of
$h\mapsto\emp_S(h)+\lambda_m\psi(h)$ PAC learns $\Hc$.
\end{theorem}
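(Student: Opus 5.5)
The plan is a comparator argument modeled on the proof of \Cref{thm:ambiguity-localized}, with the sublevel radius enlarged to absorb the comparator's training error. Fix a realizable $m$-sample $S$ labeled by $h^\star$, and let $\widehat h$ be any minimizer of $F_S(h):=\emp_S(h)+\lambda_m\psi(h)$. The guaranteed comparator $g_S$ has $F_S(g_S)\le \eta_m+\lambda_m r_m$. Minimality gives two facts. First, since $\emp_S(\widehat h)\ge 0$, we get $\lambda_m\psi(\widehat h)\le \eta_m+\lambda_m r_m$, so $\psi(\widehat h)\le R_m$ and hence $\widehat h\in\Hc_{R_m}$. Second, since $\psi\ge0$, we get $\emp_S(\widehat h)\le \eta_m+\lambda_m r_m=:\beta_m\to 0$. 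So every minimizer lies in a fixed low-complexity class and has vanishing empirical error, for every tie-breaking rule.

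Next I would prove uniform convergence for empirically near-consistent hypotheses in $\Hc_{R_m}$. Let $d_m:=\Gdim(\Hc_{R_m})$. The loss class $\{x\mapsto \one\{h(x)\neq h^\star(x)\}: h\in\Hc_{R_m}\}$ consists of disagreement sets $\Delta(h,h^\star)$. Its growth function on $m$ points is bounded by the number of patterns of $\Hc_{R_m}$ in the graph sense, so by Sauer--Shelah it is at most $(m+1)^{O(d_m)}$. I would then apply the standard relative deviation (or optimistic-rate) bound: with probability at least $1-\delta$, every $h\in\Hc_{R_m}$ with $\pop_D(h,h^\star)>\epsilon$ satisfies $\emp_S(h)>\epsilon/2$, provided $m\epsilon\gtrsim d_m\log(m+1)+\log(1/\delta)$. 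Concretely, the failure probability is at most
\[
4(2m+1)^{O(d_m)}\exp(-m\epsilon/16).
\]
This comes from symmetrization with a ghost sample, together with the fact that a hypothesis with true error above $\epsilon$ has ghost error above $\epsilon/2$ with probability at least $1/2$ once $m\epsilon\ge 8$.

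Finally I would combine the two steps. For fixed $\epsilon,\delta$, the condition $\beta_m\to0$ gives $\beta_m\le\epsilon/2$ for all large $m$. The condition $d_m\log(m+1)=o(m)$ makes the deviation bound at most $\delta$ for all large $m$, uniformly over $h^\star$ and $D$. On the good event, $\widehat h\in\Hc_{R_m}$ and $\emp_S(\widehat h)\le\epsilon/2$, which forces $\pop_D(\widehat h,h^\star)\le\epsilon$.

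The main obstacle is the second step. Unlike \Cref{thm:ambiguity-localized}, the output is not consistent, so the one-sided $\epsilon$-net bound does not apply directly. A plain additive uniform convergence bound would also work here, since $\sqrt{d_m\log m/m}\to0$, but it needs care because graph dimension controls only the binary loss class. I would first try the relative-deviation route via the Sauer bound on disagreement patterns. If that is awkward, I would fall back on the additive Vapnik--Chervonenkis bound, since both require only $d_m\log(m+1)=o(m)$.
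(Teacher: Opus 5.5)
Your proposal is correct and follows essentially the same route as the paper: compare any minimizer with $g_S$ to place it in $\Hc_{R_m}$ with $\emp_S(\widehat h)\le\eta_m+\lambda_m r_m$, then apply uniform convergence on that sublevel class. The paper uses the plain additive bound $\sup_{h\in\Hc_{R_m}}|\pop_D(h,h^\star)-\emp_S(h)|=o_{\P}(1)$, which is your fallback; your relative-deviation route gives a sharper rate but is not needed. If you do pursue the relative-deviation version, the ghost-sample step needs a gap between the two thresholds, e.g.\ ghost error at least $3\epsilon/4$ against training error at most $\epsilon/2$, because with both thresholds at $\epsilon/2$ the swapping argument yields no exponential decay.
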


\begin{proof}
Let $\widehat h$ be any minimizer.  Comparison with $g_S$ gives
\[
    \emp_S(\widehat h)+\lambda_m\psi(\widehat h)
    \le
    \eta_m+\lambda_m r_m.
\]
Since both terms on the left are nonnegative,
\[
    \emp_S(\widehat h)
    \le
    \eta_m+\lambda_m r_m
    \qquad\text{and}\qquad
    \psi(\widehat h)
    \le
    r_m+\eta_m/\lambda_m
    =R_m.
\]
Thus every minimizer lies in the fixed sublevel class $\Hc_{R_m}$ and has vanishing empirical error.  Uniform convergence on that sublevel yields
\[
    \sup_{h\in\Hc_{R_m}}
    |\pop_D(h,h^\star)-\emp_S(h)|
    =o_{\P}(1)
\]
uniformly over realizable pairs.  Hence
$\pop_D(\widehat h,h^\star)\le\emp_S(\widehat h)+o_{\P}(1)\to0$.
The argument controls all minimizers simultaneously and is therefore robust to ties.
\end{proof}

\section{Omitted Proofs for Empirical Noninterpolation}
\label{app:empirical-noninterpolation}

\subsection{The tunable lower bound}
\label{app:tunable-poison}

We now prove \Cref{thm:empirical-lower}.  Fix a nonnegative sequence $a_m=o(m)$ and define
\[
    r_m
    :=
    \max\set{
        \lceil a_m\rceil,
        \left\lceil\log\log(m+e^e)\right\rceil
    }.
\]
Then $r_m\to\infty$, $r_m=o(m)$, and $r_m\ge a_m$.  Put
\[
    N_m
    :=
    \max\left\{
        1,
        \left\lfloor
            \min\set{e^{r_m/32},\sqrt{m/r_m}}
        \right\rfloor
    \right\},
    \qquad
    K_m:=m^3.
\]
These parameters satisfy
\begin{equation}
\label{eq:tunable-parameters}
    N_m\to\infty,
    \qquad
    \frac{N_mr_m}{m}\to0,
    \qquad
    N_me^{-r_m/8}\to0.
\end{equation}
Indeed, both $e^{r_m/32}$ and $\sqrt{m/r_m}$ diverge; moreover,
$N_mr_m/m\le\sqrt{r_m/m}$ and
$N_me^{-r_m/8}\le e^{-3r_m/32}$ for all sufficiently large $m$.

For each block index $j\ge1$, let $P_j=\{p_{j,1},\ldots,p_{j,N_j}\}$ be a marker set, let $C_j$ be a core of size $2K_j+N_j$, and put $X_j=P_j\sqcup C_j$.  Thus $|X_j|=2K_j+2N_j$, and every half-set has size $K_j+N_j$.  Introduce global labels $\starlabel$ and $\dollarlabel$, a private label $\lambda_{j,A}$ for every half-set $A\subseteq X_j$, and a poison label $\rho_{j,i}$ for every marker.  Include the all-$\dollarlabel$ hypothesis, every Cantor hypothesis
\[
    c_{j,A}(x)
    =
    \begin{cases}
        \starlabel, & x\in A,\\
        \lambda_{j,A}, & x\in X_j\setminus A,\\
        \dollarlabel, & x\notin X_j,
    \end{cases}
\]
and every fallback
\[
    s_{j,i}(x)
    =
    \begin{cases}
        \rho_{j,i}, & x=p_{j,i},\\
        \starlabel, & x\in X_j\setminus\{p_{j,i}\},\\
        \dollarlabel, & x\notin X_j.
    \end{cases}
\]
Let $\Hc_a$ be the resulting class.

\begin{lemma}
\label{lem:tunable-learnable}
The class $\Hc_a$ is countable, closed, pointwise finite-range, and properly PAC learnable.
\end{lemma}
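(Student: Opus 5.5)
The plan is to mirror the proof of \cref{thm:no-erm} almost verbatim, since $\Hc_a$ differs from $\Hpoison$ only in the block parameters: $|P_j|=N_j$ and $|C_j|=2K_j+N_j$ with $K_j=j^3$. Note also that here the Cantor hypothesis $c_{j,A}$ outputs $\starlabel$ on $A$ and the private label on $X_j\setminus A$, which is the same up to renaming the half-set. Three of the claims should go through exactly as before.

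\emph{Countability and pointwise finite range.} Each block $X_j$ is finite and supports finitely many hypotheses. There are countably many blocks, and every hypothesis outside the block of $x$ takes the value $\dollarlabel$ at $x$, so $|\Hc_a(x)|<\infty$.

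\emph{Closedness.} I would use the finite-projection characterization of \cref{lem:closed-fpp}. Take $f$ finitely consistent with $\Hc_a$ and not identically $\dollarlabel$. Two-point restrictions force all non-$\dollarlabel$ values of $f$ into a single block $X_j$, since no nontrivial hypothesis is supported on two blocks. Finite consistency on the finite block $X_j$ then yields some $h\in\Hc_a$ with $\restr{h}{X_j}=\restr{f}{X_j}$. Both $f$ and $h$ equal $\dollarlabel$ off $X_j$, so $f=h\in\Hc_a$.

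\emph{Proper learnability.} I would reuse the learner from \cref{thm:no-erm}.
\begin{enumerate}
\item If every observed label is $\dollarlabel$, output $h_{\dollarlabel}$.
\item If a private label or poison label appears, output the hypothesis it identifies.
\item If the sample contains only $\starlabel$ labels inside a block $X_j$ and $j\le J_m$, output any consistent member of the finite subclass supported on blocks of index at most $J_m$, together with $h_{\dollarlabel}$.
\item If the sample contains only $\starlabel$ labels inside a block $X_j$ and $j>J_m$, output $s_{j,\hat i}$ for a least-observed marker $\hat i$.
\end{enumerate}
The threshold $J_m$ must satisfy two requirements.
\begin{itemize}
\item[(a)] The log-cardinality of the small-block subclass is $O\bigl(\sum_{j\le J_m}|X_j|\bigr)=O(J_m^4+J_mN_{J_m})$. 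This must be $o(m)$ so that the finite-class realizable bound applies uniformly.
\item[(b)] For $j>J_m$, \cref{lem:least-observed-marker} applies with a fixed $\eta=\epsilon/4$. This needs $N_j\ge 8/\eta$, which holds eventually because $N_j\to\infty$ by \eqref{eq:tunable-parameters}.
\end{itemize}
The main obstacle is choosing $J_m$ to meet both requirements for arbitrary $(a_m)$. The risk is that $N_j$ grows very slowly, so the marker lemma only kicks in at large $j$. I would handle this with a diagonal choice: take $J_m\to\infty$ slowly enough that $J_m^4+J_mN_{J_m}=o(m)$, which is possible since $N_j\le\sqrt{j}$ for large $j$. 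Then for each fixed $\epsilon$, eventually every $j>J_m$ has $N_j\ge 32/\epsilon$, and the condition $m\eta\ge C\log(1/(\eta\delta))$ holds for large $m$.

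The remaining error analysis is as in \cref{thm:no-erm}. For a Cantor target, the output's disagreement lies in $(X_j\setminus A)\cup\{p_{j,\hat i}\}$. For a fallback target $s_{j,k}$, it lies in $\{p_{j,\hat i},p_{j,k}\}$. These sets are controlled by missed-mass bounds of the form $e^{-m\epsilon/4}$ and by the marker lemma. Missing the active block entirely costs at most $e^{-m\epsilon}$. The resulting bounds are uniform over targets and marginals, which gives distribution-free proper learnability.
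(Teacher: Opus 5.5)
Your proposal is correct and follows the paper's proof essentially step for step. It uses the same block-localized closedness argument, and the same learner: finite-class consistency on blocks $j\le J_m$ for a slowly growing cutoff, and the reveal-or-least-observed-marker rule beyond it, with $\inf_{j>J_m}N_j\to\infty$ feeding \cref{lem:least-observed-marker}.

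One small slip: since $(a_j)$ is arbitrary, $N_j$ need not be monotone. Bound $\sum_{j\le J_m}N_j\le J_m^{3/2}$ using $N_j\le\sqrt j$ (which holds for all $j$), rather than by $J_mN_{J_m}$. The log-cardinality is then $O(J_m^4)$ and nothing else changes.
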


\begin{proof}
Countability and pointwise finite range follow because every block is finite and supports only finitely many hypotheses.  For closedness, let $f$ be finitely consistent with $\Hc_a$.  If $f\equiv\dollarlabel$, then $f=h_{\dollarlabel}$.  Otherwise choose $x\in X_j$ with $f(x)\neq\dollarlabel$.  Two-point restrictions force $f$ to equal $\dollarlabel$ outside $X_j$, since no nonbackground hypothesis is active on two blocks.  Finite consistency on the finite block $X_j$ then supplies one hypothesis agreeing with $f$ there, and hence everywhere.

For learnability, choose cutoffs $J_s\to\infty$ sufficiently slowly that
\[
    \log|\Hc_{a,\le J_s}|=o(s)
    \qquad\text{and}\qquad
    \inf_{j>J_s}N_j\to\infty,
\]
where $\Hc_{a,\le J_s}$ contains the background hypothesis and all hypotheses supported on the first $J_s$ blocks.  Such a sequence exists because every initial class is finite and $N_j\to\infty$.

On a sample of size $s$, use finite-class consistent learning on blocks $j\le J_s$.  On a larger block, use the same reveal-or-least-observed-marker learner as in the proof of \Cref{thm:no-erm}: a private or poison label identifies the target, while an all-$\starlabel$ sample triggers the fallback associated with a least-observed marker.  The initial class is controlled by the realizable finite-class bound because its logarithmic size is $o(s)$.  Uniformly over the tail blocks, the number of markers tends to infinity, so \Cref{lem:least-observed-marker} controls the selected marker mass.  The remaining failure events are precisely that the sample misses a private-label region or a target poison point of substantial mass, each of which has exponentially small probability.  This proves distribution-free proper learnability.
\end{proof}

\begin{proof}[Proof of \Cref{thm:empirical-lower}]
Fix an arbitrary proper PAC learner $A$ for $\Hc_a$ and a sufficiently large sample size $m$.  Work in block $X_m$.  Choose $T\subseteq C_m$ uniformly among the $K_m$-subsets, put $A_T:=P_m\cup T$, and take target $c_{m,A_T}$.  Define $D_T$ by
\[
    D_T(p_{m,i})=\frac{r_m}{m}
    \qquad(i\in[N_m]),
\]
and distribute the remaining mass $1-q_m$ uniformly on $T$, where
\[
    q_m:=\frac{N_mr_m}{m}=o(1).
\]
Every point in the support is labeled $\starlabel$.

Let $E_m$ be the event that every marker appears at least $r_m/2$ times.  Each marker count has mean $r_m$, so a Chernoff bound and \eqref{eq:tunable-parameters} give
\[
    \P(E_m^c)
    \le
    N_me^{-r_m/8}
    =o(1).
\]
On $E_m$, every fallback $s_{m,i}$ supported on the active block makes at least $r_m/2$ training mistakes.

We next show that every nonfallback output has constant population error with probability $1-o(1)$ over the random choice of $T$, the sample, and the learner's internal randomness.  The background hypothesis and all hypotheses supported on another block have error one.  Suppose the output is a Cantor hypothesis $c_{m,B}$, and put $B_C:=B\cap C_m$.  Since $|B|=K_m+N_m$, one has $|B_C|\le K_m+N_m$.  Conditional on the marker observations, the learner sees at most $m=o(K_m)$ iid uniform draws from the hidden set $T$; the marker observations themselves carry no information about $T$.  Applying \Cref{lem:hidden-halfset} with $K=K_m$, $N=N_m$, and $C=C_m$ gives
\[
    |T\setminus B_C|\ge K_m/5
\]
with probability $1-o(1)$.  Therefore
\[
    \pop_{D_T}(c_{m,B},c_{m,A_T})
    \ge
    (1-q_m)\frac{|T\setminus B_C|}{K_m}
    \ge
    \frac17
\]
for all sufficiently large $m$.

Fix $\epsilon_0=\delta_0=1/20$.  Since $A$ PAC learns, for every fixed $T$ and all sufficiently large $m$,
\[
    \P_{S,A}\brac{
        \pop_{D_T}(A_m(S),c_{m,A_T})>\epsilon_0
    }
    \le\delta_0.
\]
Averaging over the random choice of $T$ and comparing with the preceding lower bound shows that the learner outputs an active-block fallback with probability at least $1-\delta_0-o(1)$ on average over $T$.  Hence there is a fixed $T$ for which this fallback probability is at least $3/4$.  For this fixed target-distribution pair, $\P(E_m)=1-o(1)$ because the marker marginal is independent of $T$.  Thus, for all sufficiently large $m$,
\[
    \P\brac{
        m\emp_S(A_m(S))\ge r_m/2
    }
    \ge\frac12.
\]
Since $r_m\ge a_m$, the theorem follows with a universal constant.
\end{proof}

\subsection{Pathological learners}
\label{app:pathological-learners}

The following examples explain why the upper bound in \Cref{thm:empirical-upper} is existential: an otherwise successful learner may behave arbitrarily on a sufficiently rare sequence of samples.

\begin{proposition}
\label{prop:full-training-error-pathology}
There is a proper PAC learner for the two constant binary hypotheses on $\N$ that misclassifies every training point on some realizable sample of every size.
\end{proposition}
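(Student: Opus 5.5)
We take $\CX=\N$ and $\CH=\{h_0,h_1\}$, where $h_b\equiv b$. The base learner $A_0$ is the obvious one. If the sample contains a label $b$, it outputs $h_b$. On the empty sample it outputs $h_0$. We then sabotage one rare sequence per sample size. For each $m\ge1$, let $\sigma_m:=(1,2,\ldots,m)$ be the ordered unlabeled sequence. If the unlabeled part of $S$ equals $\sigma_m$ and all labels equal $b$, the learner $A$ outputs $h_{1-b}$. On every other sample it agrees with $A_0$. This learner is proper by construction. On the sample $\big((1,b),\ldots,(m,b)\big)$, which is realizable by $h_b$, it outputs $h_{1-b}$ and therefore misclassifies all $m$ training points. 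This holds for every $m$.

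It remains to show that $A$ is a PAC learner. Fix a target $h_b$ and a marginal $D$ on $\N$. On a realizable sample, $A_0$ outputs exactly $h_b$ whenever $m\ge1$, so it has zero error. Hence $A$ can err only on the event that the unlabeled sample equals $\sigma_m$. The probability of this event is
\[
\prod_{i=1}^m D(i).
\]
We bound it uniformly over $D$. The numbers $D(1),\ldots,D(m)$ sum to at most $1$, so by AM--GM their product is at most $m^{-m}$. Therefore
\[
\P_{S\sim D^m}\brac{\pop_D(A(S),h_b)>\epsilon}\le m^{-m}
\]
for every $\epsilon$. Given $\delta$, we may take $n_A(\epsilon,\delta)$ to be the least $n$ with $n^{-n}\le\delta$; this works because $m^{-m}$ is decreasing for $m\ge1$. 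The resulting bound is independent of $\epsilon$, the target and $D$, so $A$ PAC learns $\CH$.

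No step here is a real obstacle. The only point that needs care is that the failure probability must be bounded uniformly over all marginals $D$, and the AM--GM estimate handles this. If one also wanted the optimal $\Theta(\epsilon^{-1}\log(1/\delta))$ rate mentioned in \cref{rem:rare-sample-pathology}, the same sabotage applies to a three-hypothesis variant. The extra failure probability $m^{-m}$ is negligible against $\delta$ for $m\gtrsim\log(1/\delta)$, so it does not change the rate. That refinement is not needed for the proposition as stated.
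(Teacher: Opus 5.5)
Your proposal is correct and matches the paper's proof: output the opposite constant on the ordered sample $(1,\ldots,m)$, and bound the failure probability uniformly over marginals by $\prod_{i=1}^m D(i)\le m^{-m}$ via AM--GM. Your handling of the empty sample and your use of the monotonicity of $m^{-m}$ to define $n_A(\epsilon,\delta)$ make explicit details the paper leaves implicit.
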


\begin{proof}
Let $h_0\equiv0$ and $h_1\equiv1$.  On an $m$-sample, if the ordered unlabeled sequence is exactly $(1,2,\ldots,m)$, output the constant hypothesis opposite to the observed label; otherwise output the correct constant hypothesis.  The exceptional sample is misclassified completely.  For every marginal $D$,
\[
    \P[(X_1,\ldots,X_m)=(1,\ldots,m)]
    =
    \prod_{i=1}^m D(i)
    \le
    m^{-m}
\]
by AM--GM.  The additional failure probability therefore vanishes superexponentially.
\end{proof}

\begin{proposition}
\label{prop:optimal-pathological-learner}
The preceding pathology can coexist with optimal realizable sample complexity $\Theta(\epsilon^{-1}\log(1/\delta))$.
\end{proposition}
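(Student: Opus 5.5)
We need a proper learner for two constant hypotheses that is pathological (on some realizable sample of every size it misclassifies every training point) yet still has sample complexity $\Theta(\epsilon^{-1}\log(1/\delta))$. The plan is a three-hypothesis variant, as promised in \Cref{rem:rare-sample-pathology}. The point of the third hypothesis is to make the base learning problem nontrivial, so that $\Theta(\epsilon^{-1}\log(1/\delta))$ is the genuine optimal rate rather than a trivially beaten bound.

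\textbf{The class.} Take domain $\N$ and three hypotheses:
\begin{itemize}
\item $h_0\equiv 0$;
\item $h_1\equiv 1$;
\item $h_2$, with $h_2(1)=1$ and $h_2(x)=0$ for $x\ge 2$.
\end{itemize}
This class contains two hypotheses, $h_0$ and $h_2$, differing on a single point. The standard lower bound applies: put mass $\epsilon$ on $1$ and the rest on $2$. Then $(1-\epsilon)^m\ge\delta$ forces $m=\Omega(\epsilon^{-1}\log(1/\delta))$ for any learner. So the lower bound is automatic.

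\textbf{The base learner.} Let $B$ output any consistent hypothesis, say the lowest-index member of the version space. Since the class is finite with $|\Hc|=3$, the finite-class realizable bound gives failure probability at most $3(1-\epsilon)^m$. Hence $B$ has sample complexity $O(\epsilon^{-1}\log(1/\delta))$.

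\textbf{The pathological learner.} Define $A$ to agree with $B$, except on samples whose ordered unlabeled sequence is exactly $(2,3,\ldots,m+1)$. On such a sample every label is the common value $c\in\{0,1\}$. There $A$ outputs $h_1$ if $c=0$ and $h_0$ if $c=1$. In either case $A$ misclassifies every training point. Both labelings are realizable: $c=0$ by $h_0$ or $h_2$, and $c=1$ by $h_1$.

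\textbf{Extra failure probability.} By AM--GM, as in \Cref{prop:full-training-error-pathology}, the exceptional sequence has probability $\prod_{i} D(i+1)\le m^{-m}$ under any marginal.

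\textbf{Absorbing it into the rate.} This is the main step. We need $m^{-m}\le\delta/2$ whenever $m\ge C\epsilon^{-1}\log(2/\delta)$. Since $m\ge\log(2/\delta)$ and $m\ge 3$ give $m\log m\ge m\ge\log(2/\delta)$, this holds. Choose the constant so the base bound also satisfies $3(1-\epsilon)^m\le\delta/2$. A union bound then yields
\[
\P\brac{\pop_D(A(S),h^\star)>\epsilon}\le 3e^{-m\epsilon}+m^{-m}\le\delta
\]
for $m\ge C\epsilon^{-1}\log(1/\delta)$ (adjusting $C$ for small $\delta$).

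Combining the upper bound with the lower bound gives sample complexity $\Theta(\epsilon^{-1}\log(1/\delta))$, as claimed.
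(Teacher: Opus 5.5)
Your proposal is correct and follows essentially the same route as the paper. The paper also uses three hypotheses (two constants plus one that differs from a constant at a single distinguished point), the finite-class upper bound, the two-point lower bound, sabotage on one ordered unlabeled sequence avoiding the distinguished point, and the AM--GM bound $m^{-m}$ absorbed into the $O(\epsilon^{-1}\log(1/\delta))$ rate; you verify that last absorption more explicitly than the paper does.

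One small correction is needed in your lower bound. Place mass $2\epsilon$ rather than $\epsilon$ on the distinguishing point, as the paper does, because an error of exactly $\epsilon$ does not violate the criterion $\pop_D > \epsilon$. You should also account for the factor $1/2$ that arises because $h_0$ and $h_2$ are indistinguishable when that point is missed.
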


\begin{proof}
Let $\CX=\{0\}\cup\N$ and
\[
    h_0\equiv0,
    \qquad
    h_1(x)=\one\{x=0\},
    \qquad
    h_2\equiv1.
\]
A finite-class learner gives the upper bound $O(\epsilon^{-1}\log(1/\delta))$.  For the matching lower bound, place mass $2\epsilon$ on $0$ and the remaining mass on $1$.  Under targets $h_0$ and $h_1$, the labeled samples are identical unless $0$ is observed.  Conditional on missing $0$, no learner can identify both targets; for at least one of them it incurs error $2\epsilon$ with conditional probability at least $1/2$.  Since $0$ is missed with probability $(1-2\epsilon)^m$, the usual $\Omega(\epsilon^{-1}\log(1/\delta))$ lower bound follows.

Now modify an optimal finite-class learner only on the ordered sample $(1,2,\ldots,m)$: if all labels are zero, output $h_2$, and if all labels are one, output $h_0$.  The exceptional probability is at most $m^{-m}$ under every marginal, so the asymptotic sample complexity is unchanged, while the emitted hypothesis misclassifies every training point on the exceptional realizable samples.
\end{proof}

\end{document}